\documentclass[11pt]{article}

\usepackage[utf8]{inputenc}
\usepackage[T1]{fontenc}
\usepackage{hyperref}
\usepackage{url}
\usepackage{booktabs}
\usepackage{amsfonts}
\usepackage{microtype}
\usepackage{xcolor}
\usepackage{framed}
\usepackage{float}

\usepackage{tikz}
\usetikzlibrary{patterns, arrows.meta}
\usepackage{listings}
\usepackage{pdflscape}
\usepackage{epsfig}
\usepackage{enumitem}

\setlist[itemize]{leftmargin=*}
\usepackage{framed}
\usepackage{lscape}
\usepackage{longtable}
\usepackage{amsmath,amssymb,amsthm}
\usepackage{graphicx}
\usepackage{subfigure}
\usepackage{wrapfig}
\usepackage{placeins}
\usepackage{cases}
\usepackage{verbatim}
\usepackage{multirow}

\usepackage{algorithm}
\usepackage[noend]{algorithmic}
\usepackage{thmtools}

\usepackage{cleveref}

\oddsidemargin
0pt
\evensidemargin
0pt
\marginparwidth
10pt
\marginparsep
10pt
\topmargin
-20pt

\textwidth
6.5in
\textheight
8.5in
\parindent
=
20pt

\definecolor{lightbluebg}{RGB}{235,245,255}
\definecolor{lightorangebg}{RGB}{245,225,210}

\newcommand{\R}{\mathbb{R}}
\newcommand{\Z}{\mathbb{Z}}

\DeclareMathOperator*{\argmin}{argmin}

\def\approxleq{ \kern3pt \mbox{\raisebox{.6ex}{$<$}} \kern-8pt
  \mbox{\raisebox{-.6ex}{$\sim$}} \kern5pt}

   \def\cT{{\cal T}}
     
\def\cQ{{\cal Q}} \def\cM{{\cal M}}

\def\Diag{\textup{Diag}}

\def\dist{{\textup{dist}}}

\newlength{\len}
\declaretheoremstyle[bodyfont=\sl]{normalbody}

\newtheorem{theorem}{Theorem}

\newtheorem{lemma}{Lemma}
\newtheorem{corollary}{Corollary}
\newtheorem{remark}{Remark}
\newtheorem{assumption}{Assumption}

\newtheorem{example}{Example}
\allowdisplaybreaks[4]

\title{Understanding Quantization-Aware Training: Gradients at Quantized Weights Bias to the Low-Loss Basin}

\author{Hanyang Li
\thanks{Department of IEOR, University of California, Berkeley (\href{mailto:hanyang_li@berkeley.edu}{hanyang\_li@berkeley.edu})}
\and Jianhao Ma
\thanks{Department of Statistics and Data Science, University of Pennsylvania (\href{mailto:jianhao@umich.edu}{jianhao@umich.edu})}
\and Ying Cui
\thanks{Department of IEOR, University of California, Berkeley (\href{mailto:yingcui@berkeley.edu}{yingcui@berkeley.edu})}
}

\date{June 7, 2026}

\begin{document}

\bibliographystyle{plain}
\maketitle

\begin{abstract}
Post-training quantization (PTQ) converts a trained full-precision model into low-bit weights without task-level retraining, while quantization-aware training (QAT) incorporates quantization into the training loop. Although PTQ is efficient and often accurate at moderate bitwidths, it can fail sharply at aggressive bitwidths; QAT is more expensive but can often recover the lost accuracy. We propose a unified geometric framework that explains both PTQ failure and QAT recovery. We model full-precision training as following a low-loss \emph{river} inside a wider \emph{valley}: a normal neighborhood of the river forms a nearly flat \emph{basin}, while leaving this basin incurs a sharp loss increase. When the quantization grid is comparable to the basin width, local PTQ objectives, including rounding and Hessian-based second-order reconstruction, can select a high-loss deployed quantized point outside the basin even when nearby low-loss quantized points exist. In this regime, straight-through-estimator-based QAT has a useful bias: it evaluates gradients at the deployed quantized weights while updating latent full-precision weights, causing the gradient to sense the valley wall and acquire an inward component that steers subsequent quantized iterates back into the basin. We formalize this mechanism through a local landscape model, construct a geometric PTQ failure mode, and prove finite-time QAT recovery under local quantizer-compatibility assumptions. Experiments across vision and language models under multiple neural-network quantization schemes corroborate the predicted basin-crossing failure of PTQ and the corresponding recovery mechanism of QAT.
\end{abstract}

\section{Introduction}

Quantization is a standard technique for reducing inference memory footprint, latency, and energy consumption of large-scale machine learning models \cite{han2016deep,sze2017efficient,jacob2018quantization}. A common deployment pipeline first trains a model in full precision and then converts it to a low-bit representation. The most economical variant is post-training quantization (PTQ), which takes a pretrained model as input and determines quantization scales and discrete weights without task-level fine-tuning, typically via local rounding rules or layer-wise reconstruction objectives \cite{nahshan2021loss,nagel2020up,frantar2023gptq}. Modern PTQ methods are highly effective at moderate bitwidths, particularly around 8 bits, making them attractive for practical deployment \cite{krishnamoorthi2018quantizing,nagel2020up,yuan2022ptq4vit}. However, PTQ often becomes fragile around the 4-bit regime, especially when both weights and activations are quantized, where even well-trained full-precision models can suffer sharp performance degradation \cite{catalan2025training,liu2025paretoq,dremov2026compute}.

A common remedy is quantization-aware training (QAT), which incorporates
quantization into the training loop.
In standard QAT based on the straight-through estimator (STE), training keeps a
latent full-precision copy of the weights, but evaluates the forward pass using
their quantized values. The gradient is therefore measured at the quantized
weights, while the update is applied to the continuous full-precision weights.
This STE update bypasses the fact that quantization is discrete and
has zero or undefined derivatives almost everywhere
\cite{bengio2013estimating,courbariaux2015binaryconnect,jacob2018quantization}.
\footnote{There are many variants of QAT beyond STE-based methods. In this
paper, we focus on the standard STE-based formulation.} Practitioners often
view PTQ and QAT as occupying different points on a cost--accuracy frontier:
PTQ is fast, calibration-efficient, and easy to apply, but can incur substantial
accuracy degradation at low precision, whereas QAT is usually more robust in
low-precision regimes but requires additional data, optimizer state,
hyperparameter tuning, and training compute. This tradeoff has motivated
practical pipelines in which PTQ is first used as a low-cost quantization
attempt; if the resulting model does not meet the target accuracy, practitioners
switch to QAT fine-tuning to recover performance under the quantized forward
pass.

Despite the practical success of PTQ--QAT pipelines, their underlying mechanism remains poorly understood. When can PTQ fail sharply even after full-precision training has found a good model? What does full-precision pretraining contribute before QAT begins? And, under the same total training budget, why should QAT fine-tuning differ from continuing full-precision training and then applying PTQ again? These questions matter for both theory and practice: if QAT merely provides additional optimization steps, then its advantage should be reproducible by equal-budget full-precision fine-tuning followed by PTQ; if instead QAT corrects quantization-specific errors that remain invisible to full-precision training, then QAT has a quantization-specific benefit: it adapts the latent full-precision weights to perform well after quantization, which PTQ alone may fail to achieve.

\begin{figure}[t]
\centering
\subfigure[River--valley--basin landscape.]{
    \includegraphics[width=0.35\linewidth]{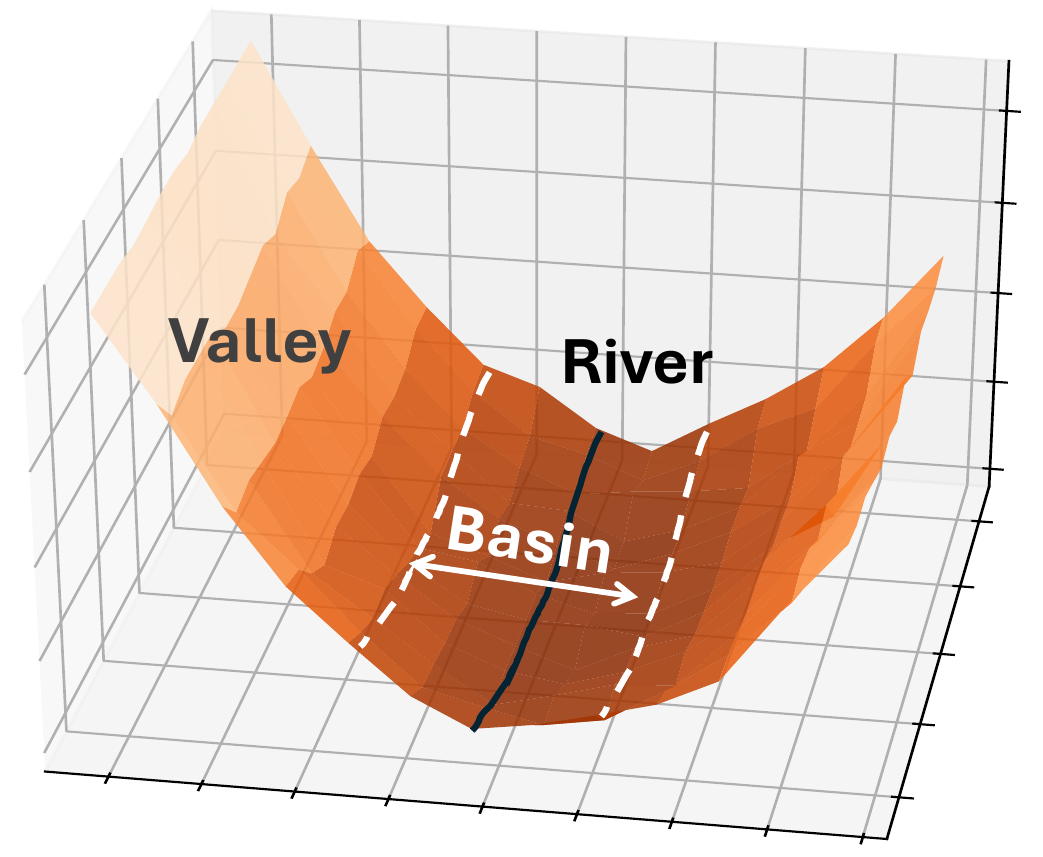}
    \label{fig:intro-river-basin}
}
\hspace{0.8cm}
\subfigure[Quantization compatibility.]{
    \includegraphics[width=0.35\linewidth]{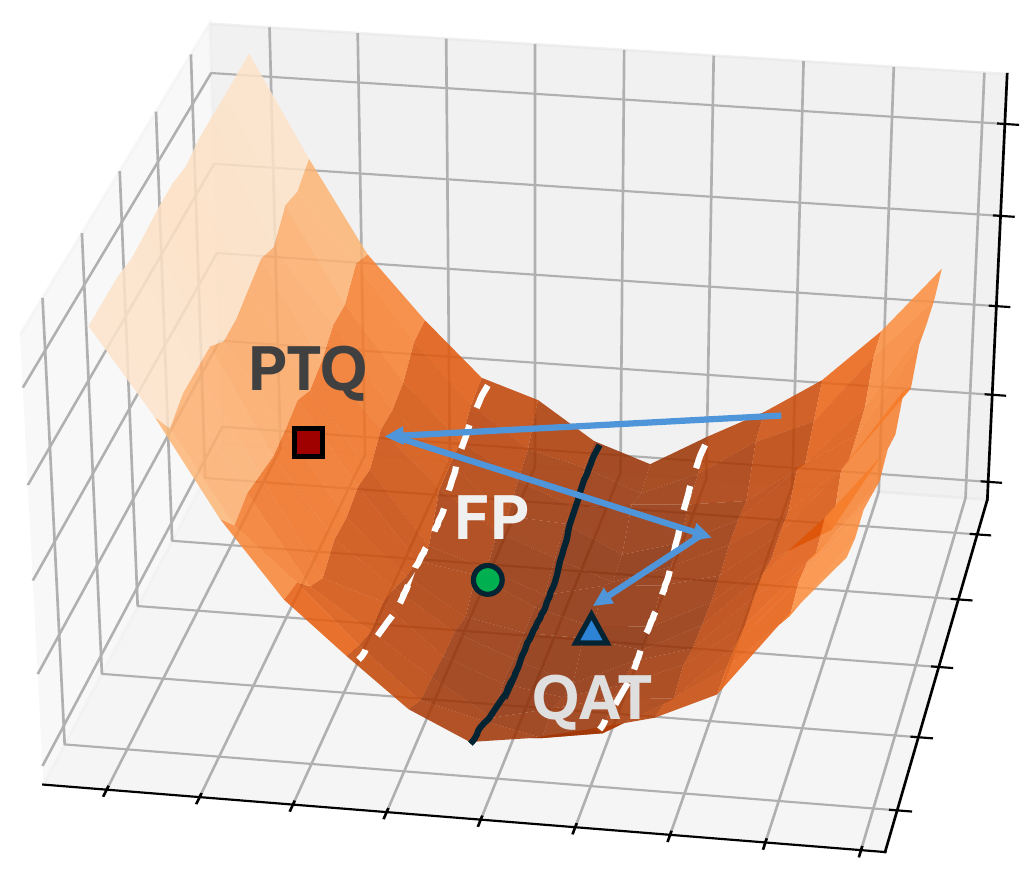}
    \label{fig:intro-ptq-qat}
}
\caption{
River--valley--basin geometry of the loss and quantization compatibility induced by the gradient bias of QAT.
(a) A loss landscape from a real 213M Llama model trained on the SlimPajama dataset, showing a low-loss river inside a broader valley and basin.
(b) Illustration of the PTQ--QAT contrast: PTQ may round the full-precision solution to a quantized point outside the low-loss basin, whereas QAT can move the quantized model back into the basin.
}
\label{fig:intro-river-landscape}
\vspace{-0.5em}
\end{figure}

We argue that these questions can be understood through \emph{quantization
compatibility}: whether the deployed low-bit weights  sit in a region of parameter space where loss remains low
relative to the pretrained full-precision point.
Our analysis is built on a \emph{river-valley-basin} view of the loss
landscape (see Figure~\ref{fig:intro-river-basin}), supported by recent empirical and theoretical evidence~\cite{ma2022beyond,wen2025understanding,chen2025unveiling,begout2026gradient}. 
The \emph{river} is a low-loss manifold or trajectory along which full-precision training can move without large cost increases. Around the river lies a tube-shaped \emph{basin} (dashed lines in Figure~\ref{fig:intro-river-basin}) in which normal perturbations cause little loss change. Outside the basin, \emph{valley walls}
produce steep increases in normal directions. 

PTQ can fail when the deployed quantized point falls outside the low-loss basin and lands on the valley wall, where the loss increases sharply and performance deteriorates, as illustrated in Figure~\ref{fig:intro-ptq-qat}. This can happen when the quantization grid resolution is comparable to the basin width: PTQ may cross the basin boundary even though another nearby quantized point remains inside the basin and has low loss. The mechanism is that the Hessian can be nearly flat inside the basin, so local quadratic surrogates may underestimate the boundary-crossing cost and fail to distinguish basin-compatible quantized points from incompatible ones. By contrast, QAT evaluates gradients at the quantized weights. When these weights fall outside the basin, the gradients probe the steep valley-wall regime and can contain a substantial inward normal component. Updates to the latent full-precision weights through the STE can therefore shift subsequent quantized iterates back toward the low-loss basin, restoring quantization compatibility. Ordinary full-precision fine-tuning, however, may keep improving the unquantized model along the river while leaving the normal quantization error uncorrected. Thus, QAT can help not merely because it uses additional compute, but because it optimizes the deployed quantized model and provides a quantization-specific correction that standard full-precision training does not.
Our contributions are as follows:
\begin{itemize}
    \item We formalize a river--valley--basin landscape together with
    quantization-compatibility conditions. This framework identifies a geometric
    PTQ failure mode: local rounding or second-order reconstruction objectives
    can select a high-loss quantized point outside the low-loss basin, even
    though nearby low-loss quantized points exist. %

    \item We analyze QAT dynamics and show that, when the deployed quantized
    iterate lies outside the basin, the gradient evaluated at the quantized
    weights can contain an inward normal component that drives subsequent
    deployed quantized iterates back into the low-loss basin in finite time.

    \item We support the theory through experiments on low-dimensional
    landscapes, matrix factorization, and neural network quantization tasks,
    testing the predicted basin-crossing failure, inward-gradient correction,
    and QAT recovery behavior.
\end{itemize}

\subsection{Related Work}
\label{sec:related}
\paragraph{PTQ and QAT.}
 Early and widely used PTQ pipelines rely on
rounding, calibration, and per-layer or per-channel scaling~\cite{krishnamoorthi2018quantizing,banner2019post}. More advanced methods
choose rounding decisions by minimizing local reconstruction objectives: loss-aware
PTQ optimizes quantization decisions with a local loss proxy~\cite{nahshan2021loss},
AdaRound learns rounding decisions through a relaxed reconstruction
objective~\cite{nagel2020up}, and GPTQ applies a layerwise second-order
reconstruction procedure for transformers~\cite{frantar2023gptq}. 
Architecture-specific PTQ methods have
also been developed for vision transformers, including twin uniform quantization
in PTQ4ViT~\cite{yuan2022ptq4vit} and Hessian-based reconstruction in
APHQ-ViT~\cite{wu2025aphq}. Unlike PTQ, QAT includes
quantization in the training loop, typically by applying quantization in the
forward pass and using an STE in the backward pass \cite{bengio2013estimating,courbariaux2015binaryconnect,jacob2018quantization}. Empirically, QAT is often more
robust at low precision, but it is more expensive than PTQ. Recent work studies
hybrid PTQ--QAT pipelines and compute allocation between full-precision training
and QAT fine-tuning \cite{catalan2025training,liu2025paretoq,dremov2026compute}. Our work is
complementary: rather than proposing a new quantization algorithm, we study a
mechanism explaining when and why QAT can correct a PTQ failure.

\paragraph{Theory of straight-through estimators.}
Theoretical understanding of STE-based training remains limited. Existing analyses
typically impose structural assumptions on the quantizer, architecture, objective,
or data distribution. For binary weights, Li et al.~\cite{li2017training} analyze
STE-based SGD for finite-sum objectives and show convergence of averaged iterates
to a neighborhood of the minimizer under strong convexity and smoothness
conditions. Yin et al.~\cite{yin2019understanding} study STE for two-layer
networks with binary activations under Gaussian inputs and show that the expected
coarse gradient can be a descent direction for the population objective. More
recent work extends aspects of this analysis beyond population objectives
\cite{jeong2025beyond}. A complementary line of work gives optimization and
regularization interpretations of quantized training and STE-like updates.
Bai et al.~\cite{bai2018proxquant} relate straight-through training to dual
averaging for quantization-constrained optimization and propose ProxQuant, which
formulates quantized training as regularized learning solved by proximal gradient
updates. Dockhorn et al.~\cite{dockhorn2021demystifying} further interpret
BinaryConnect through dual averaging and generalized conditional gradient
methods, and introduce ProxConnect using proximal maps as a principled class of
quantizers. More recently, PARQ~\cite{jin2025parq} uses convex piecewise-affine
regularization to induce weights toward discrete values, optimizes the resulting
objective with an aggregate proximal stochastic-gradient method, and shows that
STE can be viewed as an asymptotic form of this regularized proximal framework.
These results clarify when STE updates can behave as useful biased gradients, or
how STE-like quantized training can be interpreted through optimization
principles. They do not, however, directly address the PTQ--QAT pipeline: why a
pretrained full-precision model may be fragile under quantization, why QAT should
start from such a model, or why QAT may be more useful than equal-budget
full-precision fine-tuning. Our analysis focuses on these pipeline-specific
questions.

\paragraph{Neural network loss landscapes.}
Our landscape model builds on recent work on anisotropic geometric structures in neural network loss functions. Ma et al.~\cite{ma2022beyond} argue that loss landscapes exhibit
multiscale structure with subquadratic growth rates along some directions, so a single local quadratic approximation can miss
qualitatively different behavior at larger scales.  
Wen et al.~\cite{wen2025understanding} propose a river-valley model to explain
warmup-stable-decay learning rate schedules: high learning rates make progress
along the river while oscillating across sharp valley walls, and learning rate
decay reduces the off-river component. Chen et al.~\cite{chen2025unveiling}
observe basin-like stability regions in large language models, where performance
is nearly unchanged inside a basin but collapses outside it.  These empirical
observations are consistent with optimization-theoretic analyses. In particular, Davis et al.~\cite{davis2025gradient} show that a smooth function with subquadratic growth rate admits a
smooth {ravine} manifold on which 
growth normal to the ravine is at least quadratic near the minimum. Most recently, B\'egout et al.~\cite{begout2026gradient} establish that, near the
solution, gradient descent aligns with the eigenvector corresponding to the
smallest eigenvalue of the Hessian.
We combine these perspectives to study  PTQ and QAT.

\section{River-valley-basin Landscape}
\label{sec:landscape}

Motivated by the empirical and theoretical evidence of the loss landscape of neural networks reviewed in Section \ref{sec:related}, we now introduce a local
model that isolates the geometry relevant to quantization. The model has three
components: a low-loss \emph{river} that represents the full-precision training
trajectory, a surrounding \emph{basin} in which moderate normal perturbations do
not substantially change the loss, and sharper \emph{valley walls} outside this
basin that generate an inward gradient. This abstraction is not intended as a
global description of the full neural-network objective. Rather, it is a local
description near a pretrained point, designed to separate tangential motion along
the trained solution set from normal displacements induced by quantization. This
separation is the key structure used later to explain both PTQ failure and QAT
recovery.

We first introduce the local geometric notation. Let $w_{\rm fp}$ be the
full-precision pretrained point, and let $\cM\subset\R^d$ be an
$m$-dimensional $C^{2,1}$ embedded manifold. For each $\pi\in\cM$, denote by
$T_{\cM}(\pi)$ and $N_{\cM}(\pi)$ the tangent and normal spaces at $\pi$.
Let $\mathsf H$ be a $d\times d$ dimensional symmetric positive definite matrix and define
$\|x\|_{\mathsf H}=(x^\top \mathsf Hx)^{1/2}$. Denote the maximal and minimal eigenvalues of $\mathsf H$ by $\lambda_1$ and $\lambda_d$. We assume $\mathsf H N_{\cM}(\pi)\subseteq N_{\cM}(\pi)$ for all $\pi\in\cM$, so that the weighted normal metric does not mix normal and tangential directions.
Let $U=\{w\in\R^d \mid \dist(w,\cM)<R_U\}$ be a tubular neighborhood of $\cM$
containing $w_{\rm fp}$, on which the nearest-point projection
$P_{\cM}: U\to\cM$ is well defined and assumed to be $L_P$-Lipschitz continuous. 
For a radius $R>0$, define the anisotropic normal tube
$\cT(\pi)=\{\pi+\xi \mid \xi\in N_{\cM}(\pi),\ \|\xi \|_{\mathsf H} \le R\}$ and
$\cT=\bigcup_{\pi\in\cM}\cT(\pi)$. Equivalently,
$\cT=\{w\in U \mid \|w-P_{\cM}(w)\|_{\mathsf H} \le R\}$.

\begin{assumption}[Local river-valley geometry]\label{ass:river-geometry}
The function $f$ is $L$-smooth on $U$.  Moreover, there exist constants
$\epsilon_{\rm flat}$ and $c_\perp>0$ such that the following hold.
\begin{itemize}
    \item {\bf (Anisotropic Basin)}
    For any $\pi\in\cM, w \in \cT(\pi)$, we have $
    |f(w)-f(\pi)| \le \epsilon_{\mathrm{flat}}$.
    \item {\bf (River center)}
    For every $\pi \in \cM$, it holds that $\nabla f(\pi) \in T_{\cM}(\pi)$.
    \item {\bf (Sharp valley outside the basin)}
    For every $w \in U \setminus \cT$, defining the weighted normal direction $\nu_{\mathsf H}(w) = \mathsf H (w-P_{\cM}(w))/\|\mathsf H(w-P_{\cM}(w))\|$, we have
 $\langle \nabla f(w), \nu_{\mathsf H}(w)\rangle \ge c_{\perp}$.
\end{itemize}
\end{assumption}

The \emph{Anisotropic Basin} models the basin-like near-flatness observed in neural-network
loss landscapes: within the anisotropic tube $\cT(\pi)$ of radius $R$ around a
river point $\pi\in\cM$, the loss changes by at most
$\epsilon_{\mathrm{flat}}$. The metric $\mathsf H$ encodes anisotropic normal
scaling, allowing some normal directions to be flatter than others; this is
consistent with multiscale landscape structure~\cite{ma2022beyond} and with
results that distinguish mild growth along a ravine from sharper transverse
growth~\cite{davis2025gradient}. The \emph{River Center} condition requires the gradient on
$\cM$ to be tangential, formalizing the idea that full-precision training can
continue moving along a low-loss river~\cite{wen2025understanding} and relating
to talweg- and direction-based descriptions near minimizers~\cite{begout2026gradient}.
The \emph{Sharp Valley} condition imposes sharp valley walls outside the basin: for every
$w\in U\setminus\cT$, the normal pairing
$\langle\nabla f(w),\nu_{\mathsf H}(w)\rangle$ is uniformly bounded below by
$c_{\perp}>0$. This condition captures the sharp regime beyond basin stability
observed in large language models~\cite{chen2025unveiling} and provides the
geometric input used later to prove finite-time recovery of the quantized
iterate.

\begin{figure}[htp]
\centering
\subfigure[ResNet on CIFAR-10]{\includegraphics[width=0.32\linewidth]{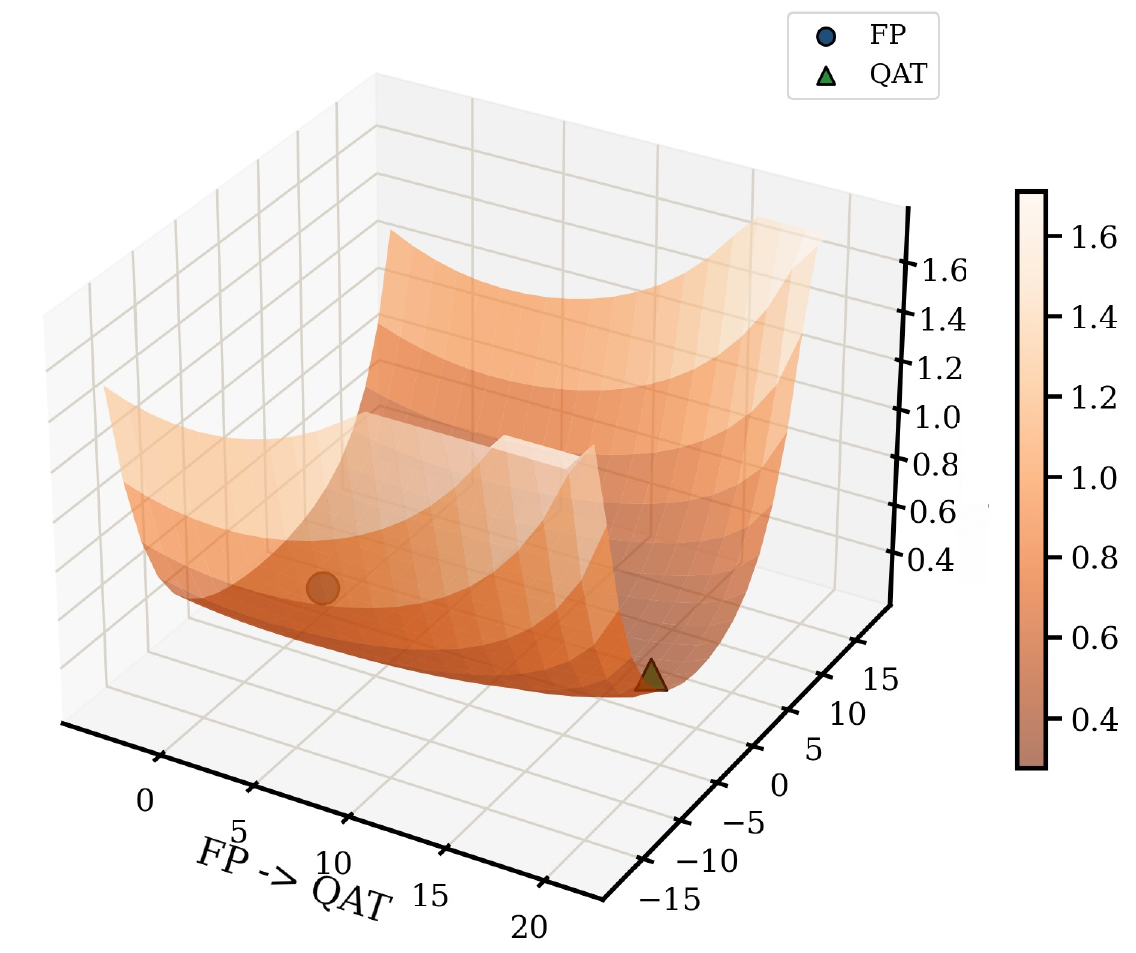}}
\hfill
\subfigure[ViT on ImageNet]{\includegraphics[width=0.31\linewidth]{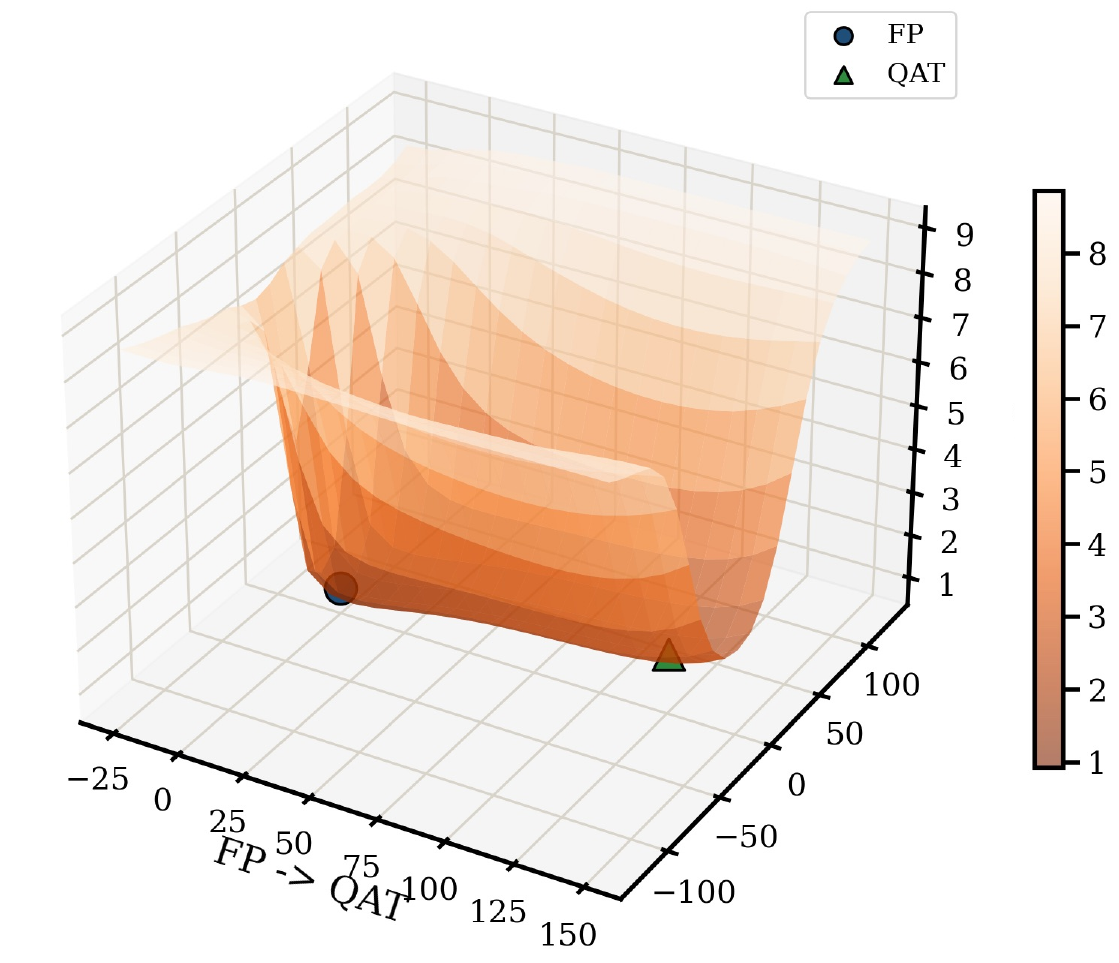}}
\hfill
\subfigure[Llama on SlimPajama]{\includegraphics[width=0.34\linewidth]{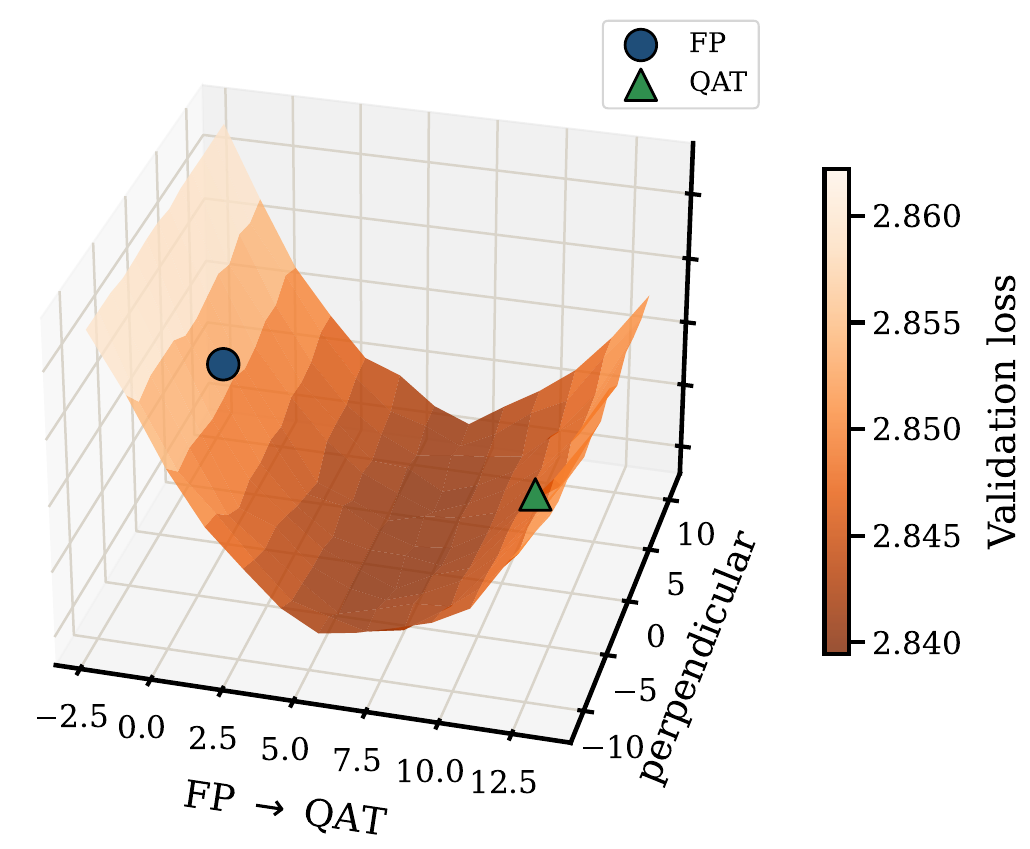}}
\caption{Local landscape diagnostics around pretrained checkpoints.
Each panel plots the loss along one tangential and one normal direction
relative to the QAT trajectory. The profiles are qualitatively
consistent with Assumption~\ref{ass:river-geometry}: a near-flat basin
in the normal direction and sharper growth outside it.}
\label{fig:assumption1-empirical}
\end{figure}

Figure~\ref{fig:assumption1-empirical} gives local landscape diagnostics around pretrained checkpoints for three architectures. The observed profiles are qualitatively consistent with the river-valley-basin geometry of Assumption~\ref{ass:river-geometry}. However, the two-dimensional slices in Figure~\ref{fig:assumption1-empirical} can only probe the \emph{Anisotropic Basin} and \emph{Sharp Valley} conditions along sampled directions; they do not provide a quantitative verification of other conditions such as \emph{River Center}, nor do they determine the constants $\epsilon_{\rm flat}$, $c_\perp$, or $R$ in the full parameter space. 

To complement this empirical evidence, we present two analytically tractable
examples. The first is a two-dimensional landscape that makes the
river-valley-basin geometry visually transparent. The second is an
over-parameterized matrix factorization problem, which shows why an anisotropic
normal metric naturally arises: some normal directions enter the loss
quadratically, while others appear only through higher-order terms. For both
examples, the detailed verification of Assumption~\ref{ass:river-geometry} and
the explicit constants are deferred to Appendices~\ref{app:toy-example-assum1}
and~\ref{app:mf-assum1}.
\vspace{0.05in}

\noindent
\begin{minipage}[c]{0.67\linewidth}
\begin{example}[A two-dimensional river-valley-basin loss]
\label{ex:simple-running}
Let $w=(x,y)\in\mathbb R^2$. Fix parameters $u\in\R$, $\mu, r, \epsilon>0$, and define
\[
    f(w)=\tfrac12 (x+u)^2+\tfrac{\mu}{2}\bigl(\max\{|y|-r, 0\}\bigr)^2 .
\]
The river center is the horizontal line $\cM=\{(x,0)\mid x\in\mathbb R\}$,
and the basin is $\cT=\{(x,y)\in U\mid |y|\le R\}$ with $R =  r+\sqrt{2\epsilon/\mu}$ and $U=\{(x,y)\in\mathbb R^2\mid |y|<R_U\}$ for any $R_U > R$.
This example satisfies Assumption~\ref{ass:river-geometry} on $U$ with
$\mathsf H=\mathbb{I}$, $L_P=1$, $L=\max\{1,\mu\}$, $\epsilon_{\rm flat}= \epsilon$, and $c_\perp=\sqrt{2\mu\epsilon}$.
\end{example}
\end{minipage}\hfill
\begin{minipage}[c]{0.32\linewidth}
\centering
\includegraphics[width=\linewidth]{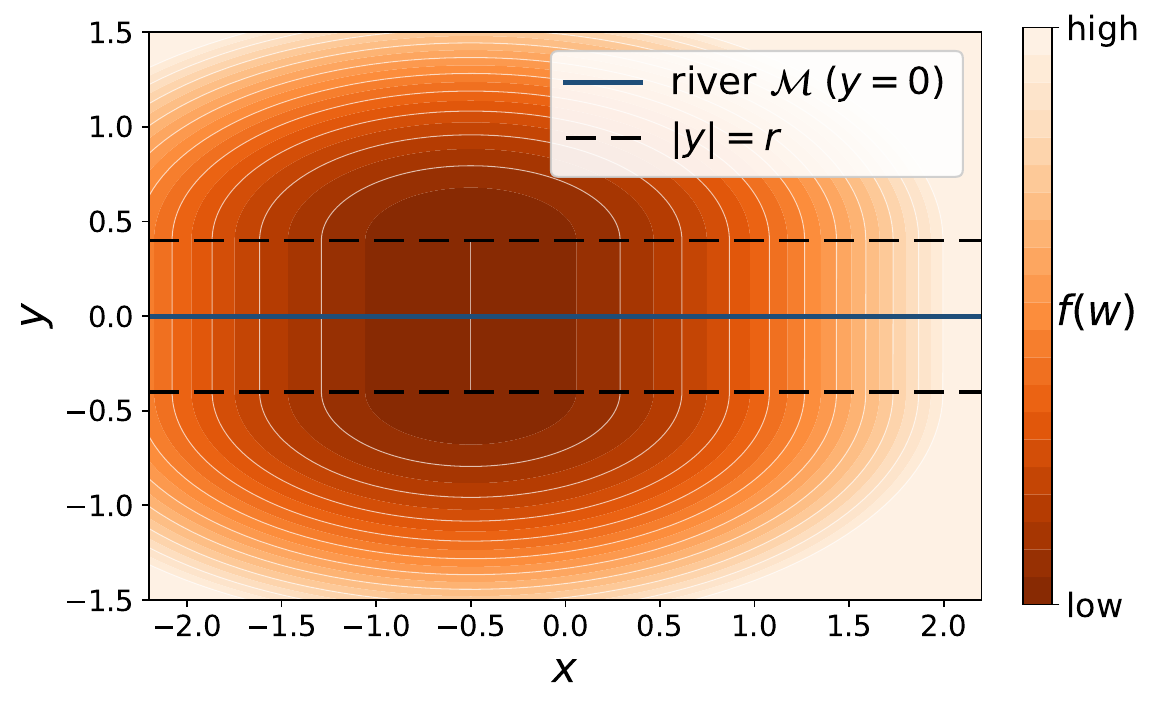}
\end{minipage}

\vspace{0.05in}
Notice that for fixed $\epsilon_{\rm flat}=\epsilon$, increasing $\mu$ steepens the valley $c_{\perp} \propto \sqrt{\mu}$. That motivates calling $\mu$ the sharpness parameter of $f$. In the next section, we use a variant of this example to illustrate why PTQ fails while QAT succeeds under the geometry of Assumption~\ref{ass:river-geometry}.
The matrix factorization example below shows that the same geometry also appears in a standard over-parameterized nonconvex model, but with a genuinely anisotropic normal basin.

\begin{example}[Over-parameterized matrix factorization]
\label{ex:mf_sec2}
Assume $M^\star=\Diag(D,\mathbf 0)\in\R^{d\times d}$, where
$D\in\R^{r\times r}$ is positive definite and zero blocks have conformal
dimensions. Consider
\[
    f(X)=\|XX^\top-M^\star\|_F^2
\]
with $X=\binom{P}{Z}$, where $P\in\R^{r\times k}$,
$Z\in\R^{(d-r)\times k}$, and $k\ge r$.
Fix a full-row-rank matrix $P_0$ and a bounded open neighborhood
$\Omega$ of $P_0$ such that all $P$ in $\overline\Omega$ remain full-row-rank. Define
\[
    \cM=\left\{\binom{P}{\mathbf 0} \,\middle|\, P\in\Omega\right\},
    \qquad
    U=\left\{\binom{P}{Z} \,\middle|\, P\in\Omega,\ \|Z\|_F<R_U\right\}.
\]
Expanding the objective gives
$
    f(P,Z)
    =
    \|PP^\top-D\|_F^2
    +2\|PZ^\top\|_F^2
    +\|ZZ^\top\|_F^2
$.
Thus the normal geometry is anisotropic: the part of $Z$ seen by the row space of
$P$ contributes quadratically through $\|PZ^\top\|_F^2$, whereas the row-null part
first appears through the quartic term $\|ZZ^\top\|_F^2$. Freezing this row-space
decomposition at $P_0$, one can choose a fixed positive definite metric
$\mathsf H$ that weights the quadratic normal directions more heavily than the
quartic ones. After possibly shrinking $\Omega$, Assumption~\ref{ass:river-geometry}
holds on $U$ with an anisotropic basin $\cT$, tolerance $\epsilon_{\rm flat}=\epsilon$, and some $c_\perp>0$. A precise construction of $\mathsf H$, $\cT$, and $c_\perp$ is given in Appendix~\ref{app:mf-assum1}.
\end{example}

\section{Quantization under the River-valley-basin Landscape}
\label{sec:Quant_under_River-valley-basin}

In this section, we analyze both PTQ and QAT under the river--valley--basin landscape introduced
in Section~\ref{sec:landscape}. We first study how PTQ can fail when its deployed codeword
crosses the basin boundary, then show how STE-based QAT can recover by updating
latent full-precision weights using gradients evaluated at the quantized model. 

\subsection{Failure of Hessian-based PTQ}
\label{sec:ptq-failure}

Many PTQ methods are motivated by a local second-order approximation around a
fixed full-precision model. Given a pretrained point $w_{\mathrm{fp}}$ and a
quantization codebook $\mathcal Q$, the idealized Hessian-based PTQ objective is
\begin{equation}\label{eq:hessian-ptq}
    q_{\ast}
   =
    \argmin_{q \in \mathcal Q}
    \tfrac{1}{2}(q - w_{\mathrm{fp}})^\top
    \nabla^2 f(w_{\mathrm{fp}})
    (q - w_{\mathrm{fp}}),
\end{equation}
which is the
second-order Taylor proxy for the loss function, with the linear term omitted because
$\nabla f(w_{\mathrm{fp}})$ is small near a well-trained model. In practice, directly solving \eqref{eq:hessian-ptq} with the exact Hessian is typically computationally impractical: the Hessian is often replaced by a cheaper approximation, and the resulting discrete optimization over the quantization codebook is handled through relaxations or greedy/blockwise heuristics. For example,
AdaRound~\cite{nagel2020up} derives a quadratic rounding objective from a Taylor expansion and then
optimizes a layerwise reconstruction relaxation, while GPTQ~\cite{frantar2023gptq} uses a layerwise
output-reconstruction Hessian estimated from calibration data, together with
damping and blockwise inverse-Hessian compensation. Thus, we treat \eqref{eq:hessian-ptq} as a stylized model representing the core Hessian-based principle underlying these PTQ techniques. The failure mode we describe below, however, is not limited to this exact global objective; rather, it captures a broader limitation inherent to local PTQ surrogates.

When $w_{\mathrm{fp}}$ lies inside the flat basin, this quadratic proxy can be highly misleading.
Within this band, the loss remains nearly constant in the normal direction, meaning the Hessian $\nabla^2 f(w_{\mathrm{fp}})$ has very small eigenvalues along that direction.
As a result, the proxy treats normal-direction displacements as inexpensive, even though crossing the band boundary on the true landscape incurs a sharp increase in loss.
At the same time, the Hessian may have substantial curvature along the tangent direction of the river, causing the proxy to penalize tangent movements toward downstream grid points that actually have low true loss. This also explains why simply spending more budget on
ordinary full-precision training need not fix the PTQ failure. Once the
full-precision iterate is inside the flat band, its gradient may mainly improve
the along-river coordinate and need not provide an inward normal correction for
the deployed quantized point. As a result, after additional full-precision
steps, the final PTQ step can still select a quantized codeword outside the
low-loss basin.

We rotate the function in Example~\ref{ex:simple-running} to show that the selected point of \eqref{eq:hessian-ptq} can have arbitrarily larger loss than a nearby grid point inside the basin.

\vspace{0.2cm}
\noindent
\begin{minipage}[c]{0.68\linewidth}
\paragraph{Revisiting \Cref{ex:simple-running}: failure of Hessian-based PTQ.}
For any $\theta\in[\pi/6,\,\pi/4)$, let $R_\theta = \bigl(\begin{smallmatrix} \cos\theta & \sin\theta \\ -\sin\theta & \cos\theta \end{smallmatrix}\bigr)$ denote the clockwise rotation matrix by $\theta$ and set $w_0=(1,0)$.  Define
$
    f_\theta(w) = f(R_\theta(w-w_0))
$,
where $f$ is the loss in Example~\ref{ex:simple-running} with $\alpha=\cos\theta-\sin\theta$, $u\in(\frac{\alpha}{2},\alpha)$, and $R < \cos\theta+\sin\theta$.
The river center becomes
$\cM=\{w_0+\lambda(\cos\theta,\sin\theta) \mid \lambda\in\R\}$ and the basin
$\cT=\{w\mid\dist(w,\cM)\le R\}$.
Consider uniform quantizer $\cQ=\rho\{0,\pm1,\cdots,\pm(2^{B-1}\!-\!1)\}^2$ for $\rho=1$ and $B\ge 2$.
Fixing $w_{\rm fp}$ to be any stationary point of $f_\theta$ at which the Hessian exists, the minimizer $q_{\ast}$ of the Hessian proxy~\eqref{eq:hessian-ptq} lies outside the basin $\cT$, even though $q_{\rm g}=(1,0)\in\cT$ is nearby. The loss gap $f_\theta(q_{\ast})-f_\theta(q_{\rm g})$ grows linearly in the sharpness parameter $\mu$.
\end{minipage}\hfill
\begin{minipage}[c]{0.30\linewidth}
\centering
\includegraphics[width=\linewidth]{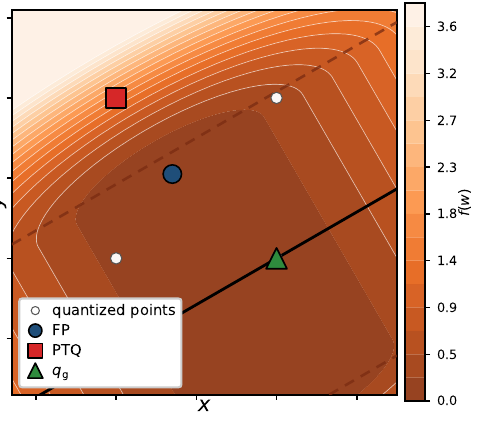}
\end{minipage}

\begin{remark}
(i) This failure holds throughout an open strip of stationary points, not just a single $w_{\rm fp}$. (ii) The unit quantizer scale $\rho=1$ is for ease of presentation. Appendix~\ref{app:toy-example-PTQ} proves the same failure over an open interval of scale, so the failure mechanism is also robust to $\rho$.
\end{remark}

While the Hessian model \eqref{eq:hessian-ptq} is intentionally stylized, it successfully isolates the local second-order logic common to practical PTQ algorithms. The previous example illustrates that this logic incorrectly ranks quantized candidates when the local curvature is nearly flat in the normal direction, completely ignoring the sharp rise in true loss that occurs just beyond the basin boundary.

\subsection{QAT Recovers from PTQ Failure}
\label{sec:ste-recovery}

Next, we demonstrate how QAT can recover from the aforementioned PTQ failure.
Let $w_{\mathrm{fp}}$ denote the full-precision pretrained checkpoint. Starting from
$w_0 = w_{\mathrm{fp}}$, we consider the following STE update:
\[
    w_{k+1} = w_k - \eta \nabla f(Q(w_k)).
\]
Here the gradient is evaluated at the quantized weights $Q(w_k)$ but applied to
the underlying full-precision weights $w_k$. After $T$ iterations, the deployed
model is the quantized checkpoint $Q(w_T)$.

To proceed, we introduce the assumptions on the quantizer. 
Recall the notation for $\mathsf{H}$ and $P_{\cM}$ in  Assumption \ref{ass:river-geometry}. 
We denote the maximal and minimal eigenvalues of $\mathsf H$ by $\lambda_1$ and $\lambda_d$. 
\begin{assumption}[Quantizer compatibility]
\label{ass:quantizer-compatibility}
There exist constants $\rho$, $G$, $\kappa_{\cM} > 0$ such that: \\[0.05in]
{(i)} \textbf{(Quantizer rounding error)} For every $w\in U$, $\|Q(w) - w\| \le \rho$. \\[0.05in]
(ii) \textbf{(Projection regularity and bounded gradients)} For every $w \in U$, $\|\nabla f(w)\| \leq G$. Moreover, $P_{\cM}$ is $C^{1,1}$ on $U$, i.e., there exists $\kappa_{\cM} > 0$ such that
    \[
        \|\nabla P_{\cM}(w) - \nabla P_{\cM}(w^\prime)\|_{\rm op}
        \le \kappa_{\cM} \|w-w^\prime\|,
        \qquad \forall w, w^\prime \in U.
    \]
(iii) \textbf{(River width)}
    $
        \rho(1+L_P)\sqrt{\lambda_1} \max\left\{\frac{G}{c_{\perp}} \sqrt{\frac{\lambda_1}{\lambda_d}}, 1\right\}
        \,<\, R
        \,<\, \sqrt{\lambda_d} (R_U - \rho),
    $ where the parameters $L_P$, $c_\perp$, $R$ and $R_U$ are defined in Assumption \ref{ass:river-geometry}.
\end{assumption}

\begin{theorem}[Recovery from PTQ failure under STE]\label{thm:QAT-recovery}
Suppose Assumptions~\ref{ass:river-geometry} and \ref{ass:quantizer-compatibility} hold. Assume $w_0 = w_{\mathrm{fp}} \in \cT$ and $Q(w_0) \notin \cT$. 
Let $T = \inf\{k\ge 0 \mid Q(w_k)\in\mathcal T\}$ be the first time the quantized iterate enters $\cT$.
If 
$
    \eta \le \min\left\{\frac{c_\perp \sqrt{\lambda_d}R - \lambda_1(1+L_P)\rho G}{\lambda_1(1+L_P)G^2}, \frac{\rho}{G}\right\}
$, then we have
\[
    T \le
    1 + \frac{\max\left\{\|w_0 - P_{\cM}(w_0)\|^2_{\mathsf H} - \left(R-\sqrt{\lambda_1}\rho(1+L_P) \right)^2, \; 0\right\}}
    {\eta \left( c_{\perp}\sqrt{\lambda_d}R - \lambda_1(1+L_{P}) \rho G\right)}.
\]
\end{theorem}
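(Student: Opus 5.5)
We work with the Lyapunov function $V(w)=\|w-P_{\cM}(w)\|_{\mathsf H}^2$. The plan is to show that $V$ decreases by at least $\eta\,(c_\perp\sqrt{\lambda_d}R-\lambda_1(1+L_P)\rho G)$ per step for as long as $Q(w_k)\notin\cT$. We then check that once $V(w_k)\le (R-\sqrt{\lambda_1}\rho(1+L_P))^2$, the quantized point $Q(w_k)$ must lie in $\cT$. Dividing the initial excess of $V$ over this threshold by the per-step decrease, and adding one step for rounding, gives the stated bound on $T$.

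\emph{Step 1 (entering criterion).} Write $\xi(w)=w-P_{\cM}(w)$. Since $P_{\cM}$ is $L_P$-Lipschitz,
\[
\|\xi(Q(w))-\xi(w)\|\le (1+L_P)\|Q(w)-w\|\le (1+L_P)\rho .
\]
Therefore $\|\xi(Q(w))\|_{\mathsf H}\le \sqrt{V(w)}+\sqrt{\lambda_1}(1+L_P)\rho$. If $\sqrt{V(w)}\le R-\sqrt{\lambda_1}\rho(1+L_P)$, then $Q(w)\in\cT$. Here $Q(w)\in U$ must be checked; this follows from the right inequality in Assumption~\ref{ass:quantizer-compatibility}(iii), since $\dist(w,\cM)\le \sqrt{V}/\sqrt{\lambda_d}\le R/\sqrt{\lambda_d}<R_U-\rho$. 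The left inequality in (iii) guarantees that the threshold is positive.

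\emph{Step 2 (one-step descent).} Let $g_k=\nabla f(Q(w_k))$ with $Q(w_k)\notin\cT$. The gradient of $V$ is $\nabla V(w)=2\,\mathsf H\xi(w)$: differentiating through $P_{\cM}$ contributes $-2\nabla P_{\cM}(w)^\top \mathsf H\xi$, which vanishes because $\nabla P_{\cM}$ maps into tangent spaces, $\mathsf H\xi$ is normal (by $\mathsf HN_{\cM}\subseteq N_{\cM}$), and $\nabla P_{\cM}(w)^\top$ annihilates normals. By the $C^{1,1}$ regularity in (ii), $V$ has Lipschitz gradient with constant $O(\lambda_1(1+L_P))$. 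This gives
\[
V(w_{k+1})\le V(w_k)-2\eta\langle \mathsf H\xi(w_k),g_k\rangle+C\eta^2G^2 .
\]
We split the inner product as $\langle \mathsf H\xi(Q(w_k)),g_k\rangle+\langle \mathsf H(\xi(w_k)-\xi(Q(w_k))),g_k\rangle$. By the sharp-valley condition, the first term is at least
\[
c_\perp\|\mathsf H\xi(Q(w_k))\|\ge c_\perp\sqrt{\lambda_d}\,\|\xi(Q(w_k))\|_{\mathsf H}\ge c_\perp\sqrt{\lambda_d}R,
\]
using $Q(w_k)\notin\cT$. The second term is at most $\lambda_1(1+L_P)\rho G$ in absolute value. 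With $\eta$ below the first stepsize bound, the $\eta^2$ term absorbs at most half of the gain. The result is $V(w_{k+1})\le V(w_k)-\eta\,(c_\perp\sqrt{\lambda_d}R-\lambda_1(1+L_P)\rho G)$, with constants arranged to match the statement. The condition $\eta\le\rho/G$ keeps $\|w_{k+1}-w_k\|\le\rho$. This is used to stay inside $U$ and in the quadratic remainder estimate, where $\kappa_{\cM}\rho$ terms are controlled.

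\emph{Step 3 (counting).} Iterating Step 2 while $Q(w_k)\notin\cT$, after $k$ steps $V$ drops below the Step 1 threshold as soon as $k$ exceeds the max-ratio in the statement; the "$1+$" accounts for the ceiling. We must also verify that iterates remain in $U$ throughout. This holds because $V$ is nonincreasing and starts at most $R^2$ (since $w_0\in\cT$), so $\dist(w_k,\cM)\le R/\sqrt{\lambda_d}<R_U-\rho$.

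The main obstacle is Step 2's smoothness bound for $V$. The paper's constants suggest avoiding the Hessian of $P_{\cM}$ altogether. The approach we would try first is to bound $V(w_{k+1})$ directly: use $V(w_{k+1})\le\|w_{k+1}-P_{\cM}(w_k)\|_{\mathsf H}^2$, which holds if $\mathsf H$-distance minimization is compatible with the nearest-point projection. Expanding the square then gives $V(w_k)-2\eta\langle\mathsf H\xi(w_k),g_k\rangle+\eta^2\lambda_1G^2$ with no $\kappa_{\cM}$ at all. If that compatibility fails, we fall back on the $C^{1,1}$ Taylor bound and absorb the extra $\kappa_{\cM}$ term using $\eta\le\rho/G$.
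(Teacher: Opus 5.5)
Your outline follows the paper's proof step for step. It uses the same Lyapunov function (the paper takes $\Phi=\tfrac12\|w-P_{\cM}(w)\|_{\mathsf H}^2$), the same gradient identity from $\mathsf H N_{\cM}\subseteq N_{\cM}$, and the same split of $\langle \mathsf H\xi(w_k),\nabla f(Q(w_k))\rangle$ at $Q(w_k)$ followed by the sharp-valley bound. The entering criterion and the counting argument are also the same. With $C=\lambda_1(1+L_P)$, your constants come out exactly as stated. The gap is in the step you yourself call the main obstacle. You assert that $\nabla V$ has Lipschitz constant $O(\lambda_1(1+L_P))$ \emph{by the $C^{1,1}$ regularity in (ii)}, but then concede you do not know how to establish it. Neither route you propose proves the theorem as stated.

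The missing observation is one line, and you already have its ingredient. Since $\nabla V(w)=2\mathsf H(w-P_{\cM}(w))$ holds identically on $U$,
\[
\|\nabla V(w)-\nabla V(w')\|\le 2\lambda_1\bigl(\|w-w'\|+\|P_{\cM}(w)-P_{\cM}(w')\|\bigr)\le 2\lambda_1(1+L_P)\|w-w'\|,
\]
which uses only the $L_P$-Lipschitz continuity of $P_{\cM}$. No second derivative of $P_{\cM}$ ever appears; equivalently, $\nabla^2 V=2\mathsf H(I-\nabla P_{\cM})$ involves only $\nabla P_{\cM}$. Differentiability of $P_{\cM}$ is needed only to justify the gradient formula, and $\kappa_{\cM}$ plays no role in this theorem. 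The descent lemma on $[w_k,w_{k+1}]\subset U$ then gives $V(w_{k+1})\le V(w_k)-2\eta\langle\mathsf H\xi(w_k),g_k\rangle+\lambda_1(1+L_P)\eta^2G^2$. This is exactly the paper's inequality, written for $2\Phi$.

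Your alternatives do not work as stated.
\begin{itemize}
\item The inequality $V(w_{k+1})\le\|w_{k+1}-P_{\cM}(w_k)\|_{\mathsf H}^2$ requires the Euclidean projection $P_{\cM}(w_{k+1})$ to minimize the $\mathsf H$-distance to $\cM$. The invariance $\mathsf H N_{\cM}\subseteq N_{\cM}$ only makes it a critical point of that distance, so this is an extra claim you would have to prove.
\item The fallback is worse. Linearizing $P_{\cM}(w_{k+1})=P_{\cM}(w_k)+P_{T_{\cM}(\pi_k)}\delta+r_k$ leaves a cross term $\langle \mathsf H\xi(w_k),r_k\rangle$ of order $\lambda_1\kappa_{\cM}\|\xi(w_k)\|^2\eta G$. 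This term is first order in $\eta$, so shrinking the step cannot absorb it. It would force an extra $\kappa_{\cM}$-dependent condition that the theorem does not contain.
\end{itemize}
Correspondingly, $\eta\le\rho/G$ serves only to keep the segment $[w_k,w_{k+1}]$ inside $U$ so the descent lemma applies. It does not control any $\kappa_{\cM}\rho$ term.
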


The first-entry-time bound in Theorem~\ref{thm:QAT-recovery} gives a cautious interpretation of scaling and bitwidth effects.  If larger models have wider effective low-loss basins, as suggested by recent basin-visualization studies~\cite{chen2025unveiling}, then the bound predicts a shorter recovery phase, provided the other constants in the drift margin do not deteriorate.  This offers a possible geometric explanation for empirical observations that larger models can be more tolerant to aggressive quantization~\cite{dremov2026compute}. Conversely, for a fixed quantizer family and dynamic range, lowering the bitwidth typically increases the quantization scale $\rho$, and the bound then predicts a longer first-entry time, and in the extreme case the drift margin may become nonpositive, in which case this theorem no longer guarantees recovery.

To further interpret the quality of the recovered point, we
separate into two regimes. 

\begin{corollary}[Near-optimal pre-trained solution]
\label{cor:QAT_loss1}
    Assume the conditions of Theorem~\ref{thm:QAT-recovery}. Suppose additionally that $\|\nabla f(\pi)\| \le \epsilon$ for all $\pi \in \cM$.
    Defining $\Delta =\kappa_{\mathcal M} \left(R/\sqrt{\lambda_d} + \frac12\eta G\right)$, we have
    \[
        f(Q(w_T))
        \le f(w_{\rm fp})
        + 2\epsilon_{\rm flat}
        + \underbrace{\epsilon\rho\bigl(\kappa_{\mathcal M}(R/\sqrt{\lambda_d} + \rho)+1\bigr)}_\text{final quantization error at $w_T$}
        + \underbrace{\eta T G (\Delta+1) \left(\epsilon + \tfrac{L}{2} \eta G (\Delta+1)\right)}_\text{accumulated tangent error along the river}.
    \]
    In particular, since $T=\mathcal O(1/\eta)$, fixing $\eta = \mathcal O(\epsilon)$ gives
    $
        f(Q(w_T)) \le f(w_{\rm fp}) + 2 \epsilon_{\rm flat} + \mathcal O(\epsilon(\rho+1))
    $.
\end{corollary}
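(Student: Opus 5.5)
We will bound the deployed loss by a telescoping chain $f(Q(w_T)) - f(w_{\rm fp})$ through river points, splitting it into three pieces: a basin piece, a quantization piece, and a river-transport piece. Set $\pi_k = P_{\cM}(w_k)$. Then
\[
f(Q(w_T)) - f(w_{\rm fp}) = \bigl[f(Q(w_T)) - f(P_{\cM}(Q(w_T)))\bigr] + \bigl[f(P_{\cM}(Q(w_T))) - f(\pi_T)\bigr] + \bigl[f(\pi_T) - f(\pi_0)\bigr] + \bigl[f(\pi_0) - f(w_0)\bigr].
\]

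\textbf{Basin terms.} By definition of $T$ in Theorem~\ref{thm:QAT-recovery}, $Q(w_T)\in\cT$. We also have $w_0=w_{\rm fp}\in\cT$. The Anisotropic Basin condition therefore bounds the first and last brackets by $\epsilon_{\rm flat}$ each, giving the $2\epsilon_{\rm flat}$ term.

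\textbf{Quantization term.} This is the second bracket. Both $P_{\cM}(Q(w_T))$ and $\pi_T$ lie on $\cM$. Along the segment $t\mapsto P_{\cM}(w_T+t(Q(w_T)-w_T))$ we use the mean value theorem, with $\|\nabla f\|\le\epsilon$ on $\cM$. The quantity to control is the length of the projected path, i.e. $\|\nabla P_{\cM}(\cdot)(Q(w_T)-w_T)\|$. We bound it by $\rho\,\|\nabla P_{\cM}(\cdot)\|_{\rm op}$. To bound the operator norm we compare with the point on $\cM$, where $\nabla P_{\cM}$ is the orthogonal projector onto $T_{\cM}$ and so has norm $1$. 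The $C^{1,1}$ condition of Assumption~\ref{ass:quantizer-compatibility}(ii) then gives $\|\nabla P_{\cM}(w)\|_{\rm op}\le 1+\kappa_{\cM}\dist(w,\cM)$. The points on the segment are within $R/\sqrt{\lambda_d}+\rho$ of $\cM$, provided $w_T$ is in $\cT$ or controlled. This yields $\epsilon\rho(\kappa_{\cM}(R/\sqrt{\lambda_d}+\rho)+1)$.

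We need $\|w_T-\pi_T\|\le R/\sqrt{\lambda_d}$ here. Theorem~\ref{thm:QAT-recovery}'s proof shows the $\mathsf H$-normal distance of $w_k$ is nonincreasing before entry, and $w_0\in\cT$. This gives $\|w_T-\pi_T\|_{\mathsf H}\le R$, hence $\|w_T-\pi_T\|\le R/\sqrt{\lambda_d}$. We will reuse that monotonicity.

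\textbf{River transport.} This is the third bracket, written as $\sum_{k<T}[f(\pi_{k+1})-f(\pi_k)]$. Each step moves $w$ by $\eta\nabla f(Q(w_k))$, of norm at most $\eta G$. Taylor-expanding $P_{\cM}$ around $w_k$ with the $C^{1,1}$ bound gives
\[
\|\pi_{k+1}-\pi_k\|\le \eta G\bigl(\|\nabla P_{\cM}(w_k)\|_{\rm op}+\tfrac12\kappa_{\cM}\eta G\bigr)\le \eta G(1+\Delta),
\]
using $\|\nabla P_{\cM}(w_k)\|_{\rm op}\le 1+\kappa_{\cM}R/\sqrt{\lambda_d}$. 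Then $L$-smoothness along the segment $\pi_k\to\pi_{k+1}$ with $\|\nabla f(\pi_k)\|\le\epsilon$ gives
\[
f(\pi_{k+1})-f(\pi_k)\le \epsilon\,\eta G(1+\Delta)+\tfrac L2\eta^2G^2(1+\Delta)^2.
\]
Summing over $T$ steps gives the accumulated tangent term. This step requires the segment to lie in $U$, which Assumption~\ref{ass:quantizer-compatibility}(iii) provides. The final statement follows from Theorem~\ref{thm:QAT-recovery}: the numerator of its bound is $O(1)$, so $T=O(1/\eta)$. Then $\eta T=O(1)$, and with $\eta=O(\epsilon)$ the last term is $O(\epsilon)$.

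\textbf{Main obstacle.} The hard part is the operator-norm bound $\|\nabla P_{\cM}(w)\|_{\rm op}\le 1+\kappa_{\cM}\dist(w,\cM)$. It requires that the derivative of the nearest-point projection at a point of $\cM$ is exactly the tangent projector, and that $w_k$ stays in the $R$-tube before time $T$. Both must be justified carefully, the second by re-deriving the drift inequality from the proof of Theorem~\ref{thm:QAT-recovery}.
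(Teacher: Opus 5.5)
Your proposal is correct and follows essentially the paper's proof. It uses the same four-term telescoping through $P_{\cM}(Q(w_T))$, $\pi_T$, $\pi_0$, the same $\eta G(1+\Delta)$ step bound from the $C^{1,1}$ expansion of $P_{\cM}$ around $w_k$ with $w_k\in\cT$, and the same mean-value bound on the quantization term. The only cosmetic differences are that you bound $\langle\nabla f(\pi_k),\pi_{k+1}-\pi_k\rangle$ directly by $\epsilon\|\pi_{k+1}-\pi_k\|$, whereas the paper first isolates $-\eta\langle\nabla f(\pi_k),\nabla f(q_k)\rangle$ so it can reuse that estimate in Corollary~\ref{cor:QAT_loss2}, and that you compare $\nabla P_{\cM}$ with its value at $P_{\cM}(w)$ rather than at $\pi_T$; both choices give identical constants.
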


Corollary~\ref{cor:QAT_loss1} describes the common fine-tuning regime where the
pretrained model is already close to stationary on the river: after QAT brings
the quantized iterate back into the basin, the recovered quantized loss is
controlled by the basin flatness and the final quantization error.  The next
case covers a less optimized checkpoint, where the river direction still offers
objective decrease.

\begin{corollary}[Sub-optimal pre-trained solution]
\label{cor:QAT_loss2}
    Assume the conditions of Theorem~\ref{thm:QAT-recovery}. Suppose that there exists $c_\parallel>0$ such that $\langle \nabla f(P_{\cM}(w)), \nabla f(w)\rangle > c_\parallel \|\nabla f(P_{\cM}(w))\|$ for all $w \in U$.
    For $\pi \in \cM$, denote the normalized gradient along the river by $g(\pi) = \nabla f(\pi)/\|\nabla f(\pi)\|$. If in addition, $\eta \le {c_\parallel^2}/\big[{2LG^2(\kappa_\cM(R/\sqrt{\lambda_d} + \rho) + 1)^2}\big]$, $\kappa_{\cM} \le \frac{c_\parallel}{4G(R/\sqrt{\lambda_d} + \rho)}$, and there exists $\kappa$ satisfying $\kappa \leq \frac{c_\parallel}{4\rho L_P G}$ such that $\|g(\pi) - g(\pi^\prime)\| \le \kappa \|\pi - \pi^\prime\|$ for any $\pi, \pi^\prime \in \cM$. Then,
    \[
        f(Q(w_T))
        \le f(w_{\rm fp})
        + 2\epsilon_{\rm flat}
        + {\rho L_P G}
        - {\tfrac14 \eta T c_\parallel^2}.
    \]
\end{corollary}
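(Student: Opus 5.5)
We must prove Corollary~\ref{cor:QAT_loss2}. The plan is to compare $Q(w_T)$ with the river point $\pi_T = P_{\cM}(w_T)$, and then track how $f(\pi_k)$, with $\pi_k = P_{\cM}(w_k)$, decreases along the QAT trajectory. This reduces the loss bound to three pieces: a basin term, a projection term, and a descent term along the river.

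\textbf{Step 1: decompose the final loss.}
\begin{itemize}
\item By Theorem~\ref{thm:QAT-recovery}, $Q(w_T)\in\cT$. The Anisotropic Basin condition then gives $f(Q(w_T))\le f(P_{\cM}(Q(w_T)))+\epsilon_{\rm flat}$.
\item By $L_P$-Lipschitzness of $P_{\cM}$ and the rounding bound, $\|P_{\cM}(Q(w_T))-\pi_T\|\le L_P\rho$.
\item Bounded gradients ($\|\nabla f\|\le G$ on $U$) give $f(P_{\cM}(Q(w_T)))\le f(\pi_T)+\rho L_P G$. This is the source of the $\rho L_P G$ term. Along the way we must check that the segment stays in $U$; this should follow from the right-hand inequality of Assumption~\ref{ass:quantizer-compatibility}(iii).
\item On the other end, $w_0\in\cT$, so $f(\pi_0)\le f(w_{\rm fp})+\epsilon_{\rm flat}$.
\end{itemize}
It then suffices to show $f(\pi_{k+1})\le f(\pi_k)-\tfrac14\eta c_\parallel^2$ for each $k<T$.

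\textbf{Step 2: one-step descent along the river.}
\begin{itemize}
\item Expand $\pi_{k+1}-\pi_k=-\eta\nabla P_{\cM}(w_k)\nabla f(Q(w_k))+r_k$, with $\|r_k\|\le \tfrac12\kappa_{\cM}\eta^2G^2$ by the $C^{1,1}$ property in Assumption~\ref{ass:quantizer-compatibility}(ii).
\item Apply $L$-smoothness: $f(\pi_{k+1})\le f(\pi_k)+\langle\nabla f(\pi_k),\pi_{k+1}-\pi_k\rangle+\tfrac L2\|\pi_{k+1}-\pi_k\|^2$.
\item The key term is $\langle\nabla f(\pi_k),\nabla P_{\cM}(w_k)\nabla f(Q(w_k))\rangle$. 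Replace $\nabla P_{\cM}(w_k)$ by $\nabla P_{\cM}(\pi')$ with $\pi'=P_{\cM}(Q(w_k))$. At a point of $\cM$ the differential of the nearest-point projection is the orthogonal projector onto $T_{\cM}$. Since $\nabla f(\pi')$ is tangential by the River Center condition, this yields $\langle\nabla f(\pi_k),\nabla f(Q(w_k))\rangle$ up to errors.
\item The distance $\|w_k-\pi'\|$ is at most $R/\sqrt{\lambda_d}+\rho$ while $w_k$ remains in the tube (and more generally within the region where the iterates stay). This gives the operator error $\kappa_{\cM}(R/\sqrt{\lambda_d}+\rho)$, paired against $\|\nabla f(\pi_k)\|G$.
\item Next compare $\nabla f(\pi_k)$ with $\nabla f(\pi')$ through the normalized field $g$. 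The $\kappa$-Lipschitzness of $g$ and $\|\pi_k-\pi'\|\le L_P\rho$ give an error $\kappa L_P\rho\,\|\nabla f(\pi_k)\|G$.
\item Then invoke the hypothesis $\langle\nabla f(P_{\cM}(w)),\nabla f(w)\rangle>c_\parallel\|\nabla f(P_{\cM}(w))\|$ at $w=Q(w_k)$.
\item The two constraints $\kappa_{\cM}\le c_\parallel/[4G(R/\sqrt{\lambda_d}+\rho)]$ and $\kappa\le c_\parallel/(4\rho L_PG)$ each cost at most $c_\parallel/4$. The first-order decrease is therefore at least $\tfrac12\eta c_\parallel\|\nabla f(\pi_k)\|$. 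We also lower bound $\|\nabla f(\pi_k)\|$ by roughly $c_\parallel$, since $c_\parallel\|\nabla f(\pi)\|<\|\nabla f(\pi)\|\,G$ and related inequalities; this step will require care.
\end{itemize}

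\textbf{Step 3: absorb the second-order terms.}
\begin{itemize}
\item The quadratic term is at most $\tfrac L2\eta^2G^2(\kappa_{\cM}(R/\sqrt{\lambda_d}+\rho)+1)^2$, including the remainder $r_k$.
\item The step-size condition $\eta\le c_\parallel^2/[2LG^2(\cdots)^2]$ makes this at most $\tfrac14\eta c_\parallel^2$. This leaves a net decrease of $\tfrac14\eta c_\parallel^2$ per step.
\item Summing over $k<T$ and combining with Step 1 gives the claim.
\end{itemize}

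\textbf{Main obstacle.} The hardest part is Step 2. We need a clean lower bound of order $c_\parallel^2$ on the first-order decrease, rather than one of order $c_\parallel\|\nabla f(\pi_k)\|$. We also need to justify that $w_k$ stays in a region where $\|w_k-P_{\cM}(w_k)\|\lesssim R/\sqrt{\lambda_d}+\rho$ and where $\nabla P_{\cM}$ on $\cM$ equals the tangent projector. For the first issue, the plan is to use the hypothesis twice: it bounds the inner product from below by $c_\parallel\|\nabla f(\pi)\|$, and Cauchy--Schwarz forces $\|\nabla f(Q(w_k))\|>c_\parallel$. If needed, we will instead bound the decrease by $\tfrac12\eta\langle\nabla f(\pi'),\nabla f(Q(w_k))\rangle$ and argue that this exceeds $\tfrac12\eta c_\parallel^2$, using the tangential-projection identity at $\pi'$. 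For the second issue, the plan is to reuse the invariance argument from the proof of Theorem~\ref{thm:QAT-recovery}, which keeps the iterates in the enlarged tube.
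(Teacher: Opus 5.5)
Your skeleton matches the paper's: the basin condition at both ends, a $\rho L_P G$ comparison between $P_{\cM}(Q(w_T))$ and $\pi_T$, a per-step decrease of $f(\pi_k)$ via $L$-smoothness and a linearization of $P_{\cM}$, the $\kappa$-Lipschitz comparison of $g(\pi_k)$ with $g(P_{\cM}(Q(w_k)))$, and the step-size condition for the quadratic term. However, two steps in Step~2 do not close as written.

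\textbf{The lower bound on $\|\nabla f(\pi_k)\|$.} The bound $\|\nabla f(\pi_k)\|>c_\parallel$, which you leave open, is exactly what turns $\tfrac12\eta c_\parallel\|\nabla f(\pi_k)\|$ into $\tfrac12\eta c_\parallel^2$. Your candidate routes do not supply it. The inequality $c_\parallel<G$ says nothing about $\|\nabla f(\pi_k)\|$, and a lower bound on $\|\nabla f(Q(w_k))\|$ is the wrong quantity. Switching to $\langle\nabla f(\pi'),\nabla f(Q(w_k))\rangle$ would still need $\|\nabla f(\pi')\|>c_\parallel$. It would also add an unbudgeted $L$-smoothness error from changing the gradient magnitude. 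The missing observation is that the hypothesis holds at every $w\in U\supset\cM$. At $w=\pi_k$ we have $P_{\cM}(\pi_k)=\pi_k$, so $\|\nabla f(\pi_k)\|^2>c_\parallel\|\nabla f(\pi_k)\|$, i.e.\ $\|\nabla f(\pi_k)\|>c_\parallel$. Your own Cauchy--Schwarz argument gives the same thing if you apply it at $w=\pi_k$ rather than $w=Q(w_k)$. This also shows $g$ is well defined.

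\textbf{The $\kappa_{\cM}$ accounting.} Your bookkeeping overspends the budget $\kappa_{\cM}\le c_\parallel/[4G(R/\sqrt{\lambda_d}+\rho)]$, for two reasons.
\begin{itemize}
\item Swapping $\nabla P_{\cM}(w_k)$ for $\nabla P_{\cM}(\pi')$, with $\pi'=P_{\cM}(Q(w_k))$, costs $\kappa_{\cM}G\|w_k-\pi'\|$ per unit of $\eta\|\nabla f(\pi_k)\|$. But $\|w_k-\pi'\|$ is only bounded by $R/\sqrt{\lambda_d}+\min\{L_P,2\}\rho$, not by $R/\sqrt{\lambda_d}+\rho$ in general.
\item The cross term $\langle\nabla f(\pi_k),r_k\rangle\le\tfrac12\kappa_{\cM}\eta^2G^2\|\nabla f(\pi_k)\|$ is linear in $\|\nabla f(\pi_k)\|$, so it must be paid from the same first-order budget. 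It cannot be absorbed into the quadratic term: the step-size condition already spends the whole $\tfrac14\eta c_\parallel^2$ allowance on $\tfrac L2\|\pi_{k+1}-\pi_k\|^2$.
\end{itemize}
Together these exceed $c_\parallel/4$, so your route ends with a constant smaller than $\tfrac14$.

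The repair is to linearize with the tangent projector at $\pi_k$ itself. Since $\nabla P_{\cM}(\pi)=P_{T_{\cM}(\pi)}$ for $\pi\in\cM$, the $C^{1,1}$ bound gives $\|P_{\cM}(w+\delta)-P_{\cM}(w)-P_{T_{\cM}(\pi)}\delta\|\le\kappa_{\cM}(\|w-\pi\|\|\delta\|+\tfrac12\|\delta\|^2)$ with $\pi=P_{\cM}(w)$. Three things then follow.
\begin{itemize}
\item The River Center condition at $\pi_k$ makes the main term exactly $-\eta\langle\nabla f(\pi_k),\nabla f(Q(w_k))\rangle$.
\item The entire $\kappa_{\cM}$ error is at most $\eta G\|\nabla f(\pi_k)\|\kappa_{\cM}(R/\sqrt{\lambda_d}+\tfrac12\eta G)\le\tfrac14\eta c_\parallel\|\nabla f(\pi_k)\|$, using $\eta G\le\rho$.
\item The point $P_{\cM}(Q(w_k))$ enters only through $g$, at cost $\kappa\rho L_PG\le c_\parallel/4$.
\end{itemize}

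A minor point in Step~1: keeping the segment $[P_{\cM}(Q(w_T)),\pi_T]$ inside $U$ needs $L_P\rho<R_U$. That uses both inequalities of Assumption~\ref{ass:quantizer-compatibility}(iii), not only the right one. Alternatively, apply the mean value theorem to $f\circ P_{\cM}$ on the segment $[w_T,Q(w_T)]\subset U$.
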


The negative term in Corollary~\ref{cor:QAT_loss2} shows that, in this
suboptimal regime, QAT can act as a descent method starting from $w_{\rm fp}$, up to the flat-basin and quantization-error terms.  This
connects our geometric view to prior understanding of STE~\cite{yin2019understanding} as a descent direction for the population loss in two-layer networks with activation quantization; our result isolates a complementary mechanism in which the gradient is evaluated at the deployed quantized weights to correct quantization incompatibility.

\section{Experiments}

We evaluate the PTQ-failure and QAT-recovery mechanism predicted by our landscape analysis on both vision and language tasks. Simulation results for the matrix factorization in Example~\ref{ex:mf_sec2} are deferred to Appendix~\ref{app:mf-simulations}.

\subsection{ResNet and DeiT on Image Classification Benchmarks}
\label{sec:exp-vision}

We first study image classification models. We quantize the convolutional and linear weights of ResNet-20/56~\cite{he2016deep} on CIFAR-10, and the transformer-block linear weights of the 5M-parameter DeiT-Tiny model~\cite{touvron2021training} on ImageNet. We compare round-to-nearest (RTN), AdaRound~\cite{nagel2020up} or GPTQ, QAT, and equal-budget full-precision fine-tuning followed by PTQ. In all experiments, RTN and QAT use the same PTQ-calibrated per-channel or groupwise quantization grid. Thus, the comparison isolates the effect of applying the grid once as a post-training projection versus keeping the same grid fixed during STE-based QAT.
Detailed settings and quantitative comparisons across bitwidths and random seeds are reported in Appendix~\ref{app:exp-details} and Tables~\ref{tab:cifar-main}--\ref{tab:deit-main}. The advantage of QAT over equal-budget full-precision fine-tuning followed by PTQ is most pronounced at low bitwidths, where the PTQ perturbation is large enough to leave the low-loss basin.

\begin{figure}[t]
\centering
\setlength{\tabcolsep}{0pt}
\begin{tabular}{@{}c@{\hspace{0.02\linewidth}}c@{\hspace{0.02\linewidth}}c@{}}
\multicolumn{3}{c}{\small{ResNet-56}}\\%[0.05em]
\includegraphics[height=0.17\textheight]{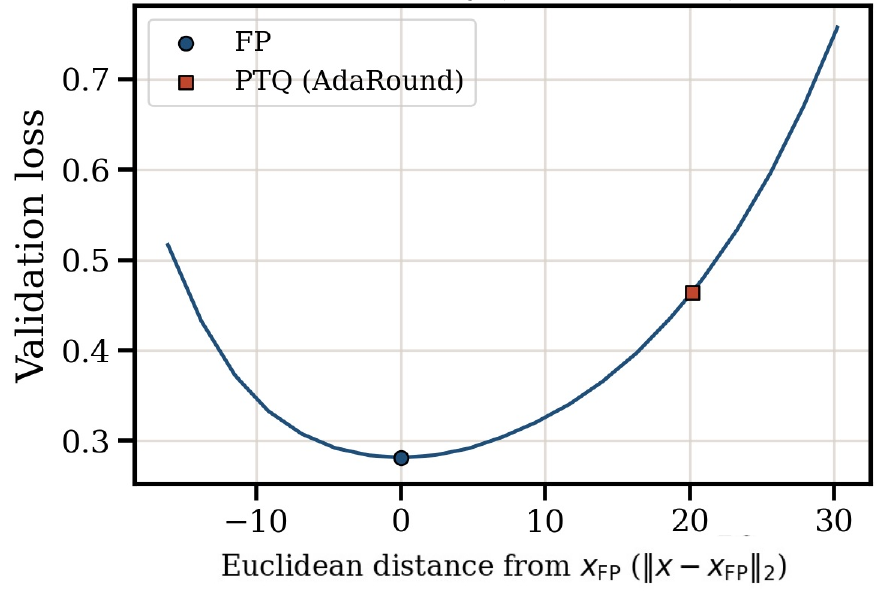} &
\includegraphics[height=0.17\textheight]{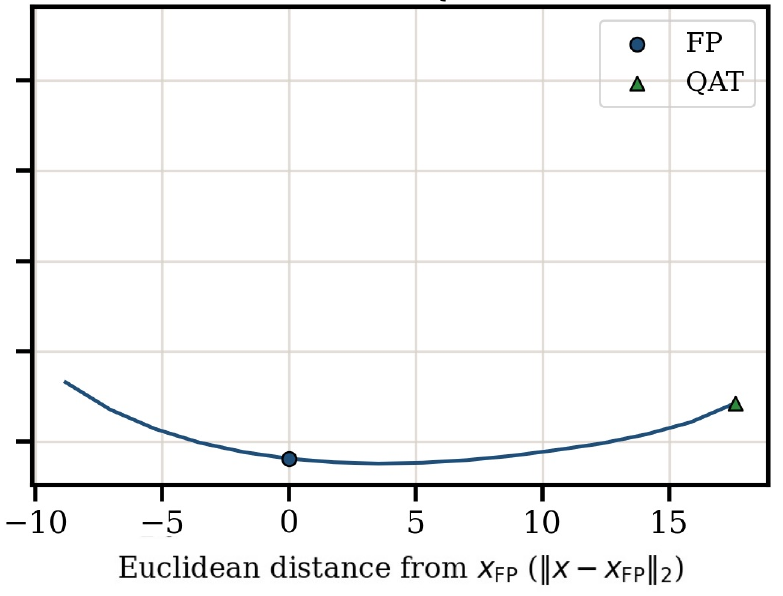} &
\includegraphics[height=0.17\textheight]{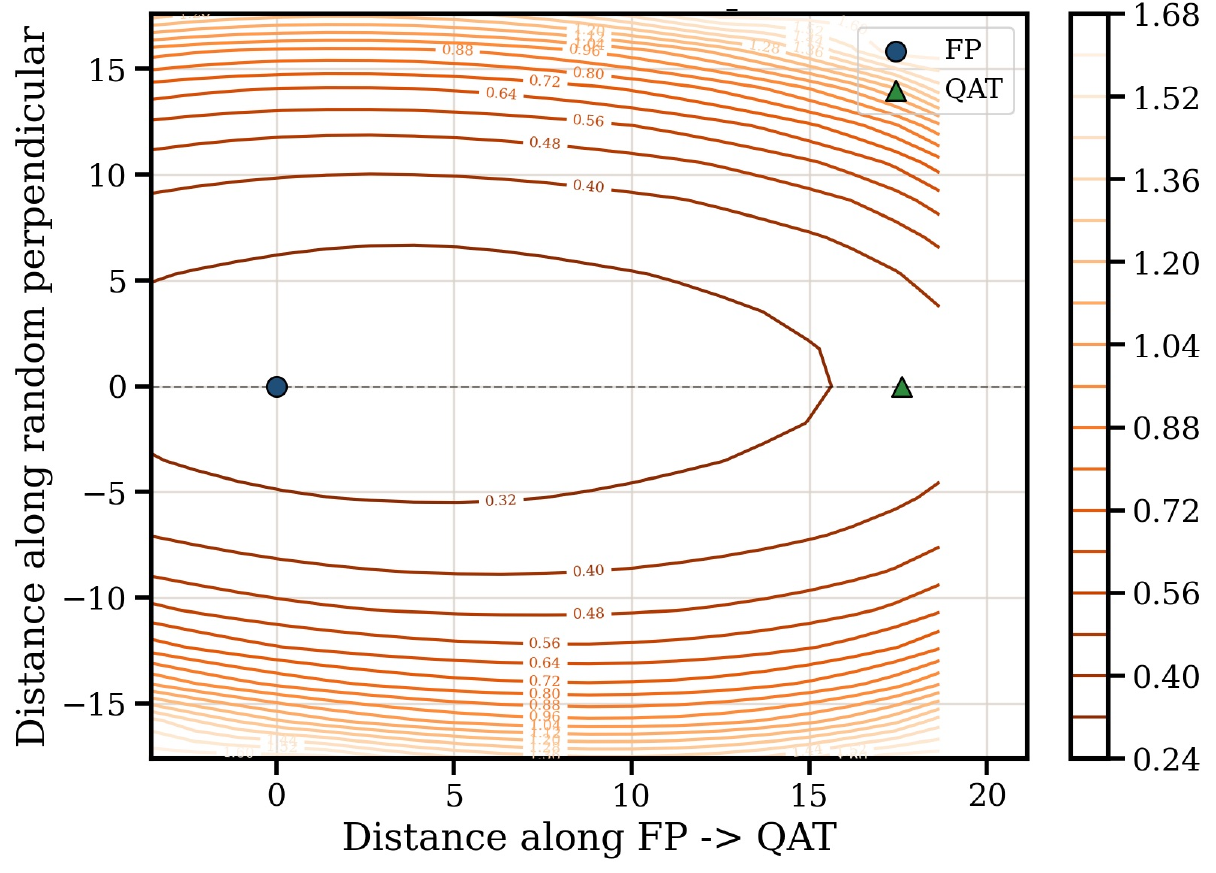}\\[-0.1em]
{\tiny (a) FP $\to$ PTQ} & {\tiny (b) FP $\to$ QAT} & {\tiny (c) 2D landscape}\\[0.5em]
\multicolumn{3}{c}{\small{DeiT-Tiny}}\\%[0.05em]
\includegraphics[height=0.17\textheight]{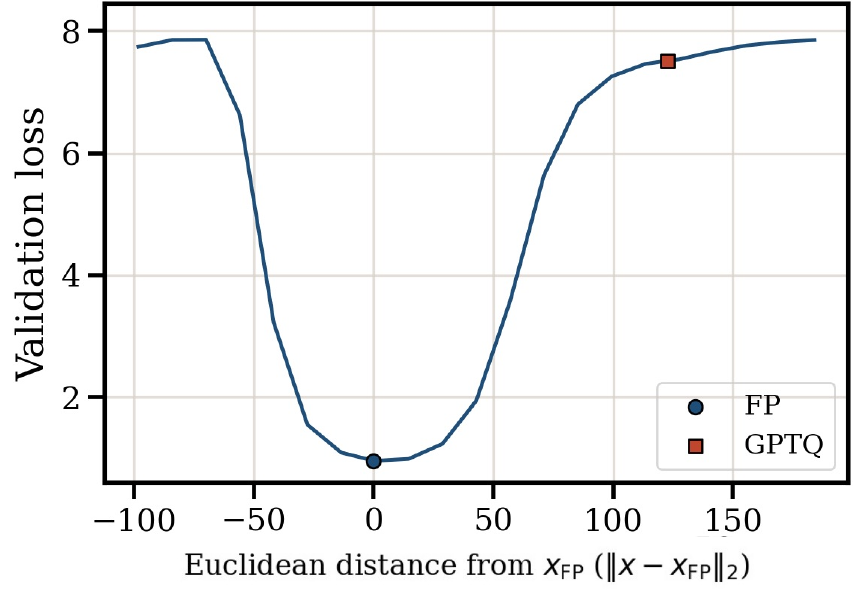} &
\includegraphics[height=0.17\textheight]{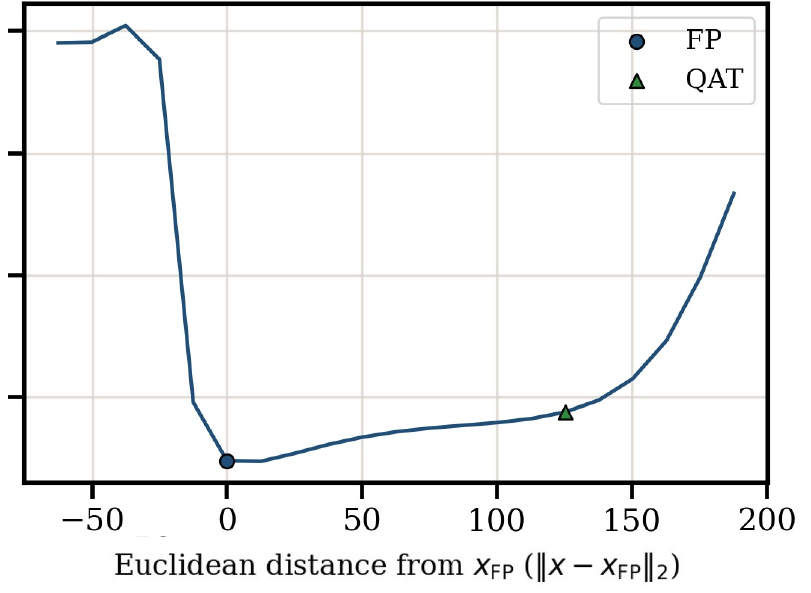} &
\includegraphics[height=0.17\textheight]{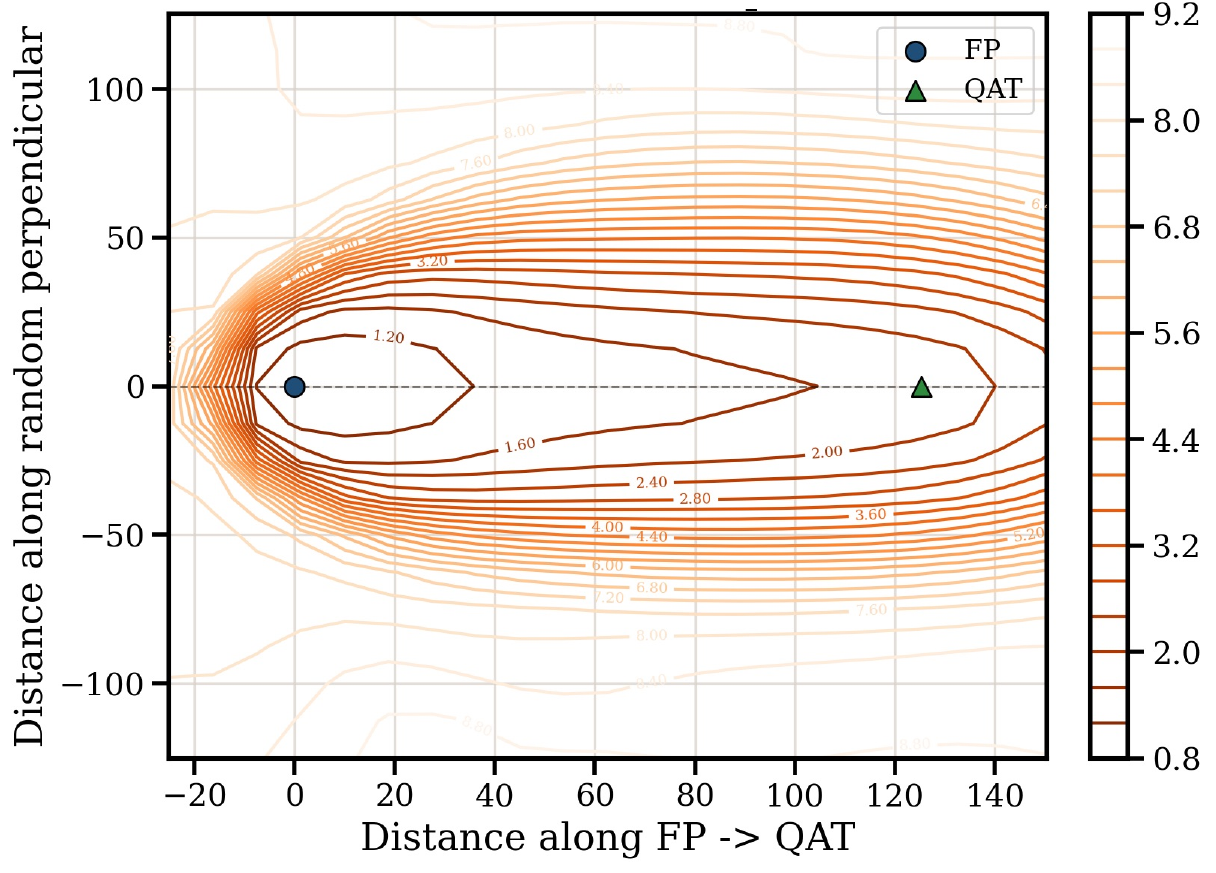}\\[-0.1em]
{\tiny (d) FP $\to$ PTQ} & {\tiny (e) FP $\to$ QAT} & {\tiny (f) 2D landscape}
\end{tabular}
\caption{\textbf{ResNet/CIFAR-10 and DeiT/ImageNet landscape diagnostics under 2-bit quantization.}
For ResNet-56, one-dimensional loss profiles from the FP checkpoint toward (a)~the AdaRound anchor and (b)~the QAT anchor; (c) Two-dimensional loss contour in the plane spanned by the FP$\to$QAT direction and a randomly generated orthogonal direction.
For DeiT-Tiny, (d)--(f) show the same diagnostics.
}
\label{fig:vision-w2-profiles}
\end{figure}

\FloatBarrier

Figure~\ref{fig:vision-w2-profiles} shows representative landscape diagnostics under 2-bit quantization. For each model, the first two panels interpolate from the full-precision checkpoint toward the PTQ and QAT endpoints, while the third panel shows a two-dimensional slice spanned by the FP$\to$QAT direction and a randomly generated orthogonal direction. The same flat-basin-plus-sharp-valley profile is observed across five independently sampled normal directions. The PTQ direction quickly leaves the low-loss region, whereas the QAT direction follows a substantially lower-loss quantized path. This contrast, together with the fact that equal-budget full-precision fine-tuning followed by PTQ does not close the gap, is consistent with the regime-specific prediction of Section~\ref{sec:Quant_under_River-valley-basin}: QAT is beneficial when the latent iterate has already settled into the basin, so that the quantized gradient probes the valley wall, rather than merely continuing the cosine learning rate tail along the river. %

\subsection{Llama on SlimPajama}
\label{sec:exp-Llama}

We next test the same mechanism in language modeling. We pretrain a 213M-parameter Llama-style decoder on SlimPajama for $60{,}000$ iterations with a cosine learning rate schedule. We then deploy weight-only FP4 quantization with the E2M1 format on every linear layer inside the transformer blocks, while keeping the token embedding, language-model head, and RMSNorm parameters in bf16. GPTQ is used as the PTQ baseline. As in Section~\ref{sec:exp-vision}, QAT uses the GPTQ-fitted FP4 quantizer as a fixed STE quantizer. Thus, PTQ and QAT are evaluated on the same FP4 grid, and any difference reflects weight adaptation rather than a different quantizer fit. We initialize QAT from the full-precision checkpoint and run a short constant-learning-rate STE phase. Detailed settings are provided in Appendix~\ref{app:exp-details}.

Figure~\ref{fig:Llama-landscape} reports one-dimensional loss profiles from the converged full-precision checkpoint toward the GPTQ and QAT anchors, together with a two-dimensional contour plot around the FP$\to$QAT direction. The QAT displacement is much shorter than the GPTQ displacement in parameter-space Euclidean norm. Moreover, the two displacements are nearly orthogonal, forming an angle of approximately $92^\circ$. Therefore, QAT is not a small correction along the GPTQ rounding direction; instead, it finds a distinct and much shorter adaptation direction.
Panel~\ref{fig:Llama-1d-ptq} shows that the FP$\to$GPTQ profile rises monotonically and has no local minimum near the GPTQ deployment point, which is consistent with the signature of rounding outside the basin. By contrast, Panel~\ref{fig:Llama-1d-qat} shows that the FP$\to$QAT profile first drops below the full-precision baseline over an interior interval and rises only after passing the QAT anchor. This indicates that QAT moves into a lower-loss region before eventually leaving it. Panel~\ref{fig:Llama-2d} further visualizes the local geometry in the plane defined by the QAT direction and a random orthogonal direction: the loss remains relatively flat along the QAT path and increases away from this low-loss band.

\begin{figure}[H]
\centering
\subfigure[FP $\to$ PTQ]{
  \includegraphics[width=0.3\linewidth]{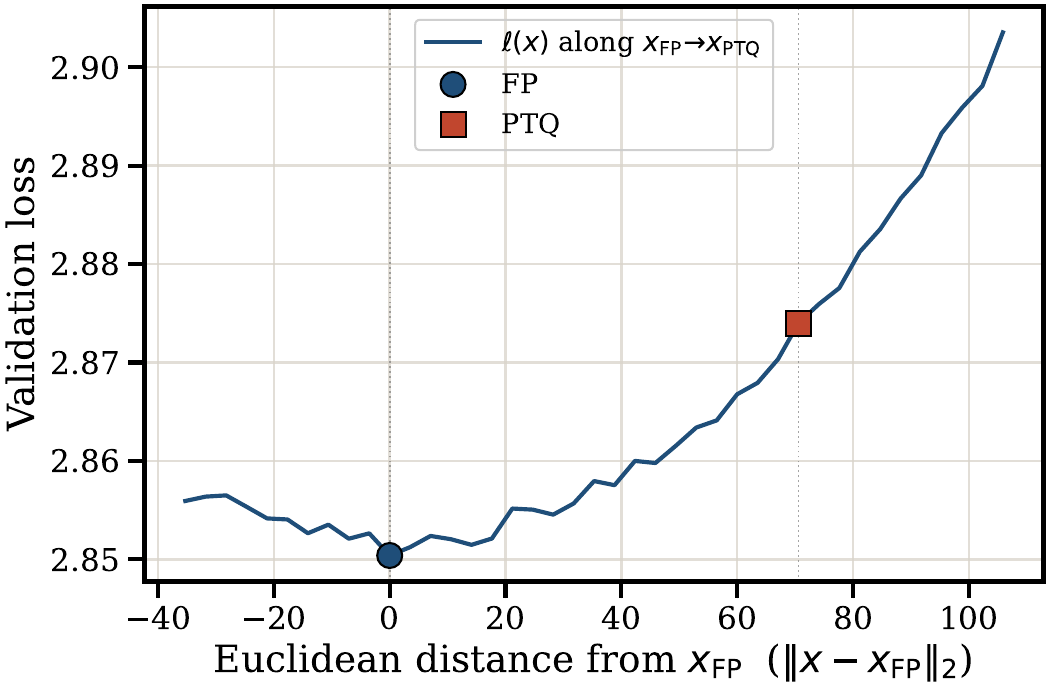}
  \label{fig:Llama-1d-ptq}
}
\subfigure[FP $\to$ QAT]{
  \includegraphics[width=0.3\linewidth]{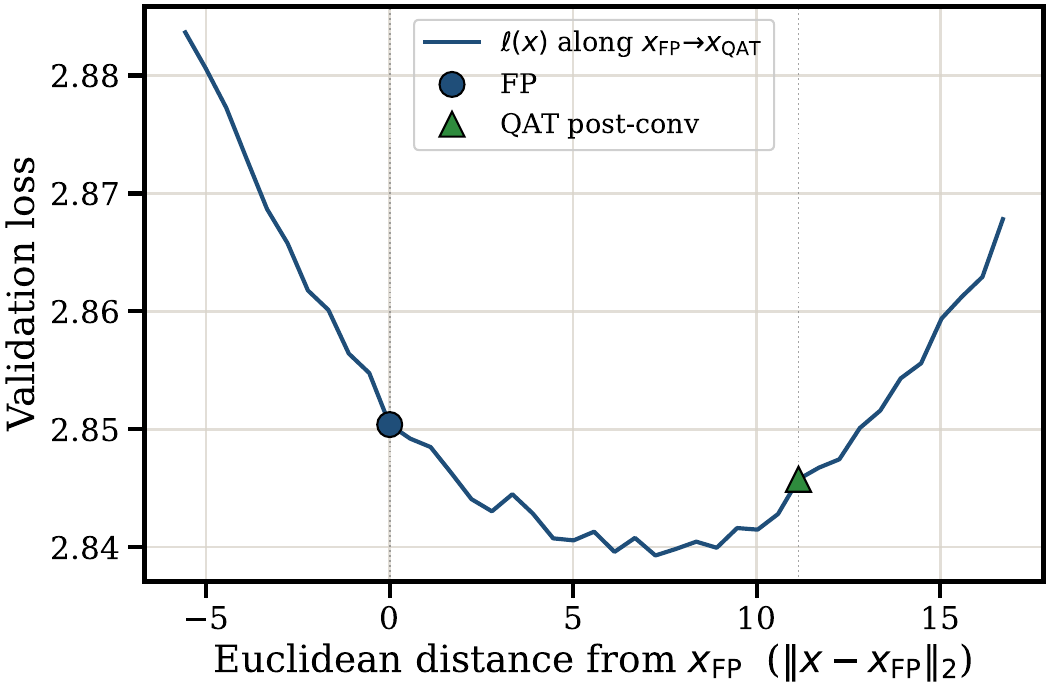}
  \label{fig:Llama-1d-qat}
}
\subfigure[2D landscape]{
  \includegraphics[width=0.3\linewidth]{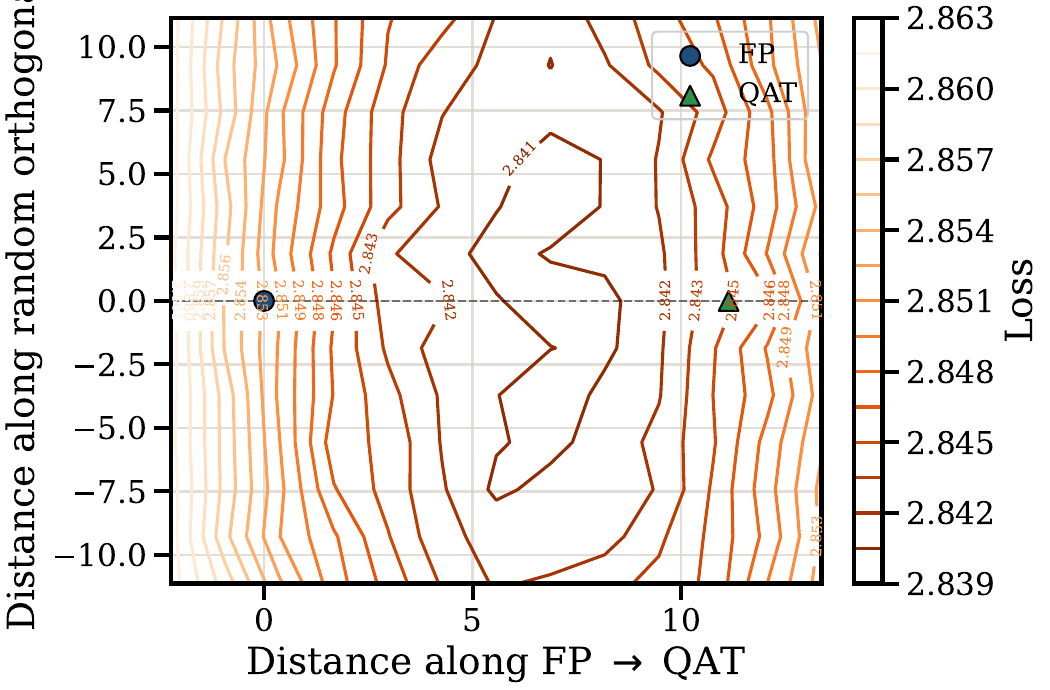}
  \label{fig:Llama-2d}
}
\caption{\textbf{Llama/SlimPajama landscape diagnostics at FP4.}
One-dimensional loss profiles from the converged FP checkpoint toward
(a)~the GPTQ anchor and (b)~the post-convergence QAT anchor.
(c)~Two-dimensional loss contours in the plane spanned by the FP$\to$QAT
direction and a randomly generated orthogonal direction.}
\label{fig:Llama-landscape}
\end{figure}

\FloatBarrier

These diagnostics extend the PTQ-failure/QAT-recovery pattern to language modeling at LLM scale. The qualitative behavior is again consistent with the river--valley--basin geometry: GPTQ moves the deployed model toward a high-loss region, whereas QAT follows a different, basin-aligned direction and reaches a nearby low-loss quantized solution. We emphasize that this Llama experiment is a single-seed diagnostic study intended to support the generality of the proposed mechanism, rather than a controlled comparison across bitwidths and seeds.

\FloatBarrier
\section{Discussion}
Our theoretical and empirical analysis makes several testable predictions on the model quantization. First, PTQ failure
should be most visible when the quantization grid is comparable to the basin
width: coarser grids are more likely to cross the boundary, while sufficiently
fine grids should remain in the flat band. Second, during successful QAT, the
loss of the deployed quantized model should improve before one observes a large
change in the latent full-precision loss, reflecting a correction of
quantization compatibility rather than ordinary full-precision optimization.
Third, equal-budget full-precision fine-tuning followed by PTQ can remain fragile when its updates move mainly along the river and do not reduce the normal quantization error. 

\textbf{Limitations.}
Our landscape model is local, and our theory only gives sufficient rather than necessary conditions for QAT recovery. Empirically, our experiments rely on slice-based diagnostics of the loss landscape, which illustrate the phenomenon but do not offer a complete verification of the global, high-dimensional geometry. Furthermore, our analysis of Hessian-based PTQ abstracts complex, practical layerwise techniques into a simplified local second-order proxy. Consequently, this model does not account for practical deployment factors such as activation quantization, dynamic scale learning, gradient clipping, or specific hardware constraints. Lastly, while the Llama experiments confirm that our proposed mechanism operates at the scale of large language models, they are intended as proof-of-concept evidence rather than a comprehensive study of scaling laws across diverse model sizes and bitwidths.

\bibliography{quant}

\newpage
\appendix

\section{Experimental Details}
\label{app:exp-details}

\paragraph{ResNet/CIFAR-10.}
Each full-precision ResNet is trained for 200 epochs with SGD, momentum 0.9,
batch size 128, weight decay $2\cdot 10^{-4}$, initial learning rate 0.1, and
cosine annealing.  We quantize all convolutional and linear weights with
per-output-channel signed uniform scales.  AdaRound uses 32 calibration batches of size 128, i.e.\ 4096 calibration images, and 10,000 rounding-optimization iterations.  RTN uses the same AdaRound-fitted per-channel scales but no learned rounding.  QAT starts from the epoch-180
checkpoint, keeps the AdaRound quantizer fixed, and trains for 20 epochs with
SGD at learning rate $10^{-4}$, dropping by 0.5 at epochs 10 and 16.  The
equal-budget control continues full-precision training from epoch 180 to 200 and
then applies AdaRound.  All ResNet-20 and ResNet-56 runs use a single NVIDIA
RTX A5000 GPU per run.

\paragraph{DeiT/ImageNet.}
For DeiT-Tiny, we quantize the transformer-block linear layers at W2/W3/W4 and
leave the classification head full precision.  GPTQ uses 128 calibration images,
group size 128, and no activation ordering.  RTN uses the same fitted grid/group
scales without GPTQ error correction.  QAT starts from the pretrained
full-precision checkpoint with the GPTQ quantizer frozen and runs for 10 epochs
with AdamW at learning rate $5\cdot 10^{-4}$.  All DeiT/ImageNet runs use four
NVIDIA A40 GPUs per run.

\paragraph{Llama/SlimPajama-6B.}
We pretrain a $213$M-parameter Llama-style decoder ($24$ layers, $12$ heads,
hidden size $768$, RMSNorm) on SlimPajama-6B for $60{,}000$ iterations with
AdamW (weight decay $0.1$, $\beta\!=\!(0.9,0.95)$, gradient clip $1.0$), a
cosine schedule with base learning rate $10^{-3}$ and $300$ warmup steps,
sequence length $512$, batch size $50\!\times\!4$ ($50$ per-device with
gradient-accumulation $4$, total batch $200$) on a single H200, and bf16.
GPTQ is applied weight-only to every \texttt{nn.Linear} inside the
transformer blocks — the
\texttt{q\_proj}/\texttt{k\_proj}/\texttt{v\_proj}/\texttt{o\_proj}
attention projections and the
\texttt{gate\_proj}/\texttt{up\_proj}/\texttt{down\_proj} MLP projections
— at FP4 (E2M1, with levels
$\{0,\pm 0.5,\pm 1,\pm 1.5,\pm 2,\pm 3,\pm 4,\pm 6\}$) with
per-output-channel signed scales, input-column groups of size $128$, GPTQ
block size $128$, and percent-damp $0.01$; the token embedding, the
language-model head, and the RMSNorm parameters remain in bf16.
Calibration draws $128$ sequences of length $512$ from the SlimPajama train
split with a fixed seed.  The GPTQ-fitted per-Linear FP4 quantizer (scales
and group structure) is persisted and reused as the STE fake-quant in both
QAT regimes, so PTQ and QAT share an identical FP4 grid.  Continuation QAT
resumes from the iter-$48{,}000$ ($80\%$) FP checkpoint and runs the same
cosine schedule (and the same data ordering, controlled by the seed) to
iter $60{,}000$, i.e.\ $12{,}000$ STE iterations along the cosine LR tail.
Post-convergence QAT initializes weights from the iter-$60{,}000$ FP
checkpoint with a fresh optimizer, uses a constant learning rate of
$10^{-5}$ with no warmup or decay, and trains for $3{,}000$ STE iterations
($5\%$ of pretraining).  All runs use a single seed; for the river-cross diagnostic
we additionally average over three random perpendicular directions.

\section{Additional ResNet/CIFAR-10 Diagnostics}
\label{app:cifar-extra}

\begin{table}[H]
\centering
\caption{{ResNet/CIFAR-10 equal-budget comparison.}
Each method entry reports mean validation loss / top-1 accuracy over five seeds,
with nonzero standard deviations in parentheses.  The depth column reports the
epoch-180 full-precision result.  FP-200+AdaRound is the equal-budget
full-precision control followed by AdaRound PTQ.  RTN and QAT use the same
AdaRound-calibrated weight grid as the AdaRound baseline.  Bold marks the best
quantized result in each row, separately for loss and accuracy.}
\label{tab:cifar-main}
\resizebox{\linewidth}{!}{
\begin{tabular}{cc|cccc}
\toprule
Depth (FP-180) & Bits & RTN & AdaRound & QAT & FP-200 + AdaRound \\
\midrule
\multirow{3}{*}{\begin{tabular}{c}20\\{\scriptsize (0.329 / 91.96)}\end{tabular}}
& W2 & 9.365 / 10.41 & 0.497 (0.017) / 86.62 (0.49) & {\bf 0.407} (0.022) / {\bf 87.92} (0.67) & 0.576 (0.030) / 84.51 (0.85) \\
& W3 & 1.201 / 73.96 & 0.358 (0.004) / {\bf 91.19} (0.02) & {\bf 0.334} (0.003) / {\bf 91.19} (0.13) & 0.362 (0.002) / 91.08 (0.09) \\
& W4 & 0.430 / 89.68 & 0.337 (0.001) / 91.79 (0.09) & 0.335 (0.001) / 91.75 (0.07) & {\bf 0.330} (0.001) / {\bf 91.87} (0.08) \\
\midrule
\multirow{3}{*}{\begin{tabular}{c}56\\{\scriptsize (0.281 / 93.50)}\end{tabular}}
& W2 & 5.212 / 13.88 & 0.448 (0.010) / 88.83 (0.30) & {\bf 0.337} (0.006) / {\bf 90.50} (0.17) & 0.449 (0.015) / 89.07 (0.27) \\
& W3 & 0.618 / 86.21 & 0.314 (0.006) / 92.77 (0.15) & {\bf 0.288} (0.005) / 92.80 (0.10) & 0.299 (0.005) / {\bf 93.04} (0.14) \\
& W4 & 0.334 / 92.60 & 0.289 (0.001) / 93.30 (0.09) & {\bf 0.278} (0.001) / 93.42 (0.04) & 0.280 (0.001) / {\bf 93.46} (0.05) \\
\bottomrule
\end{tabular}
}
\end{table}

Figure~\ref{fig:cifar-profiles-extra} provides the additional ResNet-20 seed-1
interpolation diagnostics for W3/W4, and Figure~\ref{fig:cifar-profiles}
provides the corresponding ResNet-56 profiles.  The W2 diagnostics are shown in
the main body in Figure~\ref{fig:vision-w2-profiles}.

\begin{figure}[H]
\centering
\begin{minipage}[t]{0.36\linewidth}\centering
  \includegraphics[width=\linewidth]{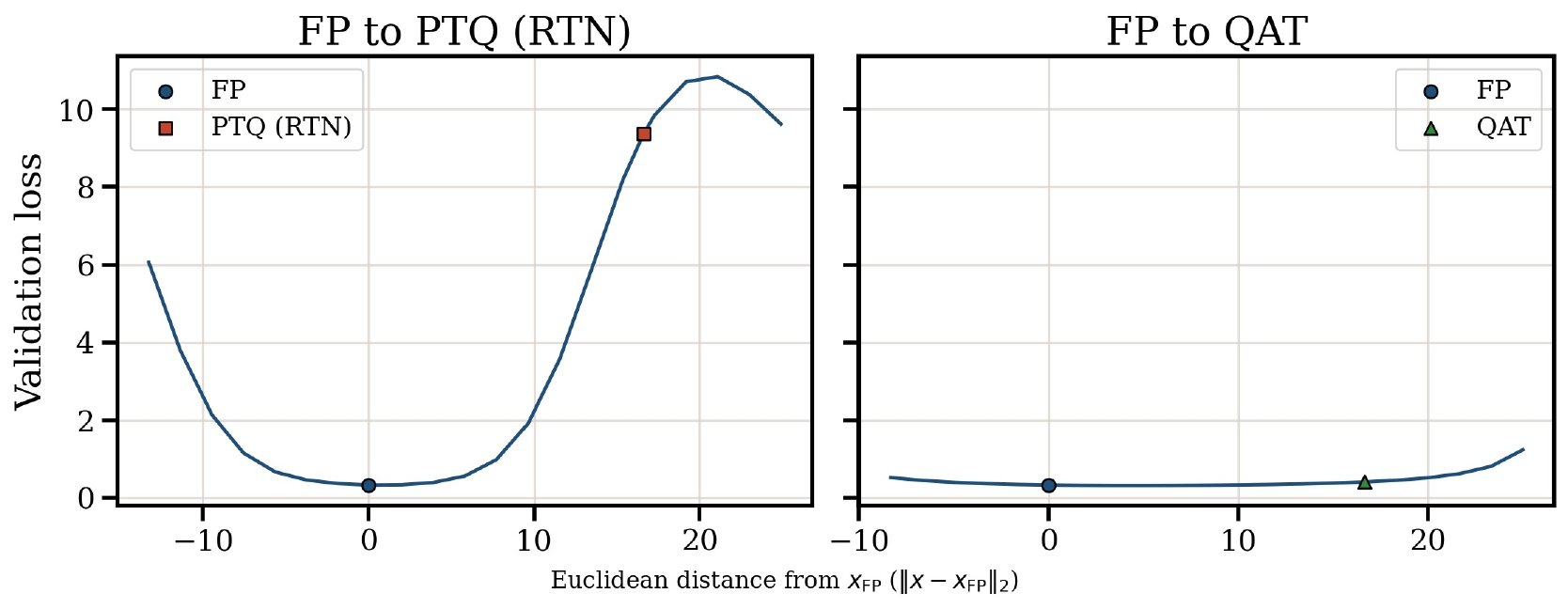}\\[-0.2em]
  {\tiny W2: FP-180 $\to$ RTN/QAT 1D profile}
\end{minipage}\hspace{0.006\linewidth}
\begin{minipage}[t]{0.36\linewidth}\centering
  \includegraphics[width=\linewidth]{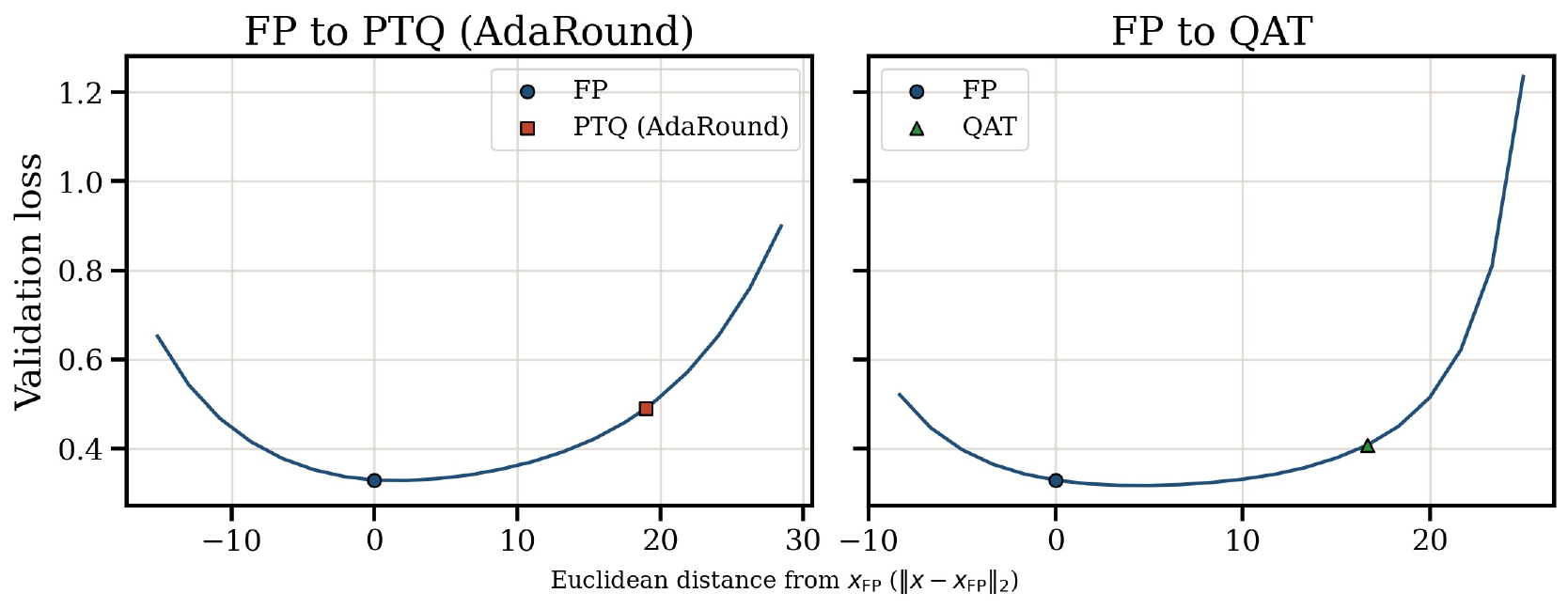}\\[-0.2em]
  {\tiny W2: FP-180 $\to$ AdaRound/QAT 1D profile}
\end{minipage}\hspace{0.025\linewidth}
\begin{minipage}[t]{0.2\linewidth}\centering
  \includegraphics[width=\linewidth]{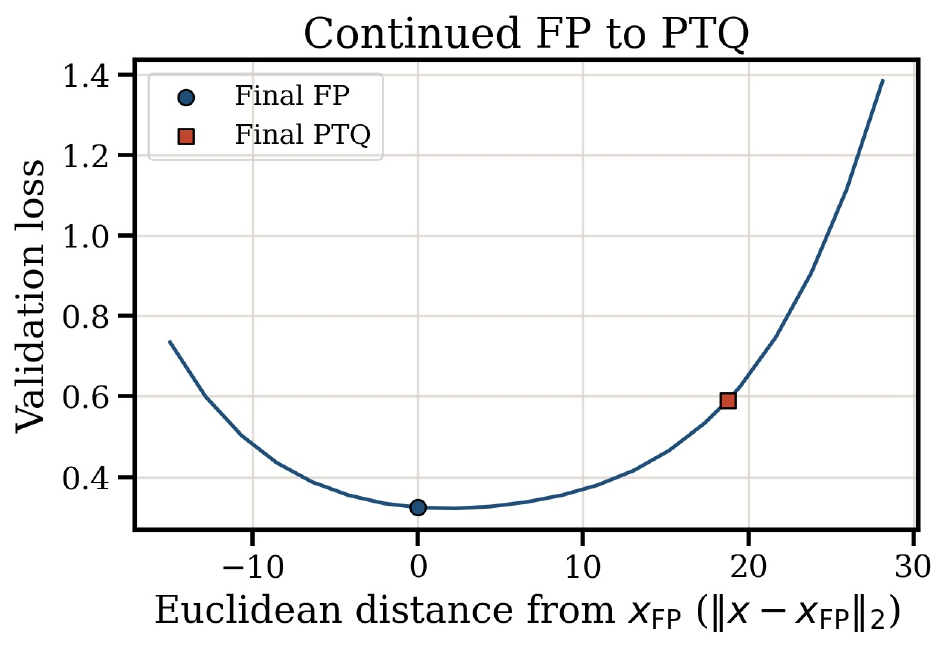}\\[-0.2em]
  {\tiny W2: FP-200 $\to$ AdaRound}
\end{minipage}\\[0.35em]
\begin{minipage}[t]{0.36\linewidth}\centering
  \includegraphics[width=\linewidth]{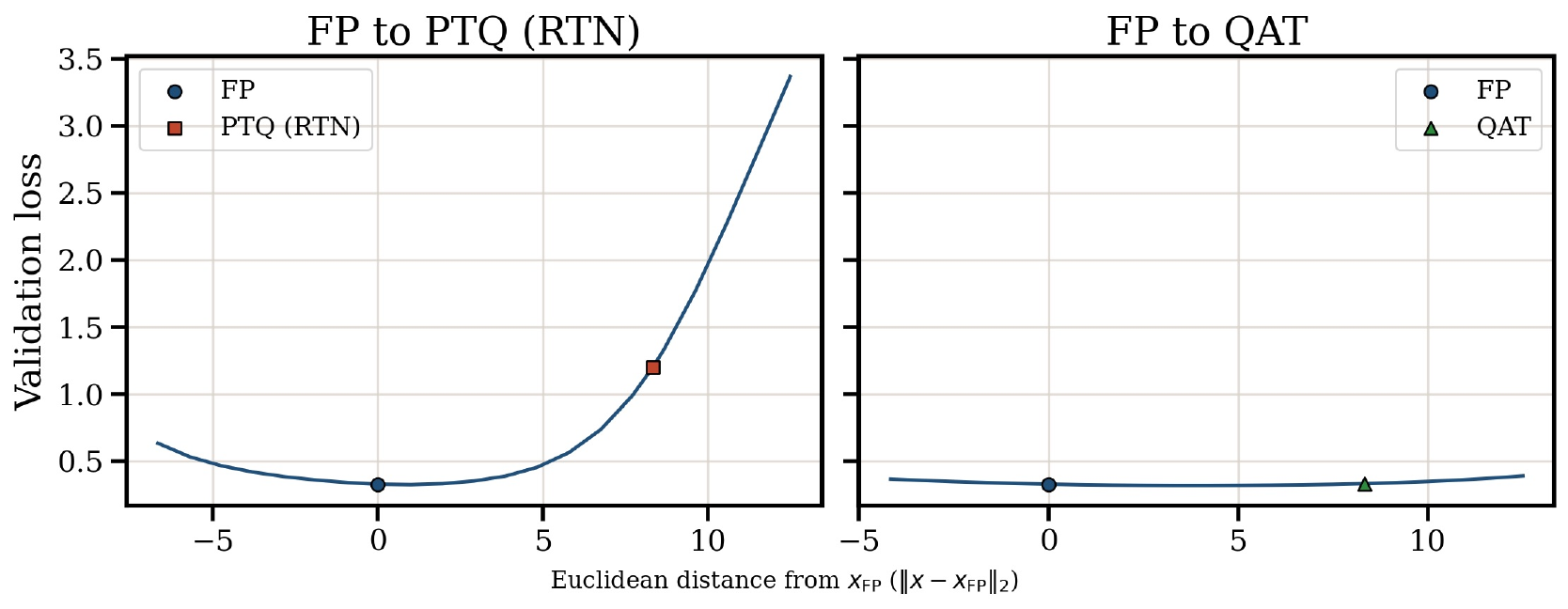}\\[-0.2em]
  {\tiny W3: FP-180 $\to$ RTN/QAT 1D profile}
\end{minipage}\hspace{0.006\linewidth}
\begin{minipage}[t]{0.36\linewidth}\centering
  \includegraphics[width=\linewidth]{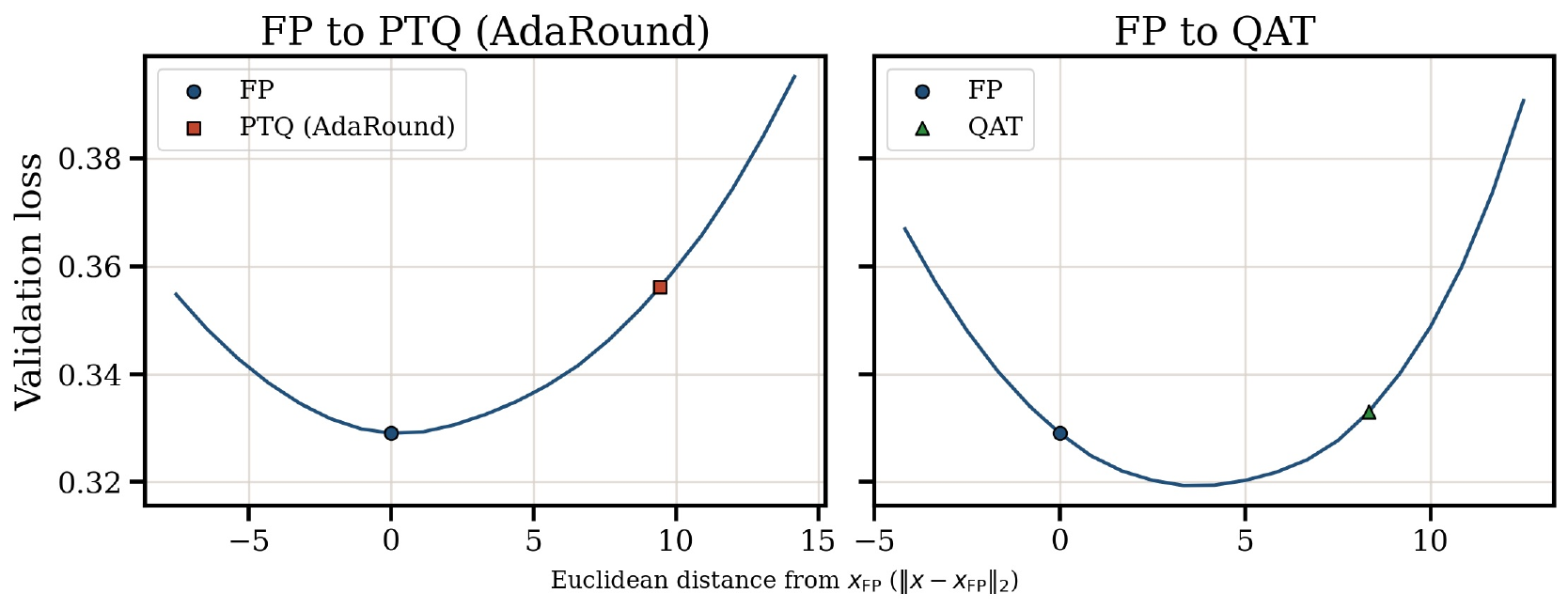}\\[-0.2em]
  {\tiny W3: FP-180 $\to$ AdaRound/QAT 1D profile}
\end{minipage}\hspace{0.025\linewidth}
\begin{minipage}[t]{0.2\linewidth}\centering
  \includegraphics[width=\linewidth]{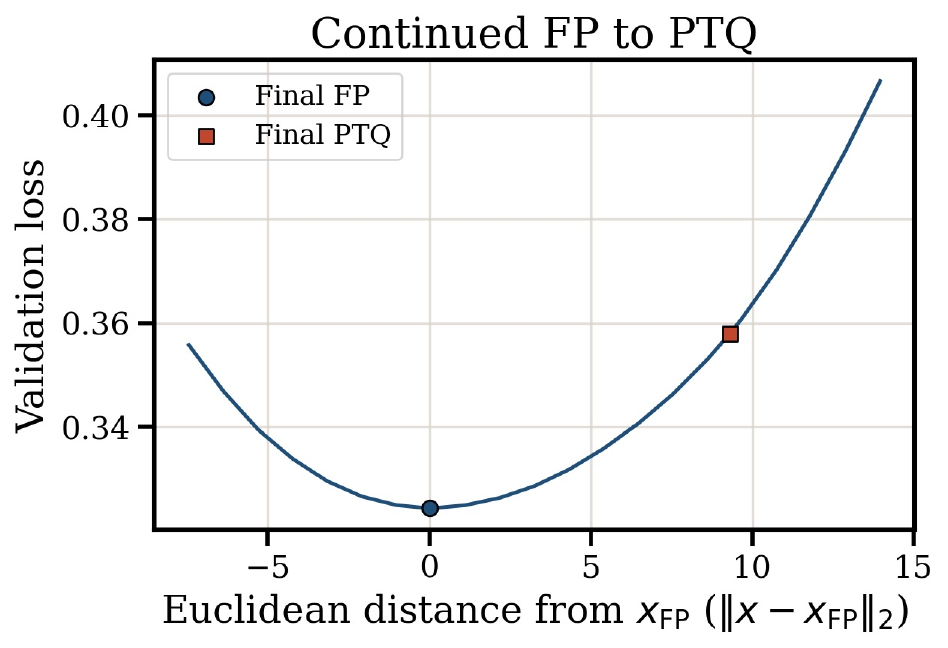}\\[-0.2em]
  {\tiny W3: FP-200 $\to$ AdaRound}
\end{minipage}\\[0.35em]
\begin{minipage}[t]{0.36\linewidth}\centering
  \includegraphics[width=\linewidth]{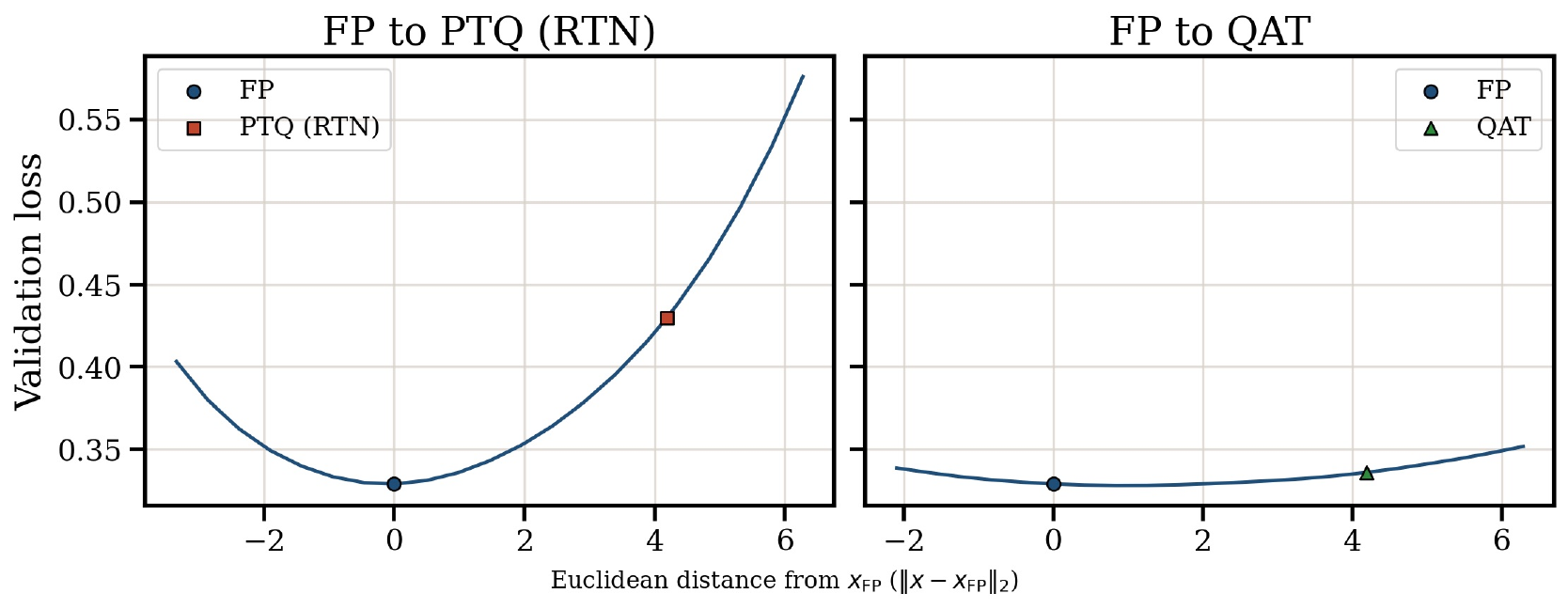}\\[-0.2em]
  {\tiny W4: FP-180 $\to$ RTN/QAT 1D profile}
\end{minipage}\hspace{0.006\linewidth}
\begin{minipage}[t]{0.36\linewidth}\centering
  \includegraphics[width=\linewidth]{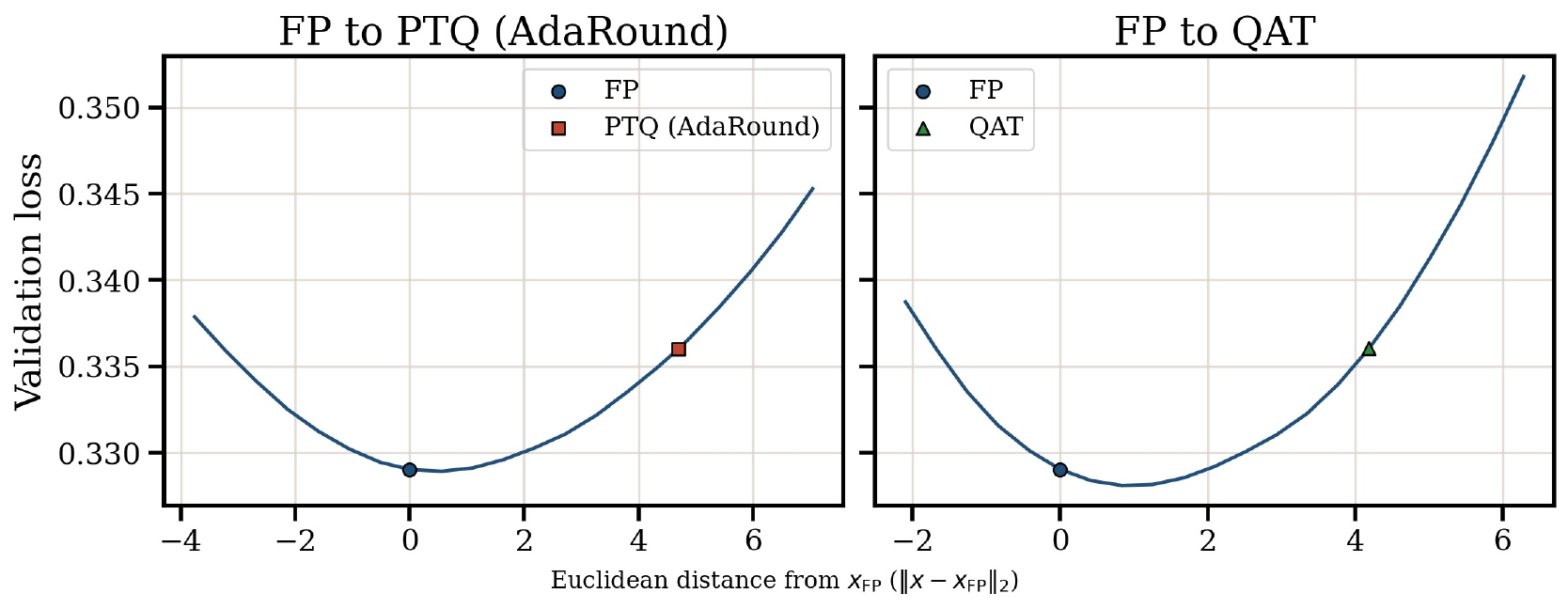}\\[-0.2em]
  {\tiny W4: FP-180 $\to$ AdaRound/QAT 1D profile}
\end{minipage}\hspace{0.025\linewidth}
\begin{minipage}[t]{0.2\linewidth}\centering
  \includegraphics[width=\linewidth]{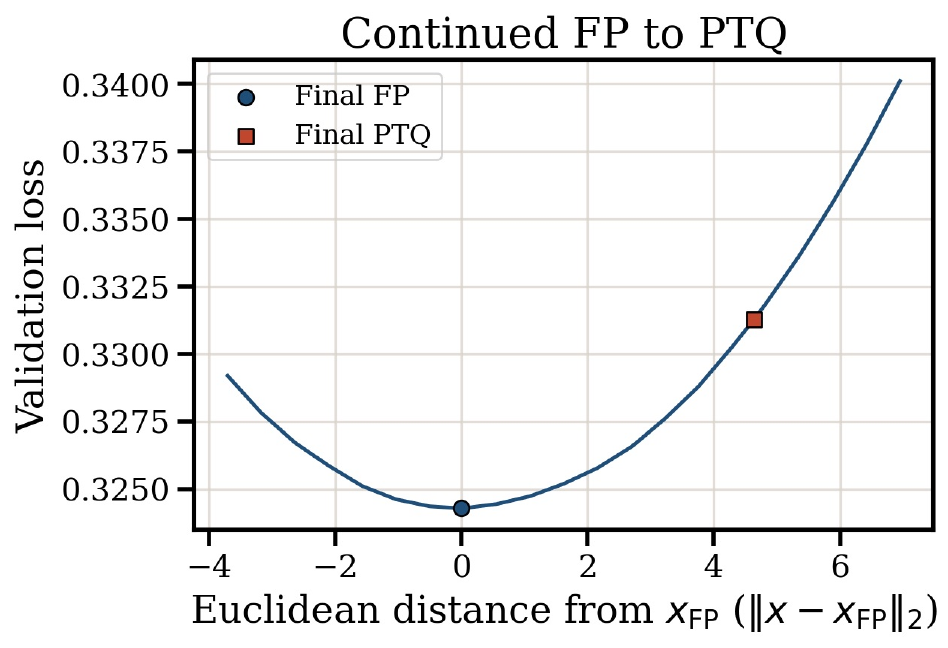}\\[-0.2em]
  {\tiny W4: FP-200 $\to$ AdaRound}
\end{minipage}
\caption{{Additional ResNet-20 on CIFAR-10 interpolation diagnostics.}
W3/W4 seed-1 interpolation profiles from the pretrained ResNet checkpoint
toward PTQ, QAT, and final-PTQ endpoints for ResNet-20.}
\label{fig:cifar-profiles-extra}
\end{figure}

\begin{figure}[H]
\centering
\begin{minipage}[t]{0.36\linewidth}\centering
  \includegraphics[width=\linewidth]{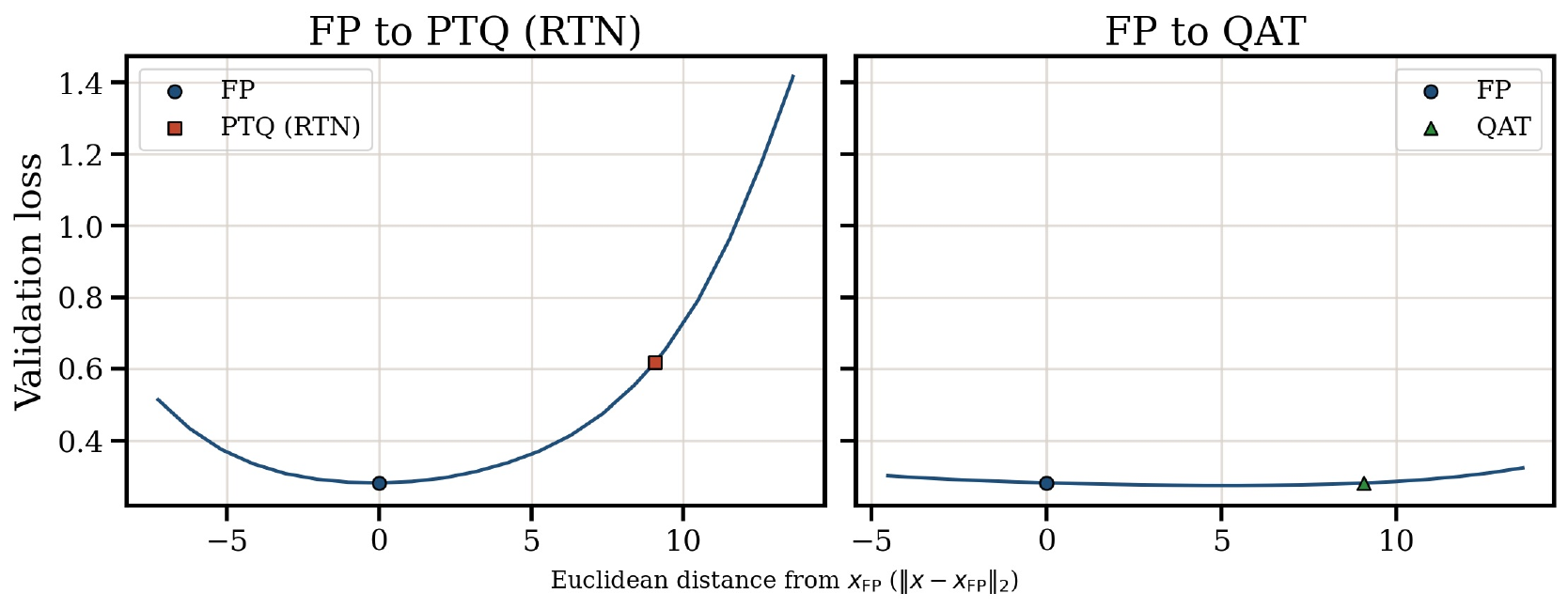}\\[-0.2em]
  {\tiny W3: FP-180 $\to$ RTN/QAT 1D profile}
\end{minipage}\hspace{0.006\linewidth}
\begin{minipage}[t]{0.36\linewidth}\centering
  \includegraphics[width=\linewidth]{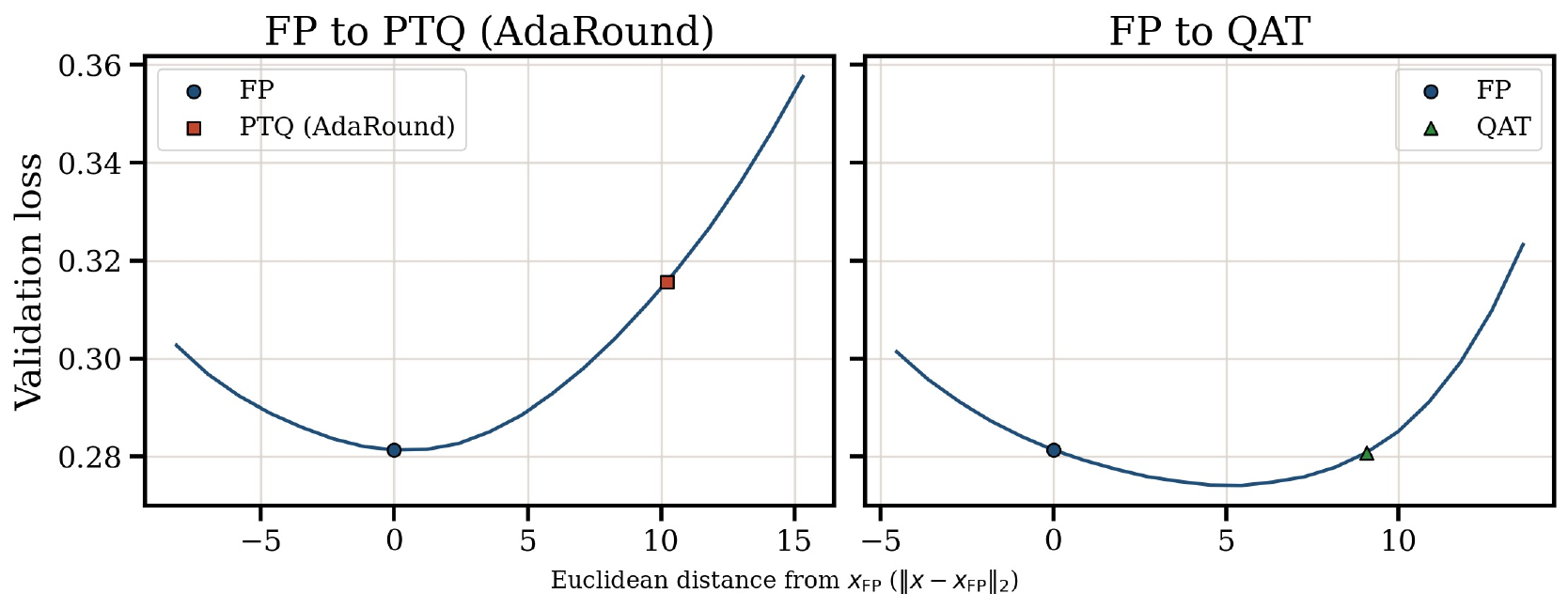}\\[-0.2em]
  {\tiny W3: FP-180 $\to$ AdaRound/QAT 1D profile}
\end{minipage}\hspace{0.025\linewidth}
\begin{minipage}[t]{0.2\linewidth}\centering
  \includegraphics[width=\linewidth]{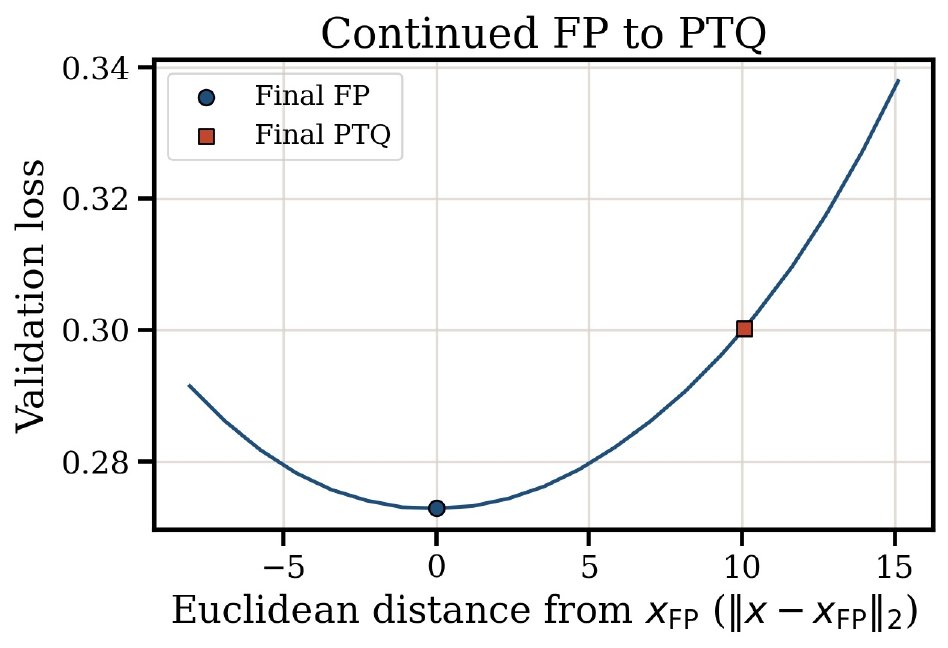}\\[-0.2em]
  {\tiny W3: FP-200 $\to$ AdaRound}
\end{minipage}\\[0.35em]
\begin{minipage}[t]{0.36\linewidth}\centering
  \includegraphics[width=\linewidth]{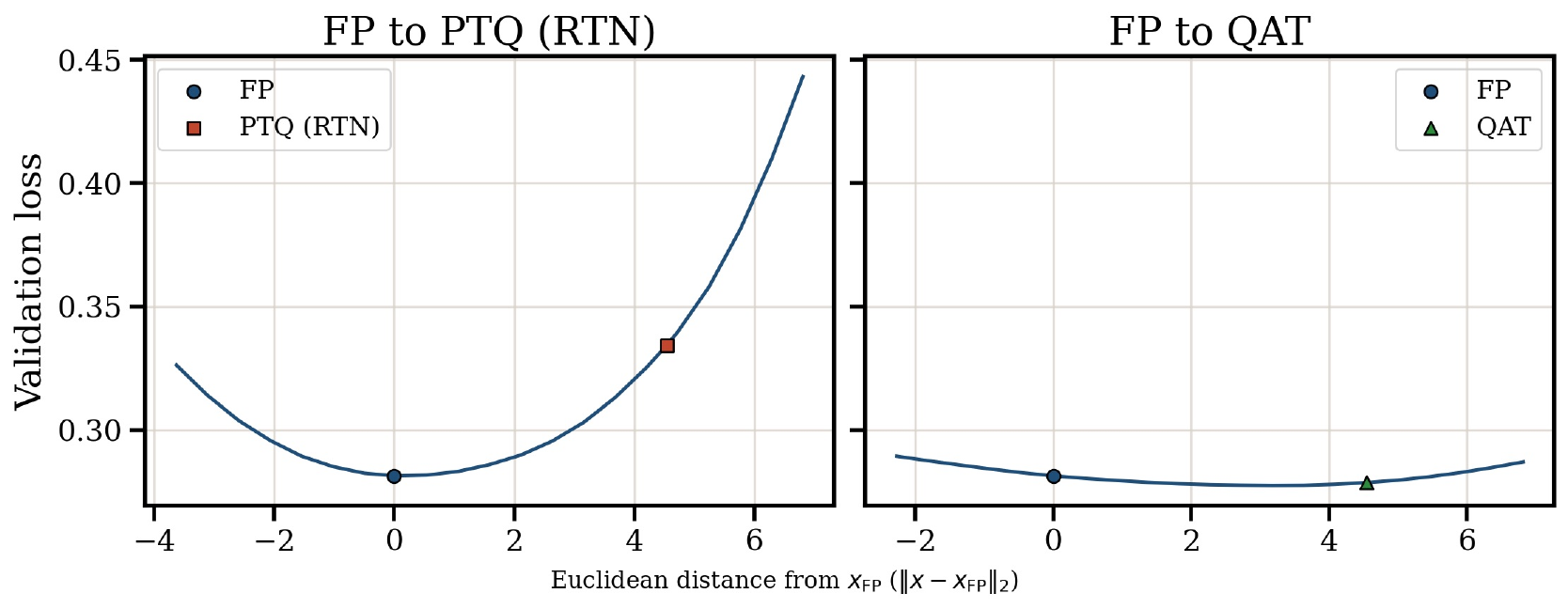}\\[-0.2em]
  {\tiny W4: FP-180 $\to$ RTN/QAT 1D profile}
\end{minipage}\hspace{0.006\linewidth}
\begin{minipage}[t]{0.36\linewidth}\centering
  \includegraphics[width=\linewidth]{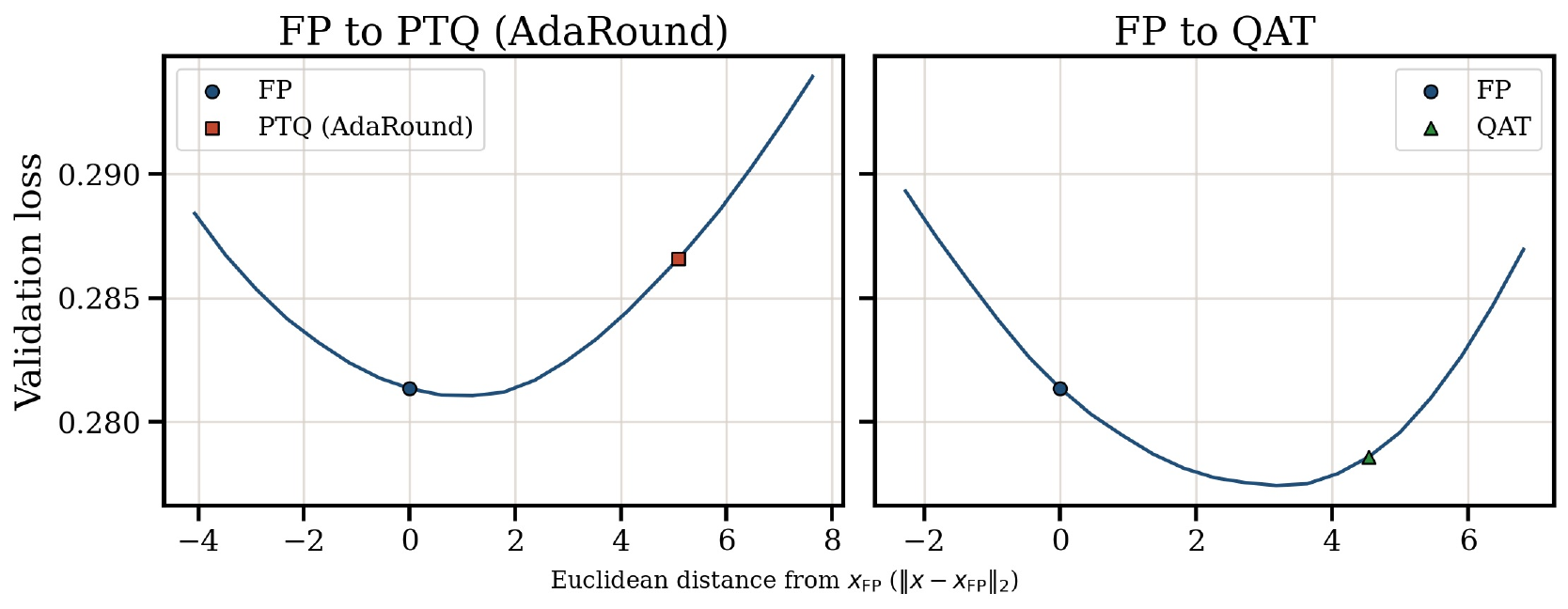}\\[-0.2em]
  {\tiny W4: FP-180 $\to$ AdaRound/QAT 1D profile}
\end{minipage}\hspace{0.025\linewidth}
\begin{minipage}[t]{0.2\linewidth}\centering
  \includegraphics[width=\linewidth]{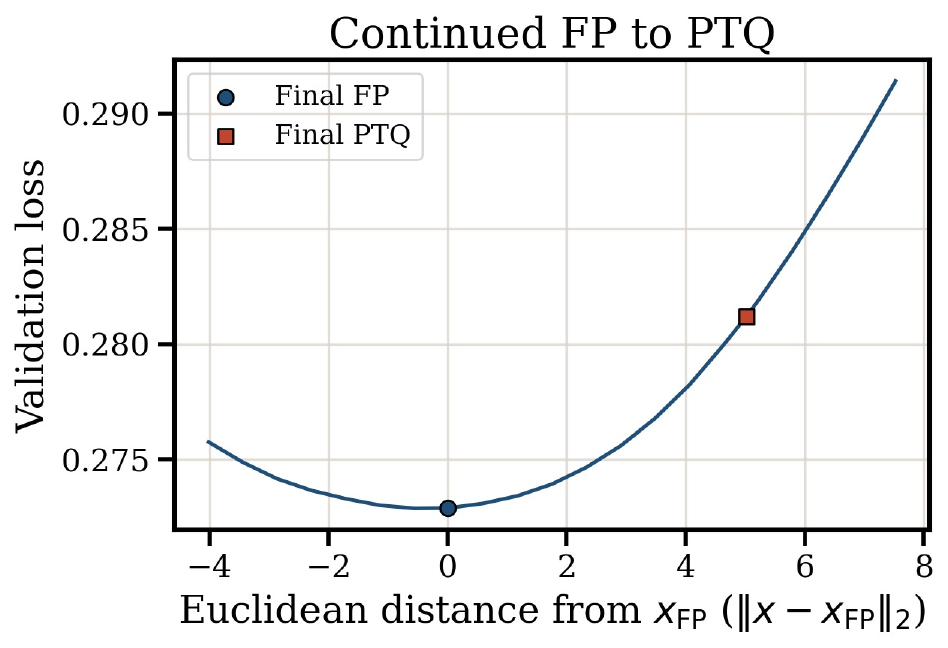}\\[-0.2em]
  {\tiny W4: FP-200 $\to$ AdaRound}
\end{minipage}
\caption{{Additional ResNet-56 on CIFAR-10 interpolation diagnostics.}
W3/W4 seed-1 interpolation profiles from the pretrained ResNet-56 checkpoint
toward PTQ, QAT, and final-PTQ endpoints.}
\label{fig:cifar-profiles}
\label{fig:cifar56-profiles-extra}
\end{figure}

\FloatBarrier

\section{Additional DeiT/ImageNet Diagnostics}
\label{app:deit-extra}

\begin{table}[H]
\centering
\caption{{DeiT/ImageNet equal-budget comparison.}
Each method entry reports validation loss / top-1 accuracy.  RTN, GPTQ, and QAT
report means over three random seeds with standard deviations in parentheses, while
FPFT+GPTQ is the available seed-0 equal-budget control.  RTN and QAT use the
same GPTQ-fitted weight grid as the GPTQ baseline.  Bold marks the best
quantized result in each row, separately for loss and accuracy.}
\label{tab:deit-main}
\scriptsize
\setlength{\tabcolsep}{2pt}
\resizebox{\linewidth}{!}{
\begin{tabular}{cc|cccc}
\toprule
pre-trained FP & Bits & RTN & GPTQ & QAT & fine-tuned FP + GPTQ \\
\midrule
\multirow{3}{*}{1.220 / 72.14} & W2 & 7.216 (0.071) / 0.16 (0.05) & 7.313 (0.084) / 0.19 (0.05) & {\bf 2.121} (0.011) / {\bf 52.78} (0.18) & 7.328 / 0.22 \\
& W3 & 3.166 (0.088) / 37.52 (1.28) & 2.062 (0.029) / 56.78 (0.39) & {\bf 1.439} (0.003) / {\bf 66.60} (0.12) & 2.021 / 56.73 \\
& W4 & 1.402 (0.003) / 68.60 (0.08) & 1.318 (0.002) / 70.17 (0.03) & {\bf 1.245} (0.003) / {\bf 71.01} (0.17) & 1.261 / 70.86 \\
\bottomrule
\end{tabular}
}
\end{table}

Figure~\ref{fig:deit-profiles-extra} provides the one-dimensional interpolation
profiles for the W3/W4 GPTQ and RTN anchors used in Table~\ref{tab:deit-main},
together with the equal-budget FPFT+GPTQ control.  The W2 diagnostics are shown
in the main body in Figure~\ref{fig:vision-w2-profiles}.

\begin{figure}[H]
\centering
\begin{minipage}[t]{0.36\linewidth}\centering
  \includegraphics[width=\linewidth]{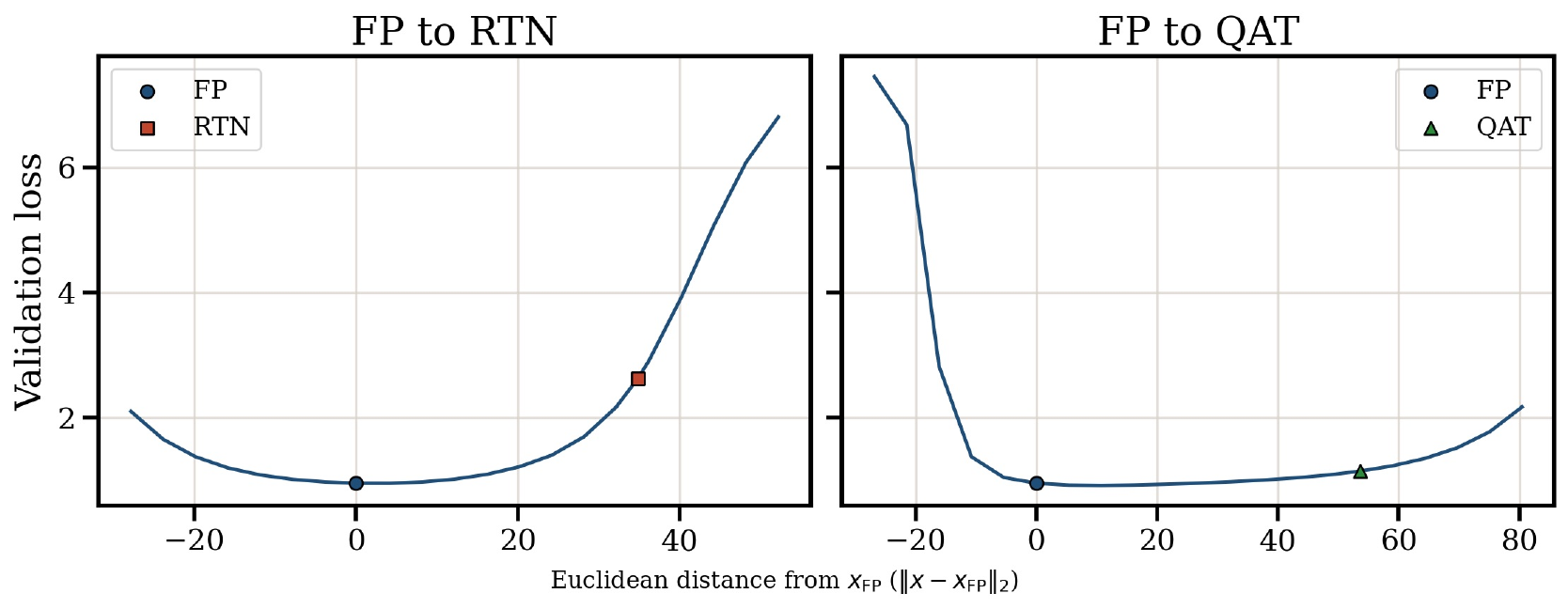}\\[-0.2em]
  {\tiny W3: pre-trained FP $\to$ RTN/QAT 1D profile}
\end{minipage}\hspace{0.006\linewidth}
\begin{minipage}[t]{0.36\linewidth}\centering
  \includegraphics[width=\linewidth]{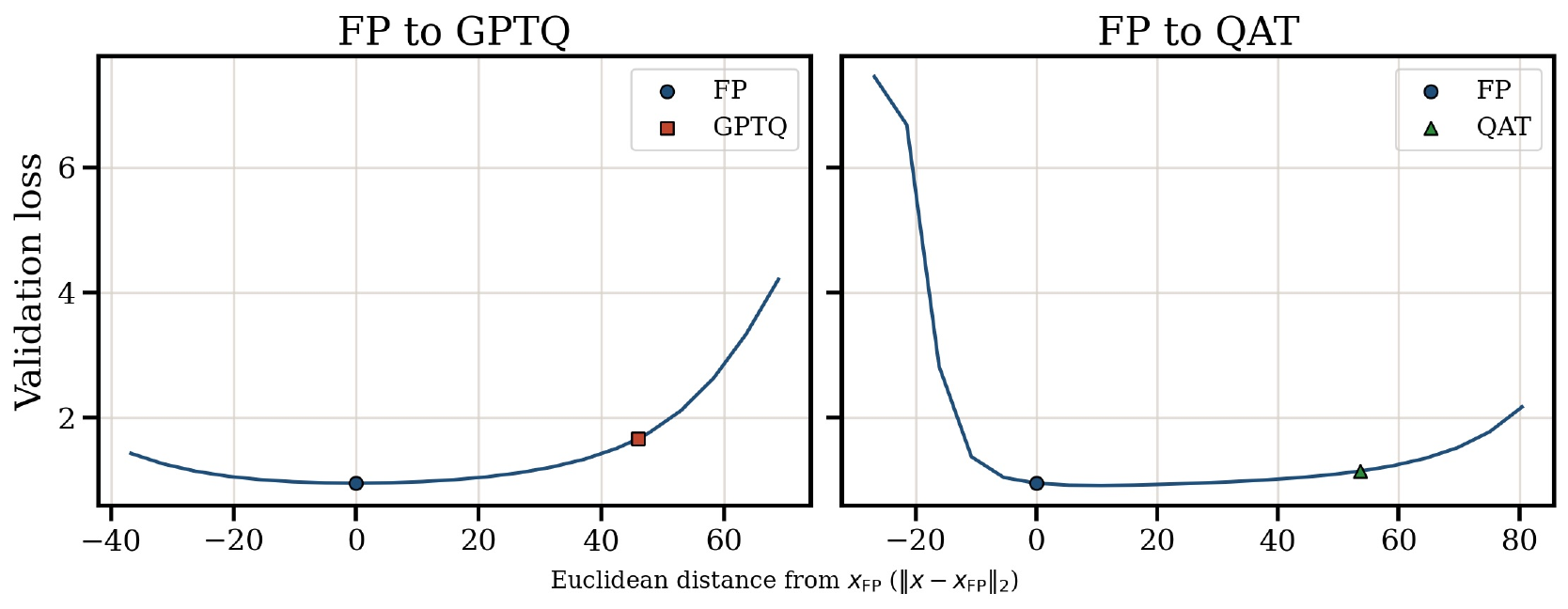}\\[-0.2em]
  {\tiny W3: pre-trained FP $\to$ GPTQ/QAT 1D profile}
\end{minipage}\hspace{0.025\linewidth}
\begin{minipage}[t]{0.2\linewidth}\centering
  \includegraphics[width=\linewidth]{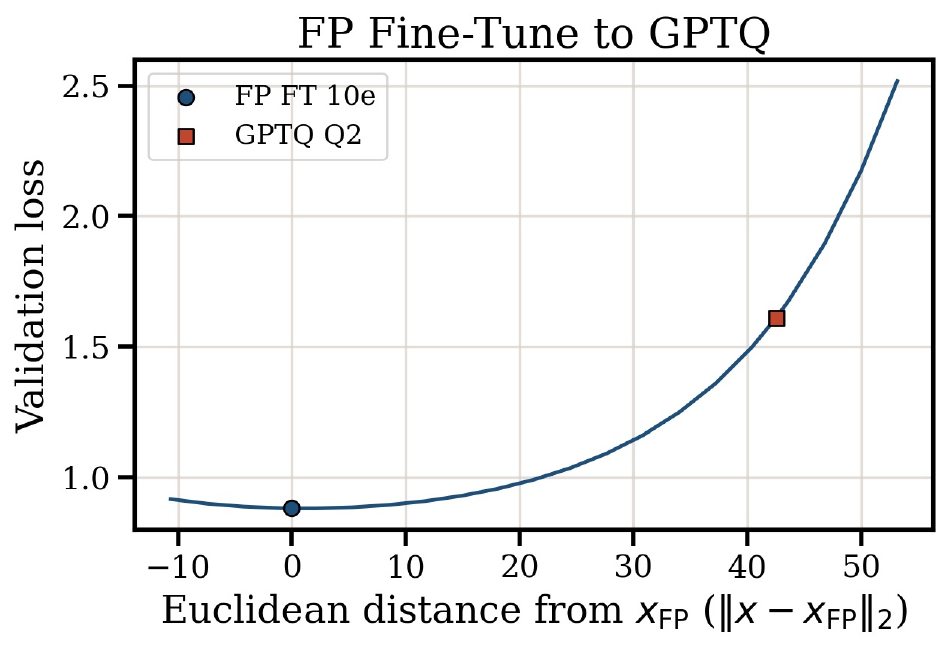}\\[-0.2em]
  {\tiny W3: fine-tuned FP $\to$ GPTQ}
\end{minipage}\\[0.35em]
\begin{minipage}[t]{0.36\linewidth}\centering
  \includegraphics[width=\linewidth]{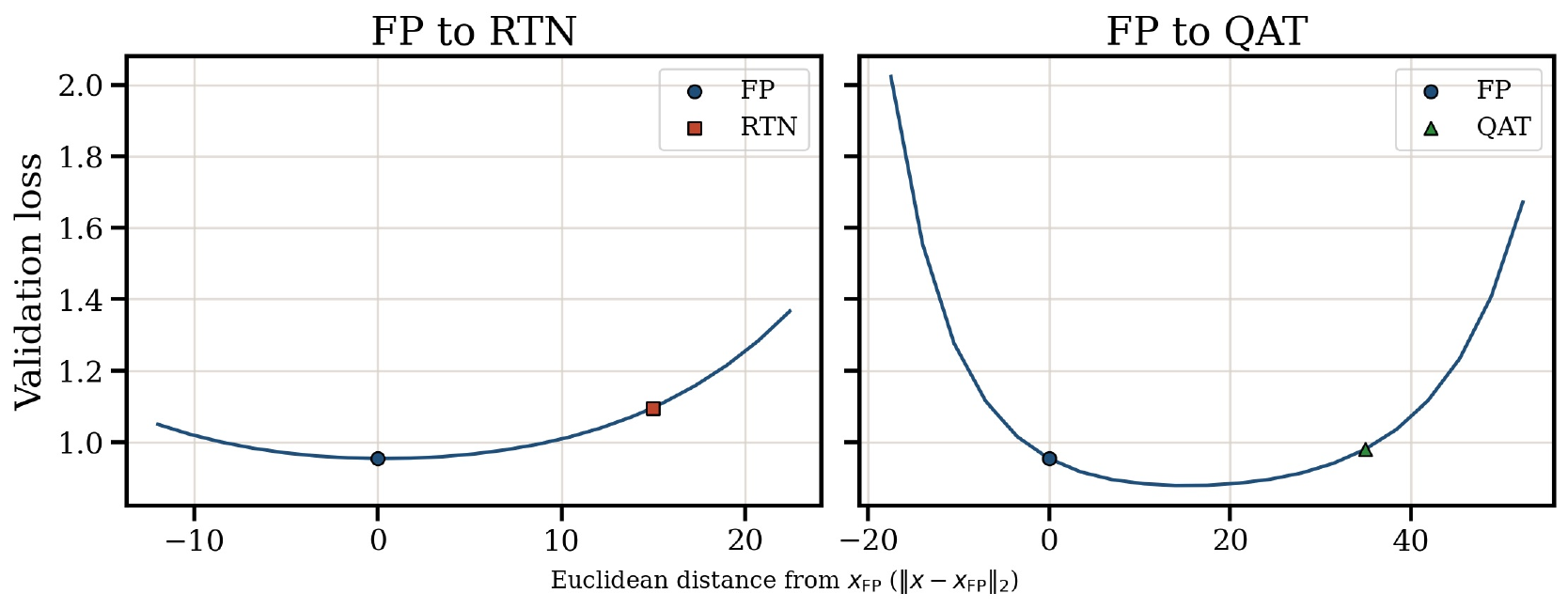}\\[-0.2em]
  {\tiny W4: pre-trained FP $\to$ RTN/QAT 1D profile}
\end{minipage}\hspace{0.006\linewidth}
\begin{minipage}[t]{0.36\linewidth}\centering
  \includegraphics[width=\linewidth]{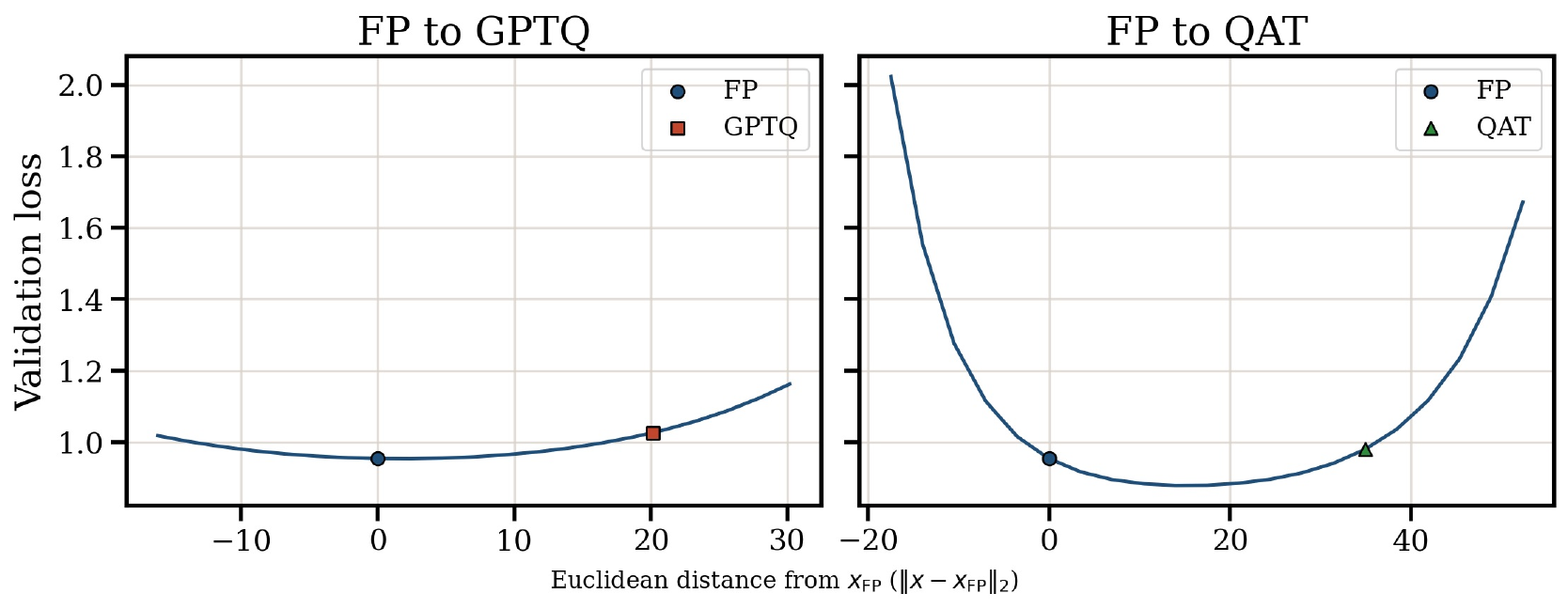}\\[-0.2em]
  {\tiny W4: pre-trained FP $\to$ GPTQ/QAT 1D profile}
\end{minipage}\hspace{0.025\linewidth}
\begin{minipage}[t]{0.2\linewidth}\centering
  \includegraphics[width=\linewidth]{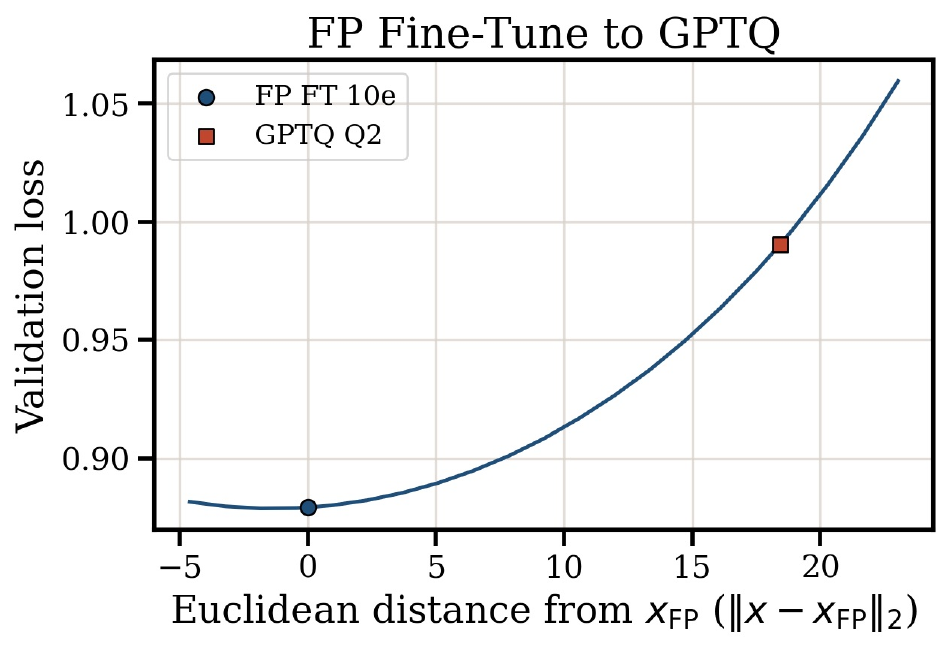}\\[-0.2em]
  {\tiny W4: fine-tuned FP $\to$ GPTQ}
\end{minipage}
\caption{{DeiT/ImageNet interpolation diagnostics.}
Rows correspond to W3/W4.  The first column uses RTN, the second uses GPTQ, and
the third column interpolates from the equal-budget full-precision
fine-tuned checkpoint toward the refit GPTQ endpoint.  These plots are
diagnostics for the PTQ-failure/QAT-recovery pattern and are not used as direct
proof of the full high-dimensional geometry.}
\label{fig:deit-profiles-extra}
\end{figure}

\FloatBarrier

\section{Toy examples}
\label{sec::toy-example}

\begin{example}[Generalization of the two-dimensional river-valley-basin loss]
\label{ex:running}
Fix an angle $\theta\in[\pi/6,\pi/4)$ and write $\tau=(\cos\theta,\sin\theta)$, $\nu=(-\sin\theta,\cos\theta)$, $\alpha=\cos\theta-\sin\theta$, and $\delta=\cos\theta+\sin\theta$. Let $w_0=(1,0)$ and define $t(w)=\langle \tau,w-w_0\rangle$ and $z(w)=\langle \nu,w-w_0\rangle$. 
For parameters $\mu, r, \epsilon>0$ and $u\in\R$, define $R= r+\sqrt{2\epsilon/\mu}$ and consider
\begin{equation}
\label{eq:fixed-running-loss}
    f_{\theta}(w) = \tfrac{1}{2}\bigl(t(w)+u\bigr)^2 + \tfrac{\mu}{2}\bigl(\max\{|z(w)|-r, 0\}\bigr)^2.
\end{equation}
The river center is $\cM=\{w_0+\lambda\tau \mid \lambda\in\R\}$. For any tubular neighborhood $U=\{w\in\R^2 \mid \dist(w,\cM)<R_U\}$ with $R_U>R$, the basin is $\cT=\{w\in U \mid |z(w)|\le R\}$.
\end{example}

\subsection{Verification of Assumption~\ref{ass:river-geometry}}
\label{app:toy-example-assum1}

\textbf{Claim:} Example~\ref{ex:running} satisfies Assumption~\ref{ass:river-geometry} on $U=\{w\in\R^2 \mid \dist(w,\cM)<R_U\}$ for $R_U>R$, with $\mathsf H=\mathbb{I}$, $L_P=1$,  $L=\max\{1,\mu\}$, $\epsilon_{\rm flat} = \epsilon$, and $c_\perp=\sqrt{2\mu\,\epsilon}$.

\begin{proof}
Since $(\tau,\nu)$ is an orthonormal basis, the nearest-point projection onto $\cM$ is $P_{\cM}(w)=w_0+t(w)\tau$, so $P_{\cM}$ is affine and $L_P=1$.
The loss in~\eqref{eq:fixed-running-loss} is $L$-smooth with $L=\max\{1,\mu\}$. 
For any $\pi=w_0+\lambda\tau\in\cM$, we have $t(\pi)=\lambda$ and $z(\pi)=0$, hence $\nabla f_{\theta}(\pi)=(\lambda+u)\tau\in T_{\cM}(\pi)$, which verifies the river-center condition. Next, if
$w=\pi+\xi\nu$ with $|\xi|\le R$, then $t(w)=t(\pi)$ and $z(w)=\xi$.
Therefore
$
    |f_{\theta}(w)-f_{\theta}(\pi)|
    = \tfrac{\mu}{2}(\max\{|\xi|-r, 0\})^2
    \le \tfrac{\mu}{2}(R-r)^2
    = \epsilon
$.
For $w\in U\setminus\cT$, i.e., $|z(w)|>R$, the outward normal direction is $\nu_{\mathsf H}(w)=\operatorname{sgn}(z(w))\nu$. Thus
\[
    \left\langle \nabla f_{\theta}(w),\nu_{\mathsf H}(w)\right\rangle
    = \mu(|z(w)|-r)
    \ge \mu(R-r)
    = \sqrt{2\mu\,\epsilon}
    = c_\perp .
\]
This proves that the toy example satisfies
Assumption~\ref{ass:river-geometry}.
\end{proof}

\subsection{Failure of Hessian-based PTQ}
\label{app:toy-example-PTQ}

\textbf{Claim:}
Consider the function $f_\theta$ in Example~\ref{ex:running} with $u\in(\alpha/2,\alpha)$ and $R<\delta$.
For a grid scale $\rho>0$ and bitwidth $B\ge2$, let
$\cQ_{\rho,B}=\rho\{0,\pm1,\ldots,\pm(2^{B-1}-1)\}^2$. Define a grid point $q_{\rm g}(\rho)=(\rho,0)$. There exists an open interval $\mathcal I_{\rm fail}$ containing $\rho=1$ such that, for every $\rho\in\mathcal I_{\rm fail}$ and every $B \ge 2$, the following hold.

\begin{enumerate}[leftmargin=0.3in]
    \item[(a)] The grid point $q_{\rm g}(\rho)$ lies in $\cT$. Moreover, fixing $w_{\rm fp}$ to be any stationary point of $f_\theta$ at which the Hessian exists, the minimizer $q_{\ast}(\rho)$ of the Hessian proxy in~\eqref{eq:hessian-ptq} lies outside $\cT$.

    \item[(b)] The loss gap $f_\theta(q_{\ast}(\rho))-f_\theta(q_{\rm g}(\rho))$ grows linearly in the sharpness parameter $\mu$.
\end{enumerate}

\begin{proof}
Write $c=\cos\theta$, $s=\sin\theta$, so that $\alpha=c-s$ and $\delta=c+s$. Fix $\zeta=(R+\delta)/2$. Since $R<\delta$, we have $R<\zeta<\delta$.
The stationary points of $f_\theta$ are precisely those satisfying
$t(w)=-u$ and $|z(w)|\le r$. At every stationary point with
$|z(w)|<r$, the Hessian exists and equals $\nabla^2 f_\theta(w_{\rm fp})=\tau\tau^\top$. Thus, for such a choice of $w_{\rm fp}$, the Hessian proxy is $S(q)=\tfrac12(t(q)+u)^2$.

For integer pairs $(i,j)$, write $q_{ij}(\rho)=\rho(i,j)$. Then
\[
    t_{ij}(\rho)=c(\rho i-1)+s\rho j,\qquad
    z_{ij}(\rho)=-s(\rho i-1)+c\rho j,
\]
and, since $\tau,\nu$ are orthonormal, $t_{ij}(\rho)^2+z_{ij}(\rho)^2=(\rho i-1)^2+(\rho j)^2$. Let $q_{\rm g}(\rho)=(\rho,0)$ and
$q_{\rm b}(\rho)=(0,\rho)$. Both are feasible for every signed
symmetric uniform quantizer $\cQ_{\rho,B}$ with $B\ge2$. 
Choose an open interval $\mathcal I_{\rm fail}$ containing $1$, sufficiently small such that for every $\rho\in\mathcal I_{\rm fail}$,
\[
    s|\rho-1|<r,\qquad s+c\rho>R,\qquad
    c-s\rho-u>0.
\]
These conditions can be imposed simultaneously due to our assumption that $u \in (\alpha/2, \alpha)$ and $R < \delta$.
For $q_{\rm g}(\rho)$, we have $|z(q_{\rm g}(\rho))|=s|\rho-1|<r$, so $q_{\rm g}(\rho)\in\cT$.
For $q_{\rm b}(\rho)$, we have $z(q_{\rm b}(\rho))=s+c\rho>R$, so $q_{\rm b}(\rho)\notin\cT$. Moreover,
\[
    |t(q_{\rm b}(\rho))+u|=c-s\rho-u,
    \qquad
    S(q_{\rm b}(\rho))=\tfrac12 (c-s\rho-u)^2.
\]
We next show that every grid point with $|z|<\zeta$ has strictly larger Hessian proxy value than $q_{\rm b}(\rho)$. Let $\mathcal N=\{(1,0),(0,0),(2,0),(1,1),(1,-1)\}$. At $\rho=1$, by direct calculation, for every $(i,j)\in\mathcal N$,
\[
    |t_{ij}(1)+u|>\alpha-u=|t(q_{\rm b}(1))+u|.
\]
Indeed, the only nontrivial comparisons use $u>\alpha/2$ and $s>\alpha$, where $s>\alpha$ follows from $\theta\in[\pi/6,\pi/4)$. Therefore, by continuity, after shrinking $\mathcal I_{\rm fail}$ if necessary,
\[
    |t_{ij}(\rho)+u|>c-s\rho-u,
    \qquad \forall (i,j)\in\mathcal N, \rho\in\mathcal I_{\rm fail}.
\]
It remains to handle $(i,j)\notin\mathcal N$. At $\rho=1$,
\[
    (i-1)^2+j^2\ge2,
    \qquad
    \zeta^2+\alpha^2<\delta^2+\alpha^2=2.
\]
Thus, by discreteness of the integer lattice and continuity in $\rho$, after
shrinking $\mathcal I_{\rm fail}$ once more,
\[
    (\rho i-1)^2+(\rho j)^2>\zeta^2+(c-s\rho)^2,
    \qquad
    \forall(i,j)\notin\mathcal N, \rho\in\mathcal I_{\rm fail}.
\]
Hence, if $(i,j)\notin\mathcal N$ and $|z_{ij}(\rho)|<\zeta$, then
\[
    t_{ij}(\rho)^2
    =(\rho i-1)^2+(\rho j)^2-z_{ij}(\rho)^2
    >(c-s\rho)^2.
\]
Therefore $|t_{ij}(\rho)|>c-s\rho$, and so
\[
    |t_{ij}(\rho)+u|
    \ge |t_{ij}(\rho)|-u
    > c-s\rho-u.
\]
Combining the cases of $(i,j) \in \mathcal N$ and $(i,j) \notin \mathcal N$, we have shown that
\[
    |z(q_{ij}(\rho))|<\zeta
    \quad\Longrightarrow\quad
    S(q_{ij}(\rho))>S(q_{\rm b}(\rho)).
\]
Thus, $S(q_{ij}(\rho))\le S(q_{\rm b}(\rho))$ implies $|z(q_{ij}(\rho))|\ge\zeta$. 
Since $q_{\rm b}(\rho) \in \cQ_{\rho,B}$, every Hessian-proxy minimizer $q_{\ast}(\rho)$ satisfies $S(q_{\ast}(\rho))\le S(q_{\rm b}(\rho))$. Hence $|z(q_{\ast}(\rho))|\ge\zeta>R$, so
$q_{\ast}(\rho) \notin\cT$. This proves part (a).

For part (b), since $|z(q_{\rm g}(\rho))|<r$, we have $f_\theta(q_{\rm g}(\rho)) = \tfrac12\bigl(u+c(\rho-1)\bigr)^2$.
On the other hand, every proxy minimizer satisfies $|z(q_{\ast}(\rho))|\ge\zeta>r$, so
$f_\theta(q_{\ast}(\rho))\ge \tfrac{\mu}{2}(\zeta-r)^2$.
Therefore,
\[
    f_\theta(q_{\ast}(\rho))-f_\theta(q_{\rm g}(\rho))
    \ge
    \tfrac{\mu}{2}(\zeta-r)^2
    -
    \tfrac12\bigl(u+c(\rho-1)\bigr)^2.
\]
Since $\zeta=(R+\delta)/2$ and $R>r$, we have
$\zeta-r>(\delta-r)/2$, and hence
\[
    f_\theta(q_{\ast}(\rho))-f_\theta(q_{\rm g}(\rho))
    \ge
    \tfrac{\mu}{8}(\delta-r)^2
    -
    \tfrac12\bigl(u+c(\rho-1)\bigr)^2,
\]
which grows linearly in $\mu$.
\end{proof}

\section{Matrix Factorization}

\subsection{Verification of Assumption~\ref{ass:river-geometry}}
\label{app:mf-assum1}

We verify Assumption~\ref{ass:river-geometry} for Example~\ref{ex:mf_sec2}.  The purpose is only to
show that, after shrinking the local patch $\Omega$, the matrix factorization
objective satisfies Assumption~\ref{ass:river-geometry} for a fixed anisotropic
metric.  All constants below are local constants depending on the chosen patch.

\begin{proof}%
Write $X=\binom{P}{Z}$ with $P\in\R^{r\times k}$ and
$Z\in\R^{(d-r)\times k}$.  Then
\[
    f(P,Z)
    =
    \|PP^\top-D\|_F^2
    +
    2\|PZ^\top\|_F^2
    +
    \|ZZ^\top\|_F^2 .
\]
Let
\[
    \cM=\left\{\binom{P}{0} \,\middle|\, P\in\Omega\right\},
    \qquad
    U=\left\{\binom{P}{Z} \,\middle|\, P\in\Omega,\ \|Z\|_F<R_U\right\}.
\]
On this local product neighborhood, the nearest-point projection is
$
    P_{\cM}\binom{P}{Z}=\binom{P}{0}
$,
so $L_P=1$ and $N_{\cM}(P,0)=\binom{0}{\R^{(d-r) \times k}}$.

Since $\Omega$ is bounded and every $P\in\overline{\Omega}$ has full row rank, define $\mu=\inf_{P\in\overline{\Omega}}\sigma_{\min}(P)>0$ and $M=\sup_{P\in\overline{\Omega}}\|P\|_{\rm op}<\infty$.
For each full-row-rank matrix $P$, let $\Pi_P=P^\top(PP^\top)^{-1}P$ be the orthogonal projector onto the row space of $P$.  Fix the base projector $\Pi_0=P_0^\top(P_0P_0^\top)^{-1}P_0$.
Choose positive weights $\lambda_{\rm s}\ge \lambda_{\rm f}>0$ as follows:
\[
    \lambda_{\rm f}
    =
    \max\left\{
        \frac{4R^2}{R_U^2},
        \frac{2R^2}{\sqrt{\epsilon_{\rm flat}}}
    \right\},
    \qquad
    \lambda_{\rm s}
    =
    \max\left\{
        \lambda_{\rm f},
        \frac{8M^2R^2}{\epsilon_{\rm flat}}
    \right\}.
\]
Define
$
    C_0=\lambda_{\rm s}\Pi_0+\lambda_{\rm f}(I-\Pi_0)
$.
After vectorizing matrix pairs, define the linear operator
\[
    \mathsf H(\Delta P,Z)=(\Delta P,ZC_0).
\]
Since $C_0\succeq \lambda_{\rm f}I$, the matrix $\mathsf H$ is symmetric positive
definite.  Moreover, it is block diagonal with respect to the splitting
$(\Delta P,Z)$, so $\mathsf H N_{\cM}(P,0)\subseteq N_{\cM}(P,0)$.

Since the map $P\mapsto \Pi_P$ is continuous on the full-row-rank set, we may shrink $\Omega$ around $P_0$ so that $\sup_{P\in\Omega}\|\Pi_P-\Pi_0\|_{\rm op}\le \delta_{\mathsf H}$,
where $\delta_{\mathsf H}>0$ will be chosen below. The bounds $\sigma_{\min}(P) \ge \mu_{\Omega}$ and $\|P\|_{\rm op} \le M_{\Omega}$ remain valid on the smaller patch.

\noindent\textbf{Smoothness.}
For the full variable $X$, $\nabla f(X)=4(XX^\top-M^\star)X$.
For any perturbation $E$,
\[
    \nabla^2 f(X)[E]
    =
    4(EX^\top+XE^\top)X
    +
    4(XX^\top-M^\star)E .
\]
Thus,
$
    \|\nabla^2 f(X)[E]\|_F
    \le
    (12\|X\|_{\rm op}^2+4\|M^\star\|_{\rm op})\|E\|_F
$.
On $U$, we have $\|X\|_{\rm op}^2 \le M^2+R_U^2$ and $\|M^\star\|_{\rm op}=\|D\|_{\rm op}$. Therefore $f$ is $L$-smooth on $U$ with $L=12(M^2+R_U^2)+4\|D\|_{\rm op}$.

\noindent\textbf{River center.}
At a point on the river center, the river-center condition holds because
\[
    \nabla f(P,0)
    =
    \binom{4(PP^\top-D)P}{0}
    \in T_{\cM}(P,0).
\]

\noindent\textbf{Metric comparison after freezing the row space.}
For each $P\in\Omega$, define the moving metric
\[
    C_P=\lambda_{\rm s}\Pi_P+\lambda_{\rm f}(I-\Pi_P).
\]
Let $D_0(Z)^2=\langle Z,ZC_0\rangle$ and $D_P(Z)^2=\langle Z,ZC_P\rangle$. Because
$
    C_0-C_P=(\lambda_{\rm s}-\lambda_{\rm f})(\Pi_0-\Pi_P)
$,
we have
$
    \|C_0-C_P\|_{\rm op}
    \le
    \lambda_{\rm s}\delta_{\mathsf H}
$.
Also $C_P\succeq \lambda_{\rm f}I$, hence $\lambda_{\rm f} \|Z\|_F^2\le D_P(Z)^2$.
Therefore
\[
    |D_0(Z)^2-D_P(Z)^2|
    = |\langle Z,Z(C_0-C_P)\rangle| 
    \le \lambda_{\rm s}\delta_{\mathsf H}\|Z\|_F^2 
    \le \frac{\lambda_{\rm s}\delta_{\mathsf H}}{\lambda_{\rm f}}D_P(Z)^2 .
\]
Choosing
$
    \delta_{\mathsf H}\le \frac{\lambda_{\rm f}}{4\lambda_{\rm s}},
$
we obtain the uniform comparison
\begin{equation}\label{eq:metric_compare}
    \frac34D_P(Z)^2
    \le
    D_0(Z)^2
    \le
    \frac54D_P(Z)^2.
\end{equation}

\paragraph{Anisotropic basin.}
Define the basin by
\[
    \cT
    =
    \left\{
        \binom{P}{Z}\in U \,\middle|\,
        D_0(Z)=\|Z\|_{C_0}\le R
    \right\}.
\]
The choice $\lambda_{\rm f}\ge 4R^2/R_U^2$ guarantees that
$D_0(Z)\le R$ implies $\|Z\|_F\le R/\sqrt{\lambda_{\rm f}}\le R_U/2$, so
$\cT\subset U$.
Suppose $D_0(Z)\le R$.  By the metric comparison \eqref{eq:metric_compare},
$
    D_P(Z)^2\le \frac43R^2
$.
Decompose $Z=Z_{\rm s}+Z_{\rm f}$ with $Z_{\rm s}=Z\Pi_P$ and $Z_{\rm f}=Z(I-\Pi_P)$. Since $P(I-\Pi_P)=0$, we have $PZ_{\rm f}^\top=0$, and hence
\[
    f(P,Z)-f(P,0)
    =
    2\|PZ_{\rm s}^\top\|_F^2+\|ZZ^\top\|_F^2.
\]
Using $\|P\|_{\rm op}\le M$, $D_P(Z)^2\ge
\lambda_{\rm s}\|Z_{\rm s}\|_F^2$, and
$D_P(Z)^2\ge \lambda_{\rm f}\|Z\|_F^2$, we obtain
\begin{align*}
    f(P,Z)-f(P,0)
    &\le 2M^2\frac{D_P(Z)^2}{\lambda_{\rm s}} + \frac{D_P(Z)^4}{\lambda_{\rm f}^2} \\
    &\le \frac{8M^2R^2}{3\lambda_{\rm s}} + \frac{16R^4}{9\lambda_{\rm f}^2}
    \;\le\; \frac{\epsilon_{\rm flat}}{3} + \frac{4\epsilon_{\rm flat}}{9}
    = \frac{7\epsilon_{\rm flat}}{9}
\end{align*}
Thus $|f(P,Z)-f(P,0)|\le \epsilon_{\rm flat}$ for all $(P,Z)\in\cT$. This verifies the anisotropic-basin condition.
It remains to verify the sharp-wall condition for the fixed metric $C_0$. 

\noindent\textbf{Step 1.} We first prove the estimate for the moving metric $C_P$. Notice that
\begin{align*}
    \langle \nabla_Z f(P,Z),ZC_P\rangle
    &= \langle 4ZP^\top P+4ZZ^\top Z,\;ZC_P\rangle \\
    &= 4\lambda_{\rm s} \langle Z_{\rm s} P^\top P,\;Z_{\rm s} \rangle 
     + 4\langle ZZ^\top Z, Z C_P \rangle \\
    &\ge 4\lambda_{\rm s}\|PZ_{\rm s}^\top\|_F^2 + 4\lambda_{\rm f}\|ZZ^\top\|_F^2 .
\end{align*}
Since $\sigma_{\min}(P)\ge \mu$ on $\Omega$,
$
    \|PZ_{\rm s}^\top\|_F\ge \mu\|Z_{\rm s}\|_F
$.
Also, with $m_0=\min\{d-r,k\}$, it follows from Cauchy-Schwarz inequality that
$
    m_0 \|ZZ^\top\|_F^2\ge \|Z\|_F^4
$.
Thus
\[
    \langle \nabla_Z f(P,Z),ZC_P\rangle
    \ge
    4\lambda_{\rm s}\mu^2\|Z_{\rm s}\|_F^2
    +
    \frac{4\lambda_{\rm f}}{m_0}\|Z\|_F^4.
\]
If $D_P(Z) = \sqrt{\lambda_{\rm s}\|Z_{\rm s}\|^2_F + \lambda_{\rm f}\|Z_{\rm f}\|^2_F} > R$, then either $\sqrt{2\lambda_{\rm s}}\|Z_{\rm s}\|_F \ge {D_P(Z)}$ or $\sqrt{2\lambda_{\rm f}}\|Z_{\rm f}\|_F \ge {D_P(Z)}$. In the first case,
\[
    \frac{\langle \nabla_Z f(P,Z),ZC_P\rangle}{D_P(Z)}
    \ge
    2\mu^2D_P(Z)
    \ge
    2\mu^2R.
\]
In the second case,
$
    \|Z\|_F
    \ge \|Z_{\rm f}\|_F
    \ge {D_P(Z)}/{\sqrt{2\lambda_{\rm f}}}
$,
and therefore
\[
    \frac{\langle \nabla_Z f(P,Z),ZC_P\rangle}{D_P(Z)}
    \ge
    \frac{D_P(Z)^3}{m_0\lambda_{\rm f}}
    \ge
    \frac{R^3}{m_0\lambda_{\rm f}}.
\]
Hence, whenever $D_P(Z)>R$,
\[
    \frac{\langle \nabla_Z f(P,Z),ZC_P\rangle}{D_P(Z)}
    \ge 
    \min\left\{
        2\mu^2R,\,
        \frac{R^3}{m_0\lambda_{\rm f}}
    \right\}
    \triangleq c_\ast.
\]

\noindent\textbf{Step 2.} Next, we turn to the estimate for the fixed metric $C_0$.
Now suppose $(P,Z)\in U\setminus\cT$, i.e. $D_0(Z)>R$.  By the metric comparison \eqref{eq:metric_compare},
$
    D_P(Z) \ge 2 D_0(Z)/\sqrt5 > 2R/\sqrt5
$.
Applying the moving-metric estimate with threshold $2R/\sqrt5$ gives
\[
    \frac{\langle \nabla_Z f(P,Z),ZC_P\rangle}{D_0(Z)}
    \ge
    \frac{16}{25} c_\ast.
\]
We next control the error caused by replacing $C_P$ with the frozen metric $C_0$.
On $U$,
\[
    \|\nabla_Z f(P,Z)\|_F
    \le
    4\|Z\|_F\|P\|_{\rm op}^2+4\|Z\|_F^3
    \le
    4R_U(M^2+R_U^2)
    \triangleq G.
\]
Since $D_0(Z)\ge \sqrt{\lambda_{\rm f}}\|Z\|_F$, we have
\[
    \left|
    \frac{\langle \nabla_Z f(P,Z),Z(C_0-C_P)\rangle}{D_0(Z)}
    \right|
    \le
    \frac{\|\nabla_Z f(P,Z)\|_F\|Z\|_F\|C_0-C_P\|_{\rm op}}{D_0(Z)}
    \le
    \frac{G\lambda_{\rm s}\delta_{\mathsf H}}{\sqrt{\lambda_{\rm f}}}.
\]
By further shrinking $\Omega$, we may ensure
$
    \delta_{\mathsf H} \le \frac{c_\ast\sqrt{\lambda_{\rm f}}}{4G\lambda_{\rm s}}.
$
Therefore
\[
    \frac{\langle \nabla_Z f(P,Z),ZC_0\rangle}{D_0(Z)}
    \ge
    \left(\frac{16}{25}-\frac14\right)c_\ast
    \ge
    \frac14c_\ast.
\]
Finally, Assumption~\ref{ass:river-geometry} normalizes the sharp-wall direction
as
\[
    \nu_{\mathsf H}(P,Z)
    =
    \frac{\mathsf H\bigl((P,Z)-P_{\cM}(P,Z)\bigr)}
         {\left\|\mathsf H\bigl((P,Z)-P_{\cM}(P,Z)\bigr)\right\|}
    =
    \frac{(0,ZC_0)}{\|ZC_0\|_F}.
\]
Since
$
    \|ZC_0\|_F\le \sqrt{\lambda_{\rm s}}\,D_0(Z)
$,
we obtain
\[
    \left\langle
        \nabla f(P,Z),
        \nu_{\mathsf H}(P,Z)
    \right\rangle
    = \frac{\langle \nabla_Z f(P,Z),ZC_0\rangle}{\|ZC_0\|_F}
    \ge \frac{c_\ast}{4\sqrt{\lambda_{\rm s}}}.
\]
Thus the sharp-valley condition holds with
$
    c_\perp = \frac{c_\ast}{4\sqrt{\lambda_{\rm s}}} >0
$.

Combining the smoothness, basin, river-center, and sharp-valley estimates proves
that Example~\ref{ex:mf_sec2} satisfies Assumption~\ref{ass:river-geometry} on
$U$ with the fixed metric $\mathsf H$.
\end{proof}

\subsection{Numerical Simulations of Matrix Factorization}
\label{app:mf-simulations}

We simulate the symmetric matrix factorization objective of
Example~\ref{ex:mf_sec2},
\[
    f(X) = \|XX^\top - M^\star\|_F^2,
    \qquad
    M^\star = \begin{pmatrix} D & 0 \\ 0 & 0 \end{pmatrix},
    \quad
    D \in \R^{r\times r} \text{ diagonal}, \; D \succ 0,
\]
with $X \in \R^{d\times k}$ and $k \ge r$, in the high-dimensional
setting $d = 100$, $r = k = 5$ ($D = I_r$, $dk = 500$ parameters).
The loss landscape is no longer drawable directly, so we apply the
same family of diagnostics used on Llama in the main text: a 1D loss
profile from FP toward each anchor, a 2D contour on the
FP/PTQ/QAT affine plane, and a 3D river-cross surface in the plane
spanned by FP$\to$QAT and a random perpendicular direction.

\paragraph{Pipeline.}
Stage~1 is full-precision gradient descent on $f(X)$ from a small
random initialization, $X_0 \sim 0.30 \,\mathcal{N}(0, I)$, with
step size $\eta_{\rm FP} = 0.020$ for $4000$ iterations; this drives
$f(X_{\rm fp})$ to numerical zero ($\approx 10^{-30}$).  Stage~2
applies PTQ by per-entry rounding,
$X_{\rm PTQ} = Q(X_{\rm fp})$ with $Q(X) = \rho\,\mathrm{round}(X/\rho)$
and $\rho = 0.30$, chosen so that no FP optimum lies on the grid
($f(X) > 0$ for every $B \in \rho \,\Z^{d\times k}$).  Stage~3 runs
STE-QAT continuing from $X_{\rm fp}$,
\[
    X_{k+1} = X_k - \eta_k \, \nabla f\bigl(Q(X_k)\bigr),
\]
for $1500$ iterations with cosine-decayed step size
$\eta_k = \tfrac{1}{2}\bigl(1 + \cos(\pi k / 1500)\bigr) \cdot 0.010$,
mirroring the cosine schedule used for the LLM runs.  We report the
deployed loss $f\bigl(Q(X_{\rm QAT})\bigr)$ on the final QAT
iterate.

\paragraph{Results.}
The deployed PTQ loss is $f(X_{\rm PTQ}) = 0.7502$, a substantial gap
from the FP optimum because the FP solution lies strictly off the
grid in every coordinate.  STE-QAT lowers the deployed loss to
$f\bigl(Q(X_{\rm QAT})\bigr) = 0.2543$, a $66\%$ reduction over the
PTQ initialization at the \emph{same} grid resolution and weight
scope.  The displacements
$\Delta_{\rm PTQ} = X_{\rm PTQ} - X_{\rm fp}$ and
$\Delta_{\rm QAT} = Q(X_{\rm QAT}) - X_{\rm fp}$ have Frobenius norms
$0.474$ and $0.594$ and meet at an angle of $95.3^\circ$ --- they are
nearly orthogonal, so QAT is not a small correction along the
rounding direction but a separate, comparably-sized move.

Figure~\ref{fig:mf-highdim-1d} shows one-dimensional loss profiles
along these two directions.  The FP$\to$PTQ profile rises smoothly
through the PTQ anchor at $t=1$ to its $f = 0.7502$ value, while the
FP$\to$QAT profile reaches a lower $f = 0.2543$ at $t=1$; both rise
monotonically because $X_{\rm fp}$ is the global minimum, but the
slope along $\Delta_{\rm QAT}$ is markedly gentler, indicating a
basin-aligned displacement compared to the rounding direction.
Figure~\ref{fig:mf-highdim-landscape} renders the 2D loss landscape
on the affine plane through the three anchors: FP, PTQ, and QAT fall
at distinct corners with QAT placed inside the warm-color (low-loss)
region while PTQ sits on a steeper part of the surface.
Figure~\ref{fig:mf-highdim-river} tests the river-valley hypothesis
directly: in the plane spanned by FP$\to$QAT and a random
perpendicular direction (rescaled to $\|\Delta_{\rm FP\to QAT}\|$ and
averaged over $5$ random seeds, same construction as the
ResNet/DeiT/Llama river-cross plots), the valley along $b = 0$ is
narrow and deep; loss stays low along FP$\to$QAT but rises sharply
with $|b|$, consistent with
Assumption~\ref{ass:river-geometry} on this finite-dimensional
matrix factorization instance.

\begin{figure}[H]
\centering
\subfigure[FP $\to$ PTQ 1D profile]{
  \includegraphics[width=0.3\linewidth]{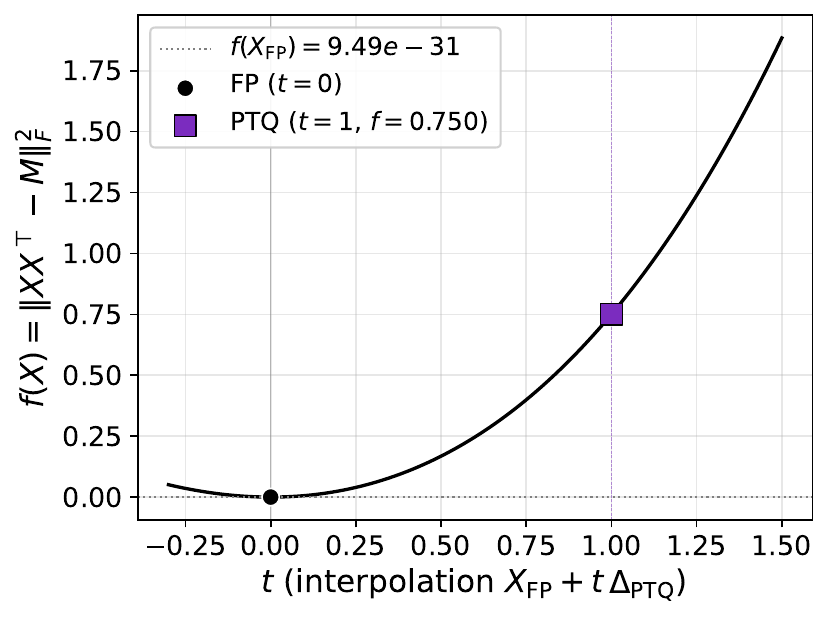}
  \label{fig:mf-highdim-1d-ptq}
}\hspace{0.5in}%
\subfigure[FP $\to$ QAT 1D profile]{
  \includegraphics[width=0.3\linewidth]{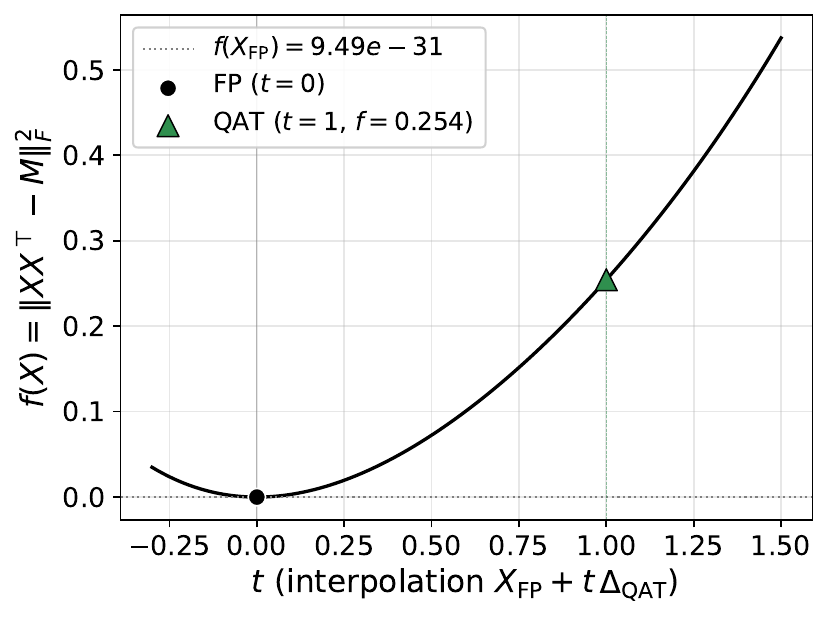}
  \label{fig:mf-highdim-1d-qat}
}
\caption{\textbf{One-dimensional loss profiles for the high-dimensional
Matrix Factorization simulation (Appendix~\ref{app:mf-simulations}).} Loss along the
linear interpolations $X_{\rm fp} + t\,\Delta_{\rm PTQ}$ (left) and
$X_{\rm fp} + t\,\Delta_{\rm QAT}$ (right).  The PTQ anchor at $t=1$
sits at $f=0.7502$; the QAT anchor at $t=1$ sits at $f=0.2543$.
$X_{\rm fp}$ is the global minimum so both profiles start at zero,
but the slope along $\Delta_{\rm QAT}$ is gentler than along
$\Delta_{\rm PTQ}$ --- the QAT direction follows a basin-aligned
displacement rather than an off-river rounding move.  Within each
panel the dotted line marks $f(X_{\rm fp})$ and the dashed vertical
line marks the anchor location at $t=1$.}
\label{fig:mf-highdim-1d}
\end{figure}

\begin{figure}[H]
\centering
\includegraphics[width=0.4\linewidth]{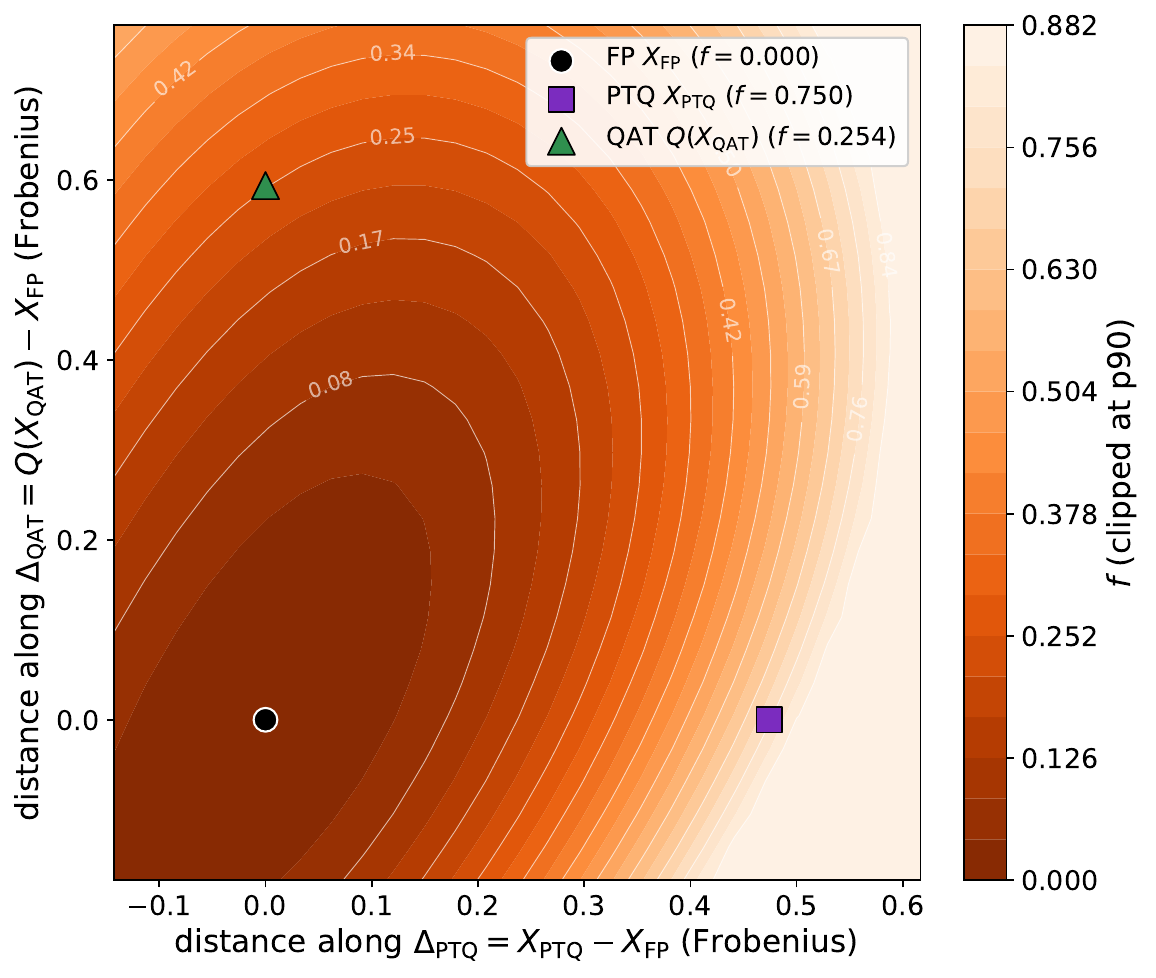}
\caption{\textbf{FP/PTQ/QAT 2D loss landscape for the high-dimensional
Matrix Factorization simulation.}  Loss on the affine plane
$X(a, b) = X_{\rm fp} + a\,\Delta_{\rm PTQ} + b\,\Delta_{\rm QAT}$
through the three anchors (FP at $(0,0)$, PTQ at $(1,0)$, QAT at
$(0,1)$).  Axes are in Frobenius distance units
($\|\Delta_{\rm PTQ}\| = 0.474$, $\|\Delta_{\rm QAT}\| = 0.594$,
angle $95.3^\circ$).  PTQ sits on a steeper part of the surface than
QAT; the QAT anchor lands inside the warm-color (low-loss) basin.}
\label{fig:mf-highdim-landscape}
\end{figure}

\begin{figure}[H]
\centering
\includegraphics[width=0.5\linewidth]{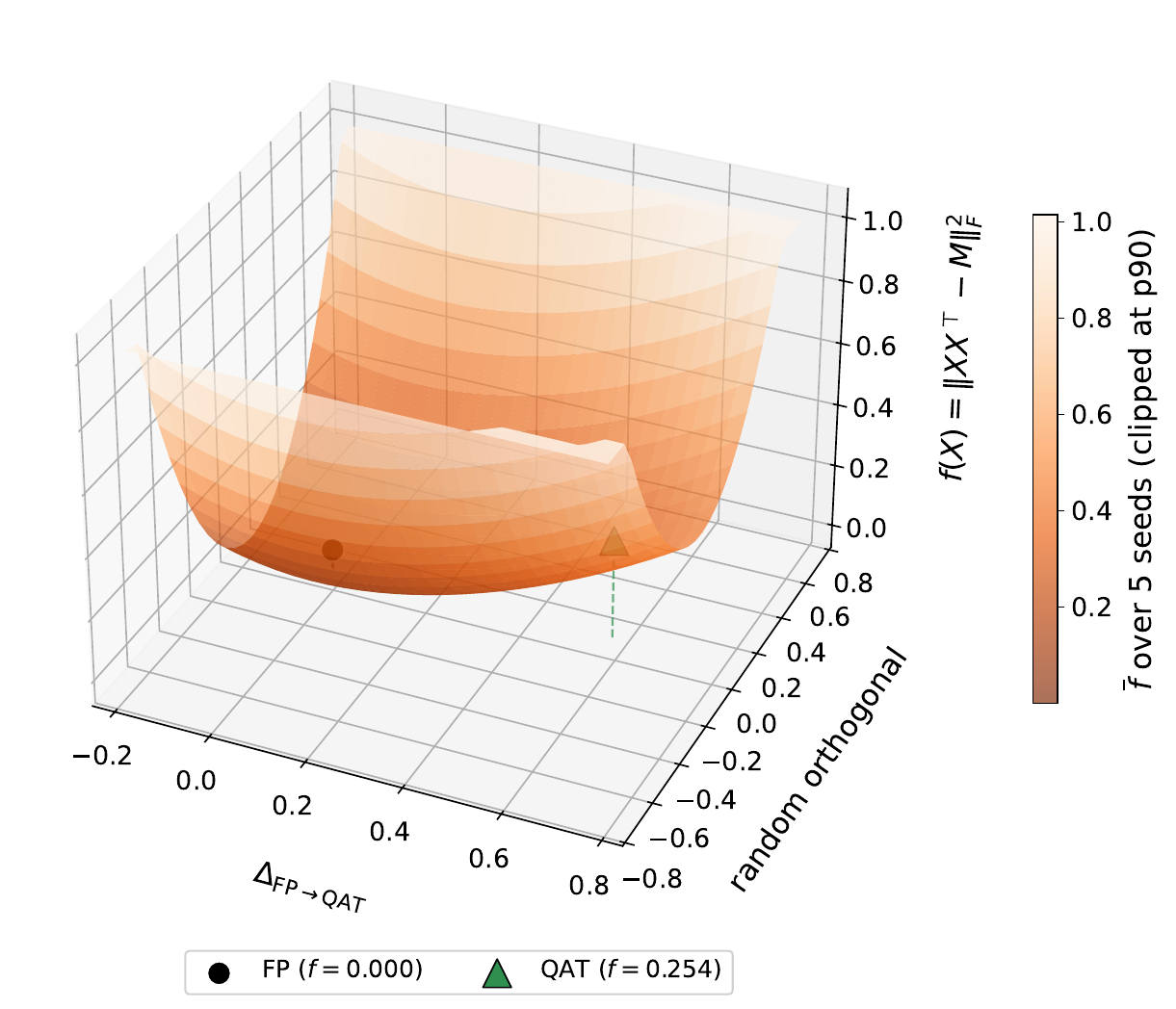}
\caption{\textbf{River-cross 3D loss landscape for the
high-dimensional Matrix Factorization simulation.}  Loss $f$ on the plane spanned by
$\Delta_{\rm FP\to QAT} = Q(X_{\rm QAT}) - X_{\rm fp}$ and a random
direction projected orthogonal to it and rescaled to
$\|\Delta_{\rm FP\to QAT}\|$, averaged pointwise over $5$ random
seeds (same construction as the ResNet/DeiT/Llama river-cross plots
in the main text).  The valley along $b = 0$ is narrow and deep:
loss stays low along $\Delta_{\rm FP\to QAT}$ (the river axis) but
rises sharply with $|b|$, directly consistent with
Assumption~\ref{ass:river-geometry} on this finite-dimensional
matrix factorization instance.}
\label{fig:mf-highdim-river}
\end{figure}

\FloatBarrier

\section{Proofs in Section~\ref{sec:Quant_under_River-valley-basin}}

\subsection{Proof of Theorem~\ref{thm:QAT-recovery}}
\begin{proof}[Proof of Theorem~\ref{thm:QAT-recovery}]
For brevity, write $q_k = Q(w_k)$ and $\pi_k = P_{\cM}(w_k)$, and $d_k = \|w_k-\pi_k\|_{\mathsf H}$. We prove the theorem in two steps.

\noindent\textbf{Step 1:} We prove by induction that for every $k \le T$,%
\begin{enumerate}
    \item[(a)] $w_k \in \cT$ and $q_k \in U$,
    \item[(b)] $\{\delta w_k + (1-\delta) q_k \mid \delta \in [0,1]\} \subset U$,
    \item[(c)] $\{w_k - \delta \,\eta\nabla f(q_k) \mid \delta \in [0,1]\} = \{(1-\delta) w_k + \delta w_{k+1} \mid \delta \in[0,1]\} \subset U$.
\end{enumerate}
For the base case $k=0$, it follows from our assumption that $w_0 = w_{\rm fp} \in \cT$ and $q_0 \notin \cT$. By Assumption~\ref{ass:quantizer-compatibility}(i), $\|q_0 - w_0\| \le \rho$ and, thus, for any $\delta \in [0,1]$,
\[
    \dist(\delta w_0 + (1-\delta) q_0, \cM) 
    \le (1-\delta)\|q_0-w_0\| + \dist(w_0, \cM) 
    \le \rho + R/\sqrt{\lambda_d} 
    < R_U.
\]
So the open set $U$ contains the segment $\{\delta w_k + (1-\delta) q_k \mid \delta \in [0,1]\}$ and, in particular, $q_0 \in U\setminus\cT$. The remaining property (c) is a direct consequence of {$\eta \le \rho/G$} and $\|\nabla f(q_0)\| \le G$.

For the inductive step with $k+1 \le T$, suppose $w_k \in \cT$, $q_k \in U$, and that $U$ covers the line segment connecting $w_k$ and $q_k$. Since $k \le T-1$, we have $q_k \in U\setminus\cT$.

Below we will show that $q_k \notin \cT$ implies progress in the normal direction, i.e., $d_{k+1} \le d_k$ and thus, $w_{k+1} \in \cT$ given $d_k \le R$. It then follows from $\|q_{k+1}-w_{k+1}\| \le \rho$ that
\begin{align*}
    \dist(\delta w_{k+1} + (1-\delta) q_{k+1}, \cM)
    \le&\; (1-\delta) \|q_{k+1} - w_{k+1}\| + \dist(w_{k+1}, \cM) \\
    \le&\; \rho + R/\sqrt{\lambda_d}
    < R_U.
\end{align*}
Hence, $\{\delta w_{k+1} + (1-\delta) q_{k+1} \mid \delta \in [0,1]\} \subset U$ and $q_{k+1} \in U$. Similarly, $\|\eta \nabla f(q_{k+1})\| \le \eta G \le \rho$ implies that $w_{k+2} \in U$ and $\{w_{k+1} - \delta \,\eta\nabla f(q_{k+1}) \mid \delta \in [0,1]\} \subset U$. So properties (a)-(c) hold for $k+1 \le T$, completing the induction.

\noindent\textbf{Progress in the normal direction when $q_k \notin \cT$.}
Consider $\Phi(w)= \frac12 \|w - \pi(w)\|^2_{\mathsf H}$. Since $P_{\cM}$ is $C^1$ on $U$ and the range of $\nabla P_{\cM}(w)$ lies in $T_{\cM}(P_{\cM}(w))$ while $\mathsf H(w - P_{\cM}(w)) \in \mathsf H N_{\cM}(P_{\cM}(w)) \subset N_{\cM}(P_{\cM}(w))$, we have
\[
    \nabla \Phi(w) = {\mathsf H}(w-P_{\cM}(w)) - \nabla P_{\cM}(w)^\top[\mathsf H(w-P_{\cM}(w))]
    = \mathsf H(w-P_{\cM}(w)).
\]
It follows from the $L_P$-Lipschitz continuity of $P_{\cM}(\cdot)$ that $\nabla\Phi(\cdot)$ is $\lambda_1(1+L_P)$-Lipschitz continuous on $U$. Thus, for any $w, w^\prime \in U$ such that $\{(1-\delta)w_k + \delta w_{k+1} \mid \delta \in [0,1]\} \subset U$, it follows from the descent lemma that
\[
    \Phi(w^\prime) \le \Phi(w) + \langle \nabla\Phi(w), w^\prime - w\rangle + \frac{\lambda_1(1+L_P)}{2} \|w^\prime - w\|^2.
\]
Notice that the segment $\{(1-\delta) w_k + \delta w_{k+1} \mid \delta \in[0,1]\} \subset U$ due to
\begin{align*}
    \dist((1-\delta)w_k + \delta w_{k+1}, \cM) 
    &\le \delta \|w_{k+1} - w_k\| + \dist(w_k, \cM) \\
    &\le \delta \eta G + R/\sqrt{\lambda_d} 
    \,\le\, \rho + R/\sqrt{\lambda_d} 
    \,<\, R_U.
\end{align*}
Therefore, we can take $w=w_k$ and $w^\prime=w_{k+1}$ in the descent lemma to derive that
\begin{equation}\label{eq:taylor-Phi}
\begin{split}
    \Phi(w_{k+1}) 
    &\le \Phi(w_k) + \langle \nabla\Phi(w_k),w_{k+1} - w_k\rangle + \frac{\lambda_1(1+L_P)}{2} \|w_{k+1} - w_{k}\|^2 \\
    &\le \Phi(w_k) - \eta \langle \mathsf H(w_k - P_{\cM}(w_k)),\nabla f(q_k)\rangle + \frac{\lambda_1(1+L_P)G^2}{2} \eta^2.
\end{split}
\end{equation}
Next we bound the inner product:
\begin{align*}
    &\langle \mathsf H(w_k - P_{\cM}(w_k)), \nabla f(q_k)\rangle \\[0.06in]
    =&\; \langle \mathsf H(q_k - P_{\cM}(q_k)), \nabla f(q_k)\rangle + \left\langle \mathsf H(w_k - q_k) - \mathsf H(P_{\cM}(w_k) - P_{\cM}(q_k)), \nabla f(q_k)\right\rangle \\[0.06in]
    \ge&\; c_{\perp} \|\mathsf H(q_k - P_{\cM}(q_k))\| \\
       &\qquad - \lambda_1(\|w_k - q_k\| + \|P_{\cM}(w_k) - P_{\cM}(q_k)\|)\|\nabla f(q_k)\| \tag*{(Assumption~\ref{ass:river-geometry}(iii))} \\[0.06in]
    \ge&\; c_{\perp} \sqrt{\lambda_d} \|q_k - P_{\cM}(q_k)\|_{\mathsf H} - \lambda_1(1+L_{P}) \|w_k - q_k\| \|\nabla f(q_k)\| \tag*{($L_P$-Lipschitz continuity of $P_{\cM}$)} \\[0.06in]
    \ge&\; c_{\perp} \sqrt{\lambda_d} \|q_k - P_{\cM}(q_k)\|_{\mathsf H} - \lambda_1(1+L_{P}) \rho G \tag*{(Assumptions~\ref{ass:quantizer-compatibility}(i), (ii))} \\[0.06in]
    \ge&\; c_{\perp} \sqrt{\lambda_d} R - \lambda_1(1+L_{P}) \rho G. \tag*{($q_k \notin \cT$)}
\end{align*}
Substituting this into \eqref{eq:taylor-Phi} and using {$\eta \le \frac{c_\perp \sqrt{\lambda_d} R - \lambda_1(1+L_P)\rho G}{\lambda_1(1+L_P)G^2}$}, we obtain
\begin{align*}
    \Phi(w_{k+1}) 
    &\le \Phi(w_k) - \eta \left(c_{\perp} \sqrt{\lambda_d}R - \lambda_1(1+L_{P}) \rho G\right) + \frac{\lambda_1(1+L_P)G^2}{2} \eta^2 \\
    &\le \Phi(w_k) - \frac{1}{2} \eta \left(c_{\perp} \sqrt{\lambda_d}R - \lambda_1(1+L_{P}) \rho G\right).
\end{align*}
In particular, $d_{k+1} \le d_k$ for all $k < T$.

\noindent\textbf{Step 2: Eventually, the quantized iterate enters $\cT$.}
If $\|w_k - \pi_k\|_{\mathsf H} \leq R- \sqrt{\lambda_1}\rho(1 + L_P)$, then, by the triangle inequality,
\begin{align*}
    \|q_k - P_{\cM}(q_k)\|_{\mathsf H}
    \le&\; \|q_k - w_k\|_{\mathsf H} + \|w_k - \pi_k\|_{\mathsf H} + \|\pi_k - P_{\cM}(q_k)\|_{\mathsf H} \\
    \le&\; \sqrt{\lambda_1} \|q_k - w_k\| + \|w_k - \pi_k\|_{\mathsf H} + \sqrt{\lambda_1} \|P_{\cM}(w_k) - P_{\cM}(q_k)\| \\
    \le&\; \|w_k - \pi_k\|_{\mathsf H} + \sqrt{\lambda_1} \rho(1+L_P) 
    \,\le\,R,
\end{align*}
 $q_k \in \cT$. Therefore, while $q_k \notin \cT$, the squared distance $\Phi(w_k) = \frac{1}{2} \|w_k - \pi_k\|^2_{\mathsf H}$ decreases by at least $\frac{\eta}{2} \big(c_{\perp}\sqrt{\lambda_d}R - \lambda_1(1+L_{P}) \rho G\big)$. Thus,
\[
    T \le 
    1 + \frac{\max\left\{d_0^2 - \left(R- \sqrt{\lambda_1}\rho(1+L_P)\right)^2, 0\right\}}
    {\eta \left( c_{\perp}\sqrt{\lambda_d}R - \lambda_1(1+L_{P}) \rho G\right)}.
\]
\end{proof}

\subsection{Proofs of Corollary~\ref{cor:QAT_loss1} and Corollary~\ref{cor:QAT_loss2}}
Before the proofs, we first introduce a standard property of the nearest-point projection.
For any $\pi\in\mathcal M$, it holds that
\[
    \nabla P_{\mathcal M}(\pi)=P_{T_{\mathcal M}(\pi)}.
\]
Indeed, for any $z\in\mathbb R^d$, decompose
$z=z_T+z_N$, where $z_T\in T_{\mathcal M}(\pi)$ and
$z_N\in N_{\mathcal M}(\pi)$. For the tangent component, choose a
smooth curve $\gamma\subset\mathcal M$ with
$\gamma(0)=\pi$ and $\dot\gamma(0)=z_T$. Since
$P_{\mathcal M}$ restricts to the identity map on $\mathcal M$,
differentiating $P_{\mathcal M}(\gamma(t))=\gamma(t)$ at $t=0$
gives $\nabla P_{\mathcal M}(\pi)z_T=z_T$.
For the normal component, since $P_{\mathcal M}(\pi+t z_N)=\pi$ for all sufficiently small $t$, differentiating at $t=0$ gives $\nabla P_{\mathcal M}(\pi)z_N=0$.
Therefore $\nabla P_{\mathcal M}(\pi)z = z_T = P_{T_{\mathcal M}(\pi)}z$.

We also have  Lemma \ref{lem:projection} about projection linearization around the river.

\begin{lemma}%
\label{lem:projection}
Suppose $P_{\cM}:U\to\cM$ is $C^{1,1}$ with $\|\nabla P_{\cM}(z)-\nabla P_{\cM}(z')\|_{\rm op} \le \kappa_{\cM}\|z-z'\|$ for any $z,z'\in U$.
Let $w \in U$ and set $\pi=P_{\cM}(w)$. If $\{w+t\delta \mid t\in[0,1]\} \subset U$, then
\[
    \left\|P_{\cM}(w+\delta)-P_{\cM}(w) - P_{T_{\cM}(\pi)}\delta\right\| 
    \le \kappa_{\cM} \left(\|w -\pi\| \|\delta\| + \frac12\|\delta\|^2\right).
\]
\end{lemma}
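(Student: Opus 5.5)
The natural route is the integral form of Taylor's theorem applied to $P_{\cM}$ along the segment from $w$ to $w+\delta$. Since the segment lies in $U$ and $P_{\cM}$ is $C^1$ there, we can write
\[
    P_{\cM}(w+\delta)-P_{\cM}(w) = \int_0^1 \nabla P_{\cM}(w+t\delta)\,\delta\,dt .
\]
Subtracting $P_{T_{\cM}(\pi)}\delta$ and using the identity $\nabla P_{\cM}(\pi)=P_{T_{\cM}(\pi)}$ established just above the lemma, we get
\[
    P_{\cM}(w+\delta)-P_{\cM}(w)-P_{T_{\cM}(\pi)}\delta = \int_0^1 \bigl(\nabla P_{\cM}(w+t\delta)-\nabla P_{\cM}(\pi)\bigr)\delta\,dt .
\]

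Next we bound the integrand. The $C^{1,1}$ hypothesis gives an operator-norm bound by $\kappa_{\cM}\|w+t\delta-\pi\|$, and the triangle inequality gives $\|w+t\delta-\pi\|\le \|w-\pi\|+t\|\delta\|$. Integrating over $t\in[0,1]$ then yields
\[
    \kappa_{\cM}\Bigl(\|w-\pi\|\|\delta\|+\tfrac12\|\delta\|^2\Bigr),
\]
which is exactly the claimed bound.

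The one point needing care is that the Lipschitz estimate on $\nabla P_{\cM}$ is stated for pairs of points in $U$, so we need $\pi\in U$. This holds because $\dist(\pi,\cM)=0<R_U$. The Lipschitz bound is a global one on $U$, so the segment from $\pi$ to $w+t\delta$ need not itself lie in $U$. Only the segment from $w$ to $w+\delta$ must lie in $U$, for the fundamental theorem of calculus step, and that is assumed.

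The main potential obstacle is justifying $\nabla P_{\cM}(\pi)=P_{T_{\cM}(\pi)}$, but this is already proven in the text preceding the lemma. The rest is routine.
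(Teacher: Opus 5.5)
Your proposal is correct and follows the paper's own proof essentially step for step: the integral form of $P_{\cM}(w+\delta)-P_{\cM}(w)$, subtraction of $\nabla P_{\cM}(\pi)\delta=P_{T_{\cM}(\pi)}\delta$, the $C^{1,1}$ bound combined with $\|w+t\delta-\pi\|\le\|w-\pi\|+t\|\delta\|$, and integration over $t\in[0,1]$. Your explicit check that $\pi\in U$ (since $\dist(\pi,\cM)=0<R_U$) is a small detail the paper leaves implicit.
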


\begin{proof}[Proof of Lemma~\ref{lem:projection}]
Note that 
$
    P_{\cM}(w+\delta)-P_{\cM}(w)
    =
    \int_0^1 \nabla P_{\cM}(w+t\delta)\delta\,dt
$.
Since $\pi=P_{\cM}(w)$ and $\nabla P_{\cM}(\pi)=P_{T_{\cM}(\pi)}$, we have
\[
    P_{\cM}(w+\delta)-P_{\cM}(w) - P_{T_{\cM}(\pi)}\delta 
    = \int_0^1 \left[\nabla P_{\cM}(w+t\delta)-\nabla P_{\cM}(\pi)\right]\delta\,dt.
\]
Using the $C^{1,1}$ regularity of $P_{\cM}$ gives
\[
\begin{aligned}
    \left\|P_{\cM}(w+\delta)-P_{\cM}(w) - P_{T_{\cM}(\pi)}\delta\right\|
    &\le \int_0^1 \kappa_{\cM}\|w+t\delta-\pi\|\|\delta\|\,dt \\
    &\le \int_0^1 \kappa_{\cM} \left(\|w-\pi\|+t\|\delta\|\right)\|\delta\|\,dt \\
    &= \kappa_{\cM} \left(\|w-\pi\| \|\delta\| + \frac12\|\delta\|^2\right),
\end{aligned}
\]
completing the proof.
\end{proof}

Now we are ready to prove Corollary~\ref{cor:QAT_loss1}.
\begin{proof}[Proof of Corollary~\ref{cor:QAT_loss1}]
Write $q_k = Q(w_k)$ and $\pi_k = P_{\cM}(w_k)$, and $d_k = \|w_k-\pi_k\|_{\mathsf H}$.
By the definition of $T$, $q_k\notin\mathcal T$ for all $k<T$,
and $q_T\in\mathcal T$. Moreover, the proof of
Theorem~\ref{thm:QAT-recovery} gives $w_k\in\mathcal T$ for all
$k\le T$. Hence $d_k\le R$.

We first derive a common projected-loss estimate. Since the segment between $w_k$ and $w_{k+1}$ is contained in $U$, by Lemma~\ref{lem:projection},
\[
    \pi_{k+1}-\pi_k
    = -\eta P_{T_{\mathcal M}(\pi_k)}\nabla f(q_k) + r_k,
\]
where
\begin{align*}
    \|r_k\|
    \,\le\, \kappa_{\mathcal M}
    \left(
        \eta \|w_k-\pi_k\| \|\nabla f(q_k)\| + \frac12\eta^2\|\nabla f(q_k)\|^2
    \right) %
    \,\le\, \eta\kappa_{\mathcal M}G \left(R/\sqrt{\lambda_d} + \frac12\eta G\right).
\end{align*}
Letting $\Delta = \kappa_{\mathcal M} \left(R/\sqrt{\lambda_d} + \frac12\eta G\right)$, we have $\|r_k\| \le \eta G \Delta$. Consequently,
\begin{align*}
    \|\pi_{k+1}-\pi_k\|
    \,\le\, \eta \left\|P_{T_{\mathcal M}(\pi_k)}\nabla f(q_k)\right\| + \|r_k\| %
    \,\le\, \eta \|\nabla f(q_k)\| + \|r_k\|
    \,\le\, \eta G(\Delta + 1).
\end{align*}
By $L$-smoothness of $f$,
\[
\begin{aligned}
    f(\pi_{k+1})
    &\le f(\pi_k)
    + \langle \nabla f(\pi_k),\pi_{k+1}-\pi_k\rangle
    + \frac L2\|\pi_{k+1}-\pi_k\|^2 \\
    &= f(\pi_k)
    - \eta \left\langle \nabla f(\pi_k), P_{T_{\mathcal M}(\pi_k)}\nabla f(q_k) \right\rangle
    + \langle \nabla f(\pi_k),r_k\rangle
    + \frac L2\|\pi_{k+1}-\pi_k\|^2.
\end{aligned}
\]
Noting that
\[
    \langle \nabla f(\pi_k), P_{T_{\mathcal M}(\pi_k)}\nabla f(q_k) \rangle
    = \langle P_{T_{\mathcal M}(\pi_k)} \nabla f(\pi_k), \nabla f(q_k) \rangle
    = \langle \nabla f(\pi_k), \nabla f(q_k) \rangle,
\]
and using the bounds on $r_k$ and
$\|\pi_{k+1}-\pi_k\|$, we further obtain
\begin{equation}\label{eq:central-projected-loss}
    f(\pi_{k+1})
    \le f(\pi_k) - \eta
    \left\langle \nabla f(\pi_k), \nabla f(q_k)
    \right\rangle + \eta G\Delta\|\nabla f(\pi_k)\| + \frac{L}{2} \eta^2 G^2 (\Delta+1)^2.
\end{equation}

The additional assumption of Corollary~\ref{cor:QAT_loss1} gives us $\|\nabla f(\pi_k)\|\le \epsilon$ since $\pi_k \in \cM$. Therefore,
\[
    \left|\left\langle \nabla f(\pi_k), \nabla f(q_k) \right\rangle\right|
    \le \|\nabla f(\pi_k)\|\|\nabla f(q_k)\|
    \le \epsilon G.
\]
Plugging this bound into \eqref{eq:central-projected-loss} yields
\[
    f(\pi_{k+1})
    \le f(\pi_k)
    + \eta\epsilon G (\Delta+1)
    + \frac{L}{2} \eta^2 G^2 (\Delta+1)^2.
\]
Summing over $k=0,\dots,T-1$, we obtain
\begin{equation}\label{eq:case-a-piT}
    f(\pi_T)
    \le f(\pi_0) + \eta T G (\Delta+1) \left(\epsilon + \frac{L}{2} \eta G (\Delta+1)\right).
\end{equation}
We next compare $f(P_{\mathcal M}(q_T))$ with $f(\pi_T)$. Since the segment $\{\delta q_T + (1-\delta) w_T \mid \delta \in [0,1]\} \subset U$, by the mean value
theorem, there exists $\widetilde w_T \in \{\delta w_T+(1-\delta)q_T \mid \delta\in[0,1]\}$
such that
\[
    f(P_{\mathcal M}(q_T))-f(\pi_T)
    =
    \left\langle
    \nabla(f\circ P_{\mathcal M})(\widetilde w_T),
    q_T-w_T
    \right\rangle.
\]
Using the chain rule, $\nabla(f\circ P_{\mathcal M})(\widetilde w_T) = \nabla P_{\mathcal M}(\widetilde w_T)^\top [\nabla f(P_{\mathcal M}(\widetilde w_T))]$.
Note that
\[
\begin{aligned}
    \|\nabla(f\circ P_{\mathcal M})(\widetilde w_T)\|
    \le\;& \|\nabla P_{\mathcal M}(\widetilde w_T)-\nabla P_{\mathcal M}(\pi_T)\|_{\rm op} \|\nabla f(P_{\mathcal M}(\widetilde w_T))\| \\
    &\quad + \|\nabla P_{\cM}(\pi_T) [\nabla f(P_{\mathcal M}(\widetilde w_T))]\|.
\end{aligned}
\]
Since $\nabla P_{\mathcal M}(\pi_T)=P_{T_{\mathcal M}(\pi_T)}$, we have
\begin{align*}
    \|\nabla P_{\cM}(\pi_T) [\nabla f(P_{\mathcal M}(\widetilde w_T))]\| 
    \,=\, \|P_{T_{\mathcal M}(\pi_T)} [\nabla f(P_{\mathcal M}(\widetilde w_T))]\| %
    \,\le\, \|\nabla f(P_{\mathcal M}(\widetilde w_T))\|
    \,\le\, \epsilon.
\end{align*}
Thus,
\begin{align*}
    \|\nabla(f\circ P_{\mathcal M})(\widetilde w_T)\|
    \le\; \kappa_{\mathcal M}\|\widetilde w_T-\pi_T\|\epsilon+\epsilon %
    \,\le\, \epsilon(\kappa_{\cM} (R/\sqrt{\lambda_d}+\rho)+1).
\end{align*}
Therefore,
\begin{equation}\label{eq:case-a-from_Proj_qT_to_Proj_piT}
    f(P_{\mathcal M}(q_T))-f(\pi_T)
    \le \epsilon\rho\bigl(\kappa_{\mathcal M}(R/\sqrt{\lambda_d}+\rho)+1\bigr).
\end{equation}
Since $w_{\rm fp}, q_T\in\mathcal T$, Assumption~\ref{ass:river-geometry}(i) gives
\[
    f(q_T)\le f(P_{\mathcal M}(q_T))+\epsilon_{\rm flat}, \qquad
    f(\pi_0)\le f(w_{\rm fp})+\epsilon_{\rm flat}.
\]
Combining these two inequalities with \eqref{eq:case-a-piT} and \eqref{eq:case-a-from_Proj_qT_to_Proj_piT}, we obtain
\[
    f(q_T)
    \le f(w_{\rm fp})
    + 2\epsilon_{\rm flat}
    + \epsilon\rho\bigl(\kappa_{\mathcal M}(R/\sqrt{\lambda_d} + \rho)+1\bigr)
    + \eta T G (\Delta+1) \left(\epsilon + \frac{L}{2} \eta G (\Delta+1)\right).
\]
This completes the proof.
\end{proof}

We now proceed to Corollary~\ref{cor:QAT_loss2}. We will reuse the intermediate result \eqref{eq:central-projected-loss} in the proof of Corollary~\ref{cor:QAT_loss1}, which is independent of the additional assumptions in Corollary~\ref{cor:QAT_loss1}.

\begin{proof}[Proof of Corollary~\ref{cor:QAT_loss2}]
For $k<T$, the assumption gives $\left\langle g(P_{\cM}(q_k)), \nabla f(q_k)\right\rangle > c_\parallel$.
Nevertheless, we need to estimate $\langle \nabla f(\pi_k), \nabla f(q_k)\rangle = \|\nabla f(\pi_k)\| \langle g(\pi_k), \nabla f(q_k) \rangle$ in \eqref{eq:central-projected-loss}. So next we bound the difference. 
Taking $w=\pi \in \cM$ in the assumption yields $\|\nabla f(\pi)\| > c_\parallel$. Thus,
\[
    \min\{\|\nabla f(\pi_k)\|, \|\nabla f(P_{\cM}(q_k))\|\} > c_\parallel.
\]
Since the normalized gradient direction $g(\cdot)$ is $\kappa$-Lipschitz,
\begin{align*}
    \left\| g(\pi_k) - g(P_{\cM}(q_k))\right\|
    \,\le\; \kappa \|\pi_k - P_{\cM}(q_k)\| %
    \,\le\; \kappa L_P \|w_k-q_k\|
    \,\le\; \kappa \rho L_P.
\end{align*}
Therefore,
\begin{align*}
    \left\langle g(\pi_k), \nabla f(q_k)\right\rangle
    \,\ge\; \langle g(P_{\cM}(q_k)), \nabla f(q_k)\rangle - \|g(\pi_k) - g(P_{\cM}(q_k))\| \cdot \|\nabla f(q_k)\| %
    \,\ge\; c_\parallel - \kappa \rho L_P G.
\end{align*}
Plugging this into \eqref{eq:central-projected-loss} gives
\[
    f(\pi_{k+1})
    \le f(\pi_k) - \eta \left(c_\parallel - \kappa \rho L_P G - G\Delta\right) \|\nabla f(\pi_k)\|
    + \frac{L}{2} \eta^2 G^2 (\Delta+1)^2.
\]
Using our assumptions that $\kappa \leq \frac{c_\parallel}{4\rho L_P G}$ and $\kappa_{\cM} \le \frac{c_\parallel}{4G(R/\sqrt{\lambda_d} + \rho)}$, and the upper bound for the stepsize $\eta \le \frac{c_\parallel^2}{2LG^2(\kappa_\cM(R/\sqrt{\lambda_d} + \rho)+1)^2}$, we arrive at
\begin{align*}
    f(\pi_{k+1})
    \le\, f(\pi_k) - \frac{1}{2} \eta \,c_\parallel^2 + \frac{L}{2} \eta^2 G^2 (\Delta+1)^2 %
    \le\, f(\pi_k) - \frac{1}{4} \eta \,c_\parallel^2.
\end{align*}
Summing over $k=0, 1, \cdots, T-1$ yields
\begin{equation}\label{eq:case-b-piT}
    f(\pi_T) \le f(\pi_0) - \frac{1}{4} \eta T c_\parallel^2.
\end{equation}
We next compare $f(P_{\mathcal M}(q_T))$ with $f(\pi_T)$. Since
$\{\delta w_T + (1-\delta)q_T \mid \delta \in [0,1]\} \subset U$, applying the mean-value theorem to $f\circ P_{\mathcal M}$ gives
\[
    f(P_{\mathcal M}(q_T)) - f(\pi_T) \le \sup_{\delta\in[0,1]} \big\|\nabla (f \circ P_{\cM})\big(\delta w_T + (1-\delta)q_T\big)\big\| \cdot \|q_T - w_T\|.
\]
By the chain rule, $\nabla (f \circ P_{\cM})(w) = \nabla P_{\mathcal M}(w)^\top \nabla f(P_{\mathcal M}(w))$.
Since $P_{\mathcal M}$ is $L_P$-Lipschitz on $U$ and
$\|\nabla f(w)\|\le G$ for all $w \in U$, we have $\|\nabla (f \circ P_{\cM})(w)\|\le L_P G$.
Therefore,
\begin{equation}\label{eq:case-b-from_Proj_qT_to_Proj_piT}
    f(P_{\mathcal M}(q_T)) - f(\pi_T) \le L_P G \|q_T - w_T\|
    \le L_P G\rho.
\end{equation}
Combining \eqref{eq:case-b-piT}, \eqref{eq:case-b-from_Proj_qT_to_Proj_piT}, and the facts that $f(q_T)\le f(P_{\mathcal M}(q_T))+\epsilon_{\rm flat}$ and $f(\pi_0)\le f(w_{\rm fp})+\epsilon_{\rm flat}$ by Assumption~\ref{ass:river-geometry}(i), we conclude that
\[
    f(q_T)
    \le f(w_{\rm fp})
    + 2\epsilon_{\rm flat}
    + G L_P \rho
    - \frac{1}{4} \eta T\, c_\parallel^2.
\]
This completes the proof.
\end{proof}

\end{document}